


\documentclass{article}
\pdfpagewidth=8.5in
\pdfpageheight=11in

\usepackage{ijcai23}

\usepackage{times}
\usepackage{soul}
\usepackage{url}
\usepackage[hidelinks]{hyperref}
\usepackage[utf8]{inputenc}
\usepackage[small]{caption}
\usepackage{graphicx}
\usepackage{amsmath}
\usepackage{amsthm}
\usepackage{booktabs}
\usepackage{algorithm}
\usepackage{algorithmic}
\usepackage[switch]{lineno}

\urlstyle{same}

\usepackage{tikz}
\usetikzlibrary{trees}
\usetikzlibrary{shapes,arrows}
\usetikzlibrary{intersections,positioning,calc}
\usetikzlibrary{decorations.pathreplacing,calc,decorations.pathmorphing}
\usepackage{tikz-qtree}

\usepackage{stmaryrd} 
\usepackage{xspace}
\usepackage{thm-restate}
\usepackage{cancel}
\usepackage[disable]{todonotes}
\usepackage{tcolorbox}
\usepackage{csquotes} 
\usepackage{multirow}
\usepackage{bm}
\usepackage{enumitem}
\usepackage{mathtools}
\usepackage[usestackEOL]{stackengine}
\usepackage{colortbl}


\newtheorem{example}{Example}
\newtheorem{theorem}{Theorem}

\newtheorem{definition}{Definition}
\newtheorem{corollary}{Corollary}
\newtheorem{lemma}{Lemma}

\newtheorem{remark}{Remark}




\pdfinfo{
/TemplateVersion (IJCAI.2023.0)
}

\title{Efficient Computation of General Modules for \ALC Ontologies\\
(Extended Version)
}



\author{
Hui Yang$^1$
\and
Patrick Koopmann$^2$\and
Yue Ma$^{1}$\And
Nicole Bidoit$^1$
\affiliations
$^1$LISN, CNRS, Université Paris-Saclay\\
$^2$
Vrije Universiteit Amsterdam, The Netherlands
\emails
yang@lisn.fr,
p.k.koopmann@vu.nl,
ma@lisn.fr, 
nicole.bidoit@lisn.fr
}

\hyphenation{Eli-mi-na-tions-pro-blem}

\begin{document}

\newcommand{\ra}{\rightarrow}
\newcommand{\ourMod}{general module}
\newcommand{\OurMod}{General module}
\newcommand{\sig}{\textit{sig}\xspace}
\newcommand{\sigC}{\textit{sig}_\mathsf{C}\xspace}
\newcommand{\sigR}{\textit{sig}_\mathsf{R}\xspace}
\newcommand{\dep}{\textit{dep}\xspace}

\newcommand{\patrick}[2][inline]{\tttodo{#1}{Patrick}{orange}{#2}}
\newcommand{\hui}[2][inline]{\tttodo{#1}{Hui}{green}{#2}}
\newcommand{\nicole}[2][inline]{\tttodo{#1}{Nicole}{pink}{#2}}
\newcommand{\yue}[2][inline]{\tttodo{#1}{Yue}{yellow}{#2}}
\newcommand{\tttodo}[4]{\ifthenelse{\equal{#1}{inline}}{\todo[inline,
author=#2, color =
#3]{#4}}{\todo[color=#3]{#2: #4}}}

\newcommand{\hideThisPart}[1]{}

\newcommand{\yang}[1]{{\color{blue} #1}}
\newcommand{\NF}[1]{{\color{purple} #1}}
\newcommand{\red}[1]{{\color{red} #1}}
\newcommand{\Omc}{\ensuremath{\mathcal{O}}\xspace}
\newcommand{\Imc}{\ensuremath{\mathcal{I}}\xspace}
\newcommand{\Jmc}{\ensuremath{\mathcal{J}}\xspace}
\newcommand{\Mmc}{\ensuremath{\mathcal{M}}\xspace}
\newcommand{\starM}{$\top\!\bot^\ast$-module}
\newcommand{\ALC}{\ensuremath{\mathcal{ALC}}\xspace}
\newcommand{\ALCH}{\ensuremath{\mathcal{ALCH}}\xspace}
\newcommand{\EL}{\ensuremath{\mathcal{EL}}\xspace}
\newcommand{\ELH}{\ensuremath{\mathcal{ELH}}\xspace}

\newcommand{\NC}{\ensuremath{\mathsf{N_C}}\xspace}
\newcommand{\NR}{\ensuremath{\mathsf{N_R}}\xspace}
\newcommand{\ND}{\ensuremath{\mathsf{N_D}}\xspace}

\newcommand{\quant}{{\mathsf{Q}}}

\newcommand{\ExpTime}{\ensuremath{\textsc{ExpTime}}\xspace}
\newcommand{\TwoExpTime}{\ensuremath{\textsc{2ExpTime}}\xspace}

\newcommand{\Fame}{\ensuremath{\textsc{Fame}}\xspace}
\newcommand{\Lethe}{\ensuremath{\textsc{Lethe}}\xspace}

\newcommand{\GM}{\texttt{gm}\xspace}
\newcommand{\DM}{\texttt{dm}\xspace}
\newcommand{\GMLETHE}{\texttt{gmLethe}\xspace}
\newcommand{\minM}{\texttt{minM}\xspace}

\newcommand{\conE}{\texttt{conE}}
\newcommand{\rolE}{\texttt{rolE}}

\newcommand{\cl}{\textit{cl}}
\newcommand{\gm}{\textit{gm}}
\newcommand{\Dm}{\textit{dm}}
\newcommand{\Res}{\textit{Res}}
\newcommand{\clSigmaPart}{\ensuremath{\cl_\Sigma(\Omc)}\xspace}
\newcommand{\deSigmaPart}{\ensuremath{\mathcal{D}_{\Sigma}(\mathcal{O})}\xspace}

\newcommand{\sigmaForm}{role isolated\xspace}
\newcommand{\SigmaForm}{Role Isolated\xspace}

\newcommand{\unbounded}{\ensuremath{{Out}}\xspace}
\newcommand{\OsplitForm}{\ensuremath{\textit{RI}_\Sigma(\mathcal{O})}\xspace}

\newcommand{\longont}{\ensuremath{\lVert\Omc\rVert}}
\newcommand{\longcl}{\ensuremath{\lVert\cl(\Omc)\rVert}}

\newcommand{\ourM}{\textsc{GeMo}\xspace}

\maketitle

\begin{abstract}
 We present a method for extracting general modules for ontologies formulated in
the description logic \ALC. A module for an ontology is an ideally substantially
smaller ontology that preserves all entailments for a user-specified
set of terms.
As such, it has applications such as ontology reuse and ontology analysis.
Different from classical modules, general modules may use axioms not
explicitly present
in the input ontology, which allows for additional conciseness.
So far, general modules have only been investigated for lightweight
description logics.
We present the first work that considers the more expressive
description logic \ALC. In
particular, our contribution is a new method based on uniform
interpolation supported
by some new theoretical results.
Our evaluation indicates that our general modules are often smaller
than classical modules and
uniform interpolants computed by the state-of-the-art, and compared
with uniform interpolants,
can be computed in a significantly shorter time.
Moreover, our method can be used for, and in fact improves, the computation
of uniform interpolants and classical modules.
\end{abstract}

%
%
%

\section{Introduction}\label{sec:intro}

Ontologies are used to formalize terminological knowledge
in many domains such as biology, medicine and the Semantic Web.
Usually, they contain a set of statements (axioms) about concept and role
names (unary and binary predicates).
Using a formalization based on description logics (DLs) allows DL reasoners to
infer implicit information from an ontology.
Modern ontologies are often large
and complex, which can make ontology engineering challenging. For example, as
of 3~January 2023, the medical ontology SNOMED CT~\cite{SNOMED}, used in the
health-case systems of many countries, formalizes over 360,000 concepts, and the
BioPortal repository of ontologies from the bio-medical domain~\cite{BIOPORTAL}
currently hosts 1,043 ontologies that use over 14 million concepts. Often, one
is not interested in the entire content of an ontology, but only in a fragment,
for instance if one wants to reuse content from a larger ontology for a more
specialized application, or to analyse what the ontology states about a given
set of names. In particular, this set of names would form a \emph{signature} $\Sigma$,
%
got which we want to compute an
ontology $\Mmc$ that captures all the logical entailments of the original
ontology $\Omc$ expressed using only the names in 
$\Sigma$. Our aim is to compute such an \Mmc that 
is as simple as possible, in terms of number and size of axioms.
%
This problem has received a lot of
attention in the past years, 
 and for an $\Mmc$ satisfying those requirements, the common approaches are
 \emph{modules} and \emph{uniform interpolants}.

\begin{figure*}[!h]
\centering
\small
\scalebox{0.93}{
\begin{tikzpicture}
[
    block/.style ={rectangle,
    draw=black, thick, fill=white!20, text width=4.8em,align=center, rounded corners, minimum height=1.5em
    },sibling distance=6.5mm,level distance=25mm
]
\node[block,rectangle,text width=5em,rounded corners=0]{Ontology $\mathcal{O}$ Signature $\Sigma$}
 [edge from parent,grow=right, ->]
   child {node[block] {Ontology Normalization} 
    child {node[block] {Role Isolation \\$\textit{RI}_\Sigma$}
     child {node(c1)[block,minimum width=2em] {Role Forgetting $\rolE_\Sigma$
     } 
        child {node[block] {Concept Forgetting}
            child {node[block,text width=6.2em] {Definer Substitution/ Forgetting}
                edge from parent [level distance=35mm]
                child{node[block,text width=8.2em,rounded corners=0] {Uniform  Interpolant}}
                child{node[block,text width=8.2em,rounded corners=0] {Deductive Module}}
                child{node[block,text width=8.2em,rounded corners=0] {General Module}}
                }
            }
         }
     }
 };
    \end{tikzpicture}
    }
    \caption{Overview of our unified method for computing general modules, deductive modules, and uniform interpolants}
   \label{schema:ALC}
\end{figure*}
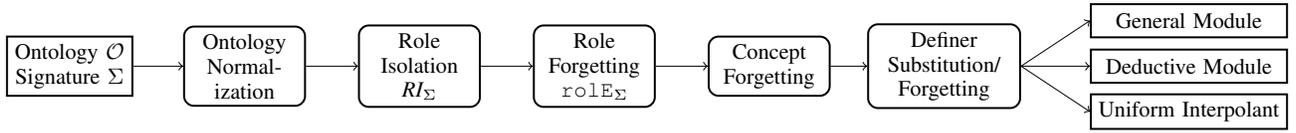

\emph{Modules} are \emph{syntactical subsets} of the ontology $\Omc$
that preserve entailments within a given signature.
There is a variety of notions of modules and properties they can
satisfy  
that have been investigated in the literature~
\cite{grau2008modular,Konev-MODULES}.
\emph{Semantic modules} 
preserve all models of the ontology modulo the given signature $\Sigma$.
This makes them undecidable already for light-weight DLs such as
\EL~\cite{MODULE_UNDECIDABLE}, which is why existing methods often compute
approximations of minimal modules~\cite{GatensMODULES,RomeroModules}. A popular
example are \emph{locality-based modules}, 
which
can be computed in a very short time~\cite{grau2008modular}.
However, 
locality-based modules can be comparatively large, even
if the provided signature is small~\cite{ChenLMW014}. 
In contrast to semantic modules, \emph{deductive modules} are decidable, and focus only on entailments in $\Sigma$
that can be expressed in the DL under consideration.
Practical methods to compute them are presented in~\cite{DBLP:conf/ijcai/KoopmannC20,hui}. However, while
those modules is often half the size of the locality-based modules, for the more expressive DL \ALC, those
methods are time-consuming, and can also not always guarantee minimality.
An approximation of modules are \emph{excerpts}, whose size is bounded by the
user, but which may not preserve
%
all entailments in the given signature~\cite{zoomingin}.
%
%

Since modules are always subsets of the original ontology, they may
use names outside of the given signature.
\emph{Uniform interpolants} (UIs), on the other hand, \emph{only use names from the
provided signature $\Sigma$}, and are thus usually not syntactical subsets of the input ontology.
This makes them useful also for applications other than ontology reuse, such as
for logical difference~\cite{FORGETTING_LOGICAL_DIFFERENCE},
abduction~\cite{FORGETTING_ABDUCTION}, information
hiding~\cite{FORGETTING_PRIVACY}, and proof generation~\cite{PROOFS_FORGETTING}.
The strict
requirement on the signature means that UIs may not always
exist, and, in case of 
\ALC, can be of a size that is triple
exponential in the size of the input~\cite{DBLP:conf/ijcai/LutzW11}.
%
%
Despite this high complexity, practical
implementations for computing UIs
exist~\cite{zhao2018fame,koopmann2020lethe}. 
However, their computation times are much higher than
for module extraction and can produce very complex axioms.

Both modules and UIs can be more complex than necessary.
By dropping the syntactical requirements of those notions---being subsets of $\Omc$
and being within $\Sigma$ respectively---we may obtain ontologies that are both smaller and 
simpler, and yet still preserve all relevant entailments, which would make them better suited 
for ontology reuse and analysis. 
In this paper, we present a method to compute such \emph{general modules}, 
which are indeed often smaller and nicer than the corresponding classical modules and UIs.
Our method can indeed also compute deductive modules and UIs, and does so in significantly shorter time than the state-of-the-art.
%
While general modules have been
investigated for the lightweight DLs \EL and
\ELH~\cite{DBLP:conf/semweb/NikitinaG12,SUB_ONTOLOGIES},
to our knowledge,
this is the first time they are investigated for 
\ALC.

The main steps of our approach are shown in Figure~\ref{schema:ALC}.
Our method essentially works by performing uniform interpolation on a normalized version of the input ontology (Section~\ref{sec:normalize}). During normalization, so-called \emph{definer names} are introduced, which are eliminated in the final step.
This idea is inspired by the method for
uniform interpolation in~\cite{koopmann2020lethe}. However, different
from this approach, we put fewer constraints on the normal form and do not
allow the introduction of definers after normalization, 
which
changes the mechanism of uniform interpolation. As a result,
our definer elimination step may reintroduce names eliminated during uniform
interpolation, which is not a problem for the computation of general modules. In
contrast, eliminating definers as done in~\cite{koopmann2020lethe} can cause an
exponential
blowup, and introduce concepts with the non-standard \emph{greatest fixpoint}
constructor~\cite{calvanese2003expressive}.

A particular challenge in uniform interpolation is the elimination of role
names, for which existing approaches either rely on expensive calls
to an external reasoner~\cite{DBLP:conf/aaai/ZhaoASFSJK19,koopmann2020lethe} or
avoid the problem partially by introducing universal
roles~\cite{DBLP:conf/ijcai/ZhaoS17,%
DBLP:conf/ijcai/KoopmannC20}, leading to
results outside \ALC. A major contribution of this paper is a technique 
called~\emph{role isolation}, which allows to eliminate roles more efficiently,
and explains our short computation times
(Section~\ref{sec:RoleFGM}).

Our evaluation
shows that all our methods, including the one for uniform interpolation, can
compete with the run times of locality-based module extraction, while at the
same time resulting in subtantially smaller ontologies (Section~\ref{sec:eval}).

Our main contributions are: 1) the first method dedicated to general modules in
\ALC, 
2) a formal analysis of some
properties of the general modules we compute, 3) new methods for
module extraction and uniform interpolation that  significantly improve the
state-of-the-art, 4) an evaluation on real-world ontologies indicating
the efficiency of our technique. 
Detailed proofs of the results can be found in the appendix.

\begin{table*}[htp]
    \centering
   \small
    \setlength{\tabcolsep}{0.5pt}
    \begin{tabular}{lccccc}
    \toprule
       $\mathcal{O}$: &  $A_1\sqsubseteq \exists r.\exists s. B_1  \sqcup
\exists r. B_2 $ & $ B_1\sqcap B_3\sqsubseteq \perp$ & $       A_2\sqsubseteq
A_3\sqcup \forall s. B_3$ & $
         B_4\sqsubseteq A_4$ & $
         B_2\sqsubseteq B_4$\\
 
         \rowcolor{gray!10}$cl(\mathcal{O})$: &
         $\neg A_1\sqcup \exists r. D_1 \sqcup \exists r. D_3$ &
         $ \neg D_1\sqcup \exists s. D_2$ &
         $ \neg D_2\sqcup B_1$ &
         $ \neg D_3\sqcup B_2$ &
         $ \neg A_2\sqcup A_3\sqcup \forall s. D_4$ \\ 
          \rowcolor{gray!10}&
         $\neg D_4\sqcup B_3$ &
         $  \neg B_1\sqcup \neg B_3$ & $ \neg B_2\sqcup B_4$ & $
          \neg B_4\sqcup A_4$&\\
 

          $\OsplitForm$: & $\neg A_1\sqcup \exists r. D_1 \sqcup \exists r. D_3$
&
          $ \neg D_1\sqcup \exists s. D_2$ &
          $
         \neg D_3\sqcup B_2$ &
          $ A_2\sqcup A_3\sqcup \forall s. D_4$ &
          $
         \neg B_1\sqcup \neg B_3$ \\

         & $\neg B_2\sqcup B_4$
         & $  \neg B_4\sqcup A_4$ &
          $  \neg D_2\sqcup \neg D_4$\\
    
          \rowcolor{gray!10}$\rolE_\Sigma(\OsplitForm)$: & $\neg A_1\sqcup \exists r. D_1
\sqcup \exists r. D_3$ &
          $  \neg D_3\sqcup B_2$ &
          $
       \neg B_1\sqcup \neg B_3$ &
          $ \neg B_2\sqcup B_4$ &
          $ \neg B_4\sqcup A_4$ \\
        \rowcolor{gray!10}& $\neg D_2\sqcup \neg D_4$ &
          $ \neg D_1\sqcup \neg A_2\sqcup A_3$&&&\\
    
       $\conE_{\Sigma}(\rolE_\Sigma(\OsplitForm)):$ & $\neg
A_1\sqcup \exists r. D_1 \sqcup \exists r. D_3$ &
          $ \neg D_1\sqcup \neg A_2\sqcup A_3$ &
          $ \neg D_3\sqcup A_4$\\
    
       \rowcolor{gray!10} $\gm_\Sigma(\mathcal{O})$: & $A_1\sqsubseteq\exists r.\exists
s.B_1\sqcup\exists r.B_2$ &
          $ A_2\sqcap\exists s.B_1\sqsubseteq A_3$ &
          $ B_2\sqsubseteq A_4$&&\\
   
       $\gm^*_\Sigma(\mathcal{O})$: & $A_1\sqsubseteq\exists r.(\neg A_2\sqcup
A_3)\sqcup\exists r.A_4$ &\\
     \bottomrule
    \end{tabular}
      \caption{Ontologies generated throughout the running example.}
    \label{tab:all_data}
\end{table*}

\section{Preliminaries}\label{sec:prelimGM}

We recall the DL \ALC~\cite{DESCRIPTION_LOGICS}.
Let $\NC$=$\{A,B,\ldots\}$ and $\NR = \{r, s,  \ldots\}$
be pair-wise disjoint, countably infinite sets of
\emph{concept}
and \emph{role names}, respectively.
A \emph{signature} $\Sigma\subseteq \NC\cup \NR$ is a set
of concept and role names.
\emph{Concepts} $C$
are built according to the following
grammar rules.
\begin{align}
C &::= \top ~| ~A~ | ~\neg C ~|~ C \sqcap C~ |~ C \sqcup C~ | ~\exists r. C
~|~ \forall r. C 
\end{align}
For simplicity, we identify concepts of the form $\neg\neg C$ with $C$.
In this paper, an \emph{ontology}~$\Omc$ is a finite set of
\emph{axioms} of the form $C\sqsubseteq D$, $C$ and $D$ being concepts.
We denote by $\sig(\mathcal{O})$/$\sig(C)$ the
set
of concept and role names occurring in $\mathcal{O}$/$C$, and
we use $\sigC(*)$/$\sigR(*)$ to refer to the concept/role names in $\sig(*)$.
For a signature $\Sigma$, a \emph{$\Sigma$-axiom} is an axiom $\alpha$ s.t.
$\sig(\alpha)\subseteq\Sigma$.

An \emph{interpretation} $\mathcal{I} {=} (\Delta^\mathcal{I}, \
\cdot^\mathcal{I})$ 
consists of a non-empty set~$\Delta^\mathcal{I}$ and a function~$\cdot^\Imc$
mapping each
$A\in\textsf{N}_C$ to $A^\mathcal{I}\subseteq \Delta^\mathcal{I}$ and each $r
\in \textsf{N}_R$ to
$r^\mathcal{I}\subseteq\Delta^\mathcal{I}{\times}\Delta^\mathcal{I}$. 
The interpretation function $\cdot^\Imc$ is extended to concepts as follows:
\begin{gather*}
 \top^\mathcal{I} =\Delta^\mathcal{I},\quad
 (\neg C)^\mathcal{I} = \Delta^\mathcal{I}\setminus  C^\mathcal{I},\\
 (C\sqcap D)^\mathcal{I} = C^\mathcal{I}\cap D^\mathcal{I},\quad
 (C\sqcup D)^\mathcal{I} = C^\mathcal{I}\cup D^\mathcal{I},\\
 (\exists r.C)^\mathcal{I} = \left\{a\in \Delta^\mathcal{I} \mid \exists b\in
C^\mathcal{I}: (a,b)\in r^\mathcal{I}\right\},\\
 (\forall r. C)^\mathcal{I} = \left\{a\in \Delta^\mathcal{I} \mid \forall b: (a,
b)\in r^\mathcal{I}\ra b\in C^\mathcal{I}\right\}.
\end{gather*}
An axiom $C\sqsubseteq D$ is \emph{satisfied} by an interpretation $\Imc$
($\Imc\models \text{$C\sqsubseteq D$}$) if $C^\Imc\subseteq D^\Imc$.
$\mathcal{I}$ is a \emph{model} of an ontology $\mathcal{O}$ ($\Imc\models\Omc$)
if $\Imc$ satisfies every axiom in $\Omc$. $\Omc$
\emph{entails} an axiom
$\alpha$ if $\Imc\models\alpha$ for every model $\Imc$ of $\Omc$. If $\alpha$
holds in every interpretation, we write $\models\alpha$ and call $\alpha$
an \emph{tautology}.
\newcommand{\length}[1]{\lvert #1\rvert}

The \emph{length $\length{*}$ of concepts and axioms} is defined inductively by
$\length{\top}=\length{A} = 1$, where $A\in\NC$,
$\length{C\sqcup D} = \length{C\sqcap D}=$ \mbox{$\length{C\sqsubseteq
D}$} $=\length{C}+\length{D}$,
$\length{\forall r.C} = \length{\exists r. C} =$  \mbox{$\length{C}+1$},
and
$\length{\neg C} = \length{C}$. Then, the \emph{length of an ontology},
denoted $\longont$,  is defined by
$\longont {=} \sum_{\alpha\in\mathcal{O}} \length{\alpha}$.
A central notion for us is (deductive) inseparability~\cite{Konev-MODULES,DBLP:conf/ijcai/KoopmannC20}. Given
two ontologies $\Omc_1$ and $\Omc_2$ and a signature $\Sigma$, $\Omc_1$ and
$\Omc_2$ are \emph{$\Sigma$-inseparable}, in symbols
$\Omc_1\equiv_\Sigma\Omc_2$, if for every $\Sigma$-axiom $\alpha$,
$\Omc_1\models\alpha$ iff $\Omc_2\models\alpha$.
In this paper, we are concerned with the computation of~\ourMod s, defined in
the following.
\begin{definition}[\OurMod]
Given an ontology $\mathcal{O}$ and a signature $\Sigma$, an
ontology $\mathcal{M}$ is a \emph{\ourMod}\  for $\mathcal{O}$ and
$\Sigma$ iff  (i)~$\mathcal{O}\equiv_\Sigma\mathcal{M}$ and
(ii)~$\mathcal{O}\models \mathcal{M}$.
\end{definition}
Every ontology is always a general module of itself, but we are interested in computing ones that are small in length and low in 
complexity.
%
Two extreme cases of \ourMod s are uniform interpolants and deductive modules.
\begin{definition}[Uniform interpolant \& deductive module]
Let $\mathcal{O}$ be an ontology, $\Sigma$ a signature, and $\mathcal{M}$
a \ourMod\  for $\mathcal{O}$ and $\Sigma$. 
Then, (i) $\Mmc$ is a \emph{uniform interpolant} for  $\mathcal{O}$ and $\Sigma$ if
$\sig(\mathcal{M})\subseteq \Sigma$, and (ii) $\Mmc$ is a \emph{deductive module} for  $\mathcal{O}$ and $\Sigma$ if
$\mathcal{M}\subseteq \mathcal{O}$.
\end{definition}

\section{Ontology Normalization}\label{sec:normalize}
Our method performs forgetting on a normalized view of the ontology, which is
obtained via the introduction of fresh names 
as in~\cite{DBLP:phd/ethos/Koopmann15}. 
An ontology
$\mathcal{O}$ is in \emph{normal form} if every axiom 
is of the following form:
$$\top \sqsubseteq L_1\sqcup\ldots\sqcup L_n \qquad L_i::= A\mid \neg A\mid
\quant r. A,$$
where $A\in \textsf{N}_C$, and $\quant\in\{\forall, \exists\}$. 
We call the disjuncts $L_i$ \emph{literals}.
For simplicity, we omit the \enquote{$\top \sqsubseteq$} on
the left-hand side of normalized axioms, which are  regarded as
\emph{sets}, in order to avoid  dealing with duplicated literals and  order.
As an example, the  axiom $A_2\sqsubseteq A_3\sqcup \forall s. B_3$ is equivalent to  $\neg A_2\sqcup A_3\sqcup \forall s. B_3$ in normal form.


We assume a function $\cl$ that normalizes $\Omc$ usind standard transformations  
(see for example~\cite{DBLP:phd/ethos/Koopmann15}).  
In particular, $\cl$ replaces concepts~$C$ occurring under role restrictions $\quant r.C$ by fresh names $D$ taken from a
set $\ND\subseteq\NC\setminus\sigC(\Omc)$ of \emph{definers}. We use $D$, $D'$, $D_1$, $D_2$, $\ldots$ to denote definers. 
%
For each introduced definer $D$, we remember the concept $C_D$ that was replaced by it. 
We assume that distinct  occurrences of the same concept are replaced by distinct definers.
Thus, in the resulting normalization of $\Omc$ denoted $\cl(\Omc)$,
every literal $\quant r.D$ satisfies $D\in\ND$, and 
for every $D\in\ND$, $\cl(\Omc)$ contains at most one literal of the form
$\quant r.D$. 
Every definer $D$ has to occur only in literals of the form
$\neg D$ or $\quant r.D$, that is, positive literals of the form $D$ are not allowed.
Obviously, we require $\cl(\Omc)\equiv_{\sig(\Omc)}\Omc$. 



\begin{example}
\label{first_example}
Let $\mathcal{O}$ be the ontology defined in the first row of
Table~\ref{tab:all_data}. By normalizing $\Omc$, we obtain the set
$\cl(\mathcal{O})$ shown in the second row of Table~\ref{tab:all_data}. The
definers $D_1$, $D_2$ and $D_3$ in  $\cl(\mathcal{O})$ replace the concepts
$C_{D_1}=\exists s. B_1$, $C_{D_2}= B_1$ and $C_{D_3}=B_2$, respectively.

\end{example}
For a fixed normalization, we define a partial order $\preceq_d$ over all
introduced definers, which is
defined as the smallest reflexive-transitive relation over
$\ND$ s.t.
\begin{itemize}
    \item $D'\preceq_d D$ if 
    $\neg D\sqcup C\in
\cl(\Omc)$ and $D'\in\sig(C)$.
\end{itemize}
Intuitively, $D'\preceq_d D$ whenever $C_{D'}$ is contained in $C_{D}$.
In Example \ref{first_example}, we have $D_2\preceq_d D_1$, since $\neg
D_1\sqcup \exists s. D_2\in \cl(\Omc)$.
Our normalization ensures that $\preceq_d$ is acyclic.

 In the following, we 
 assume that the ontology $\mathcal{O}$
and the signature $\Sigma$ 
do not contain
definers, unless stated otherwise.

\section{Role Forgetting}\label{sec:RoleFGM}
An ontology $\Mmc$ is called a \emph{role forgetting} for $\mathcal{O}$ and
$\Sigma$ iff $\Mmc$ is a uniform interpolant for $\Omc$ and
$\Sigma' = \Sigma\cup\sigC(\Omc)$.
Existing methods to compute role forgetting either rely on an external reasoner
\cite{DBLP:conf/aaai/ZhaoASFSJK19,koopmann2020lethe} or 
use the \emph{universal role}~$\nabla$
\cite{DBLP:conf/ijcai/ZhaoS17,%
DBLP:conf/ijcai/KoopmannC20}.
The former approach can be expensive, while the latter produces axioms
outside of \ALC. 
The normalization allows us to implement a more
efficient solution within \ALC, which relies on an integrated reasoning
procedure and an additional transformation step that produces so-called
\emph{role isolated ontologies}.

\newcommand{\Rol}{\textit{Rol}}

\subsection{Role Isolated Ontologies}\label{sec:sigmaForm}
The main idea is to separate names $A\in\NC$ that occur with roles outside of
the signature, using the following notations.
\begin{align*}
 \Rol(A, \Omc) &= \{r\in\sig(\Omc)\mid \quant r. A \text{ appears in }\Omc,\
\quant\in\{\forall, \exists\}\} \\
\unbounded_{\Sigma}(\Omc)&=\{A\in\sig(\Omc)\mid Rol(A, \Omc) \not\subseteq
\Sigma\}
\end{align*}



\begin{definition}[Role-isolated ontology]\label{defi:Sigma-norm}
An ontology $\mathcal{O}$ is \emph{\sigmaForm for $\Sigma$} if
(i) $\mathcal{O}$ is in normal form, and
(ii) every axiom $\alpha \in \mathcal{O}$ is of one of the following forms:
\begin{enumerate}[label=(c\arabic*)]
 \item  $L_1\sqcup\ldots\sqcup L_n,\ L_i:=  \neg A$ with $A\in
\unbounded_{\Sigma}(\Omc)$ for all $i$;
 \item $L_1\sqcup\ldots\sqcup L_m,\ L_i:= \quant r.A\mid B \mid \neg B $ with
$r, A\in \sig(\Omc)$, $B\not\in \unbounded_{\Sigma}(\Omc)$ for all $i$.
\end{enumerate}
\end{definition}

\noindent

Thus, an axiom in a \emph{\sigmaForm} ontology falls into two disjoint
categories: either
(c1)~it contains literals built only over concepts in
$\unbounded_{\Sigma}(\Omc)$ or (c2)~it
contains role restrictions or literals built over concepts outside
$\unbounded_{\Sigma}(\Omc)$.


\begin{example}[Example \ref{first_example} cont'd]\label{exp:notSigmaForm} 
For $\Sigma=$ $\{r$, $A_1$, $A_2$, $A_3$, $A_4\}$, 
we have
$ \unbounded_{\Sigma}(\cl(\Omc)) = \{D_2, D_4\}$.
$\cl(\mathcal{O})$ is not \sigmaForm for $\Sigma$ because of $\neg D_2\sqcup
B_1$.
%
\end{example}



Given an ontology, we compute its \sigmaForm form using the following
definition.
\begin{definition}\label{def:SigmaForm}
The \emph{\sigmaForm form} $\OsplitForm$ of $\Omc$ is defined as
$\OsplitForm := \clSigmaPart\cup \mathcal{D}_{\Sigma}(\mathcal{\Omc})$,
where 
\begin{itemize}
    \item $\clSigmaPart\subseteq\cl(\Omc)$ contains all $\alpha\in
\cl(\mathcal{O})$ s.t. if $\neg D$  is a literal of $\alpha$, then
$\Rol(D', \cl(\Omc))\subseteq \Sigma$ for all $D'\in \ND$ s.t. $D\preceq_d
D'.$
\item $\deSigmaPart$ is the set of axioms $\neg D_1\sqcup \ldots\sqcup \neg D_n$
s.t.
\textbf{(i)}~$\clSigmaPart$ contains axioms of the form
$C_1\sqcup \quant_1 r.D_1$,
$C_2\sqcup \forall r. D_2$, $\ldots$,
$C_n\sqcup \forall r. D_n$, where
$r\in \NR\setminus \Sigma$, $\quant_1\in\{\forall, \exists\}$,
and \textbf{(ii)}
 $\{D_1. \ldots, D_n\}$ is a minimal set of definers s.t.
$\cl(\mathcal{O})\models D_1\sqcap \ldots\sqcap D_n\sqsubseteq \perp$.
\end{itemize}
\end{definition}
\noindent
 Intuitively, 
  if  a definer $D$ appears in $\clSigmaPart$,
  then it should not depend on definers in
$\unbounded_\Sigma(\Omc)$.
 
 \begin{example}[Example \ref{exp:notSigmaForm} cont'd]\label{exp:normalize}
 We have:
\begin{itemize}
 \item $\clSigmaPart = \cl(\Omc)\setminus\{\neg D_2\sqcup B_1, \neg
    D_4\sqcup B_3\}$ because $\Rol(D_2, \cl(\Omc))=\Rol(D_4, \cl(\Omc)) =
\{s\}\not\subseteq \Sigma$ and, 
 \item $\deSigmaPart = \{\neg D_2\sqcup \neg D_4\}.$
\end{itemize}

\end{example} 
 


\begin{restatable}{theorem}{theoSigmanormalized}\label{theo:Sigma_normalized}
$\OsplitForm$ is \sigmaForm for $\Sigma$ and we have $\mathcal{O}\equiv_{\Sigma\cup \sigC(\mathcal{O})} \OsplitForm$.
\end{restatable}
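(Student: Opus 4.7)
The statement naturally splits into a structural claim (role-isolation) and a semantic claim (inseparability), and I would prove them separately. For the structural part, the plan is to inspect each axiom of $\OsplitForm$ and place it in category (c1) or (c2) of Definition~\ref{defi:Sigma-norm}. The critical preliminary observation is that in the normal form produced by $\cl$, only definers may occur as targets of role restrictions, so every non-definer concept name $A$ has $\Rol(A,\cl(\Omc))=\emptyset\subseteq\Sigma$ and is absent from $\unbounded_\Sigma(\OsplitForm)$. For an axiom $\alpha\in\clSigmaPart$, any negative definer literal $\neg D$ in $\alpha$ must satisfy $\Rol(D,\cl(\Omc))\subseteq\Sigma$, by instantiating the defining condition of $\clSigmaPart$ at $D'=D$ via reflexivity of $\preceq_d$; since $\OsplitForm$ adds no new role restrictions, $D\notin\unbounded_\Sigma(\OsplitForm)$, and $\alpha$ fits (c2). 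For an axiom $\neg D_1\sqcup\ldots\sqcup\neg D_n\in\deSigmaPart$, condition~(i) forces each $D_i$ to occur in $\clSigmaPart$ under some role $r\notin\Sigma$, so $D_i\in\unbounded_\Sigma(\OsplitForm)$ and the axiom sits in (c1).

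For the semantic part, since the normalization yields $\Omc\equiv_{\sig(\Omc)}\cl(\Omc)$ and definers are fresh names not in $\Sigma$, this lifts to $\Omc\equiv_{\Sigma\cup\sigC(\Omc)}\cl(\Omc)$, and it remains to show $\cl(\Omc)\equiv_{\Sigma\cup\sigC(\Omc)}\OsplitForm$. The direction $\cl(\Omc)\models\OsplitForm$ is immediate: $\clSigmaPart\subseteq\cl(\Omc)$ by construction, and each axiom $\neg D_1\sqcup\ldots\sqcup\neg D_n\in\deSigmaPart$ is equivalent to $D_1\sqcap\ldots\sqcap D_n\sqsubseteq\perp$, which $\cl(\Omc)$ entails by condition~(ii). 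Hence $\OsplitForm\models\alpha$ implies $\cl(\Omc)\models\alpha$ for every axiom $\alpha$.

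The remaining direction, $\cl(\Omc)\models\alpha\Rightarrow\OsplitForm\models\alpha$ for $\Sigma\cup\sigC(\Omc)$-axioms, is the technical core of the proof, and I would argue it contrapositively by a model construction. Starting from any model $\Imc$ of $\OsplitForm$, I would build a model $\Jmc$ of $\cl(\Omc)$ that agrees with $\Imc$ on every name in $\Sigma\cup\sigC(\Omc)$. The construction keeps those interpretations fixed but rebuilds the interpretation of every definer $D$ whose defining axiom was removed from $\clSigmaPart$, and of every role $r\notin\Sigma$, by attaching fresh tree-shaped witnesses drawn from a canonical model of $\cl(\Omc)$; acyclicity of $\preceq_d$ makes this inductive construction well founded. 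The main obstacle is to show that these fresh $r$-successors can always be chosen consistently: if the $\forall r$-restrictions in $\clSigmaPart$ holding at some $d\in\Delta^\Imc$ together with an $\exists r$-restriction would demand a successor satisfying a set of definers $\{D_1,\ldots,D_n\}$ whose conjunction is inconsistent in $\cl(\Omc)$, then minimality in condition~(ii) places $\neg D_1\sqcup\ldots\sqcup\neg D_n$ in $\deSigmaPart$, and $\Imc\models\OsplitForm$ already rules out this situation at $d$. A consistent witness therefore always exists, and a routine check that $\Jmc$ satisfies every axiom of $\cl(\Omc)$ closes the proof.
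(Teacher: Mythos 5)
Your structural argument (that $\OsplitForm$ is \sigmaForm) and the easy semantic direction ($\clSigmaPart\subseteq\cl(\Omc)$ and $\cl(\Omc)\models\deSigmaPart$ by condition~(ii), hence $\cl(\Omc)\models\OsplitForm$) are correct and essentially what the paper leaves implicit. The problem is the core lemma you propose for the hard direction: \emph{``for every model $\Imc$ of $\OsplitForm$ there is a model $\Jmc$ of $\cl(\Omc)$ that agrees with $\Imc$ on every name in $\Sigma\cup\sigC(\Omc)$.''} This model-level inseparability claim is strictly stronger than the deductive inseparability the theorem asserts, and it is false. Take $\Omc=\{A\sqsubseteq\exists s.\exists t.B\}$ and $\Sigma=\{t\}$, so $\Sigma\cup\sigC(\Omc)=\{A,B,t\}$. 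Then $\cl(\Omc)=\{\neg A\sqcup\exists s.D_1,\ \neg D_1\sqcup\exists t.D_2,\ \neg D_2\sqcup B\}$, $\Rol(D_1,\cl(\Omc))=\{s\}\not\subseteq\Sigma$, so both definer axioms are removed, $\deSigmaPart=\emptyset$ (since $D_1$ is satisfiable), and $\OsplitForm=\{\neg A\sqcup\exists s.D_1\}$. The one-point interpretation $\Imc$ with $\Delta^\Imc=\{a\}$, $A^\Imc=D_1^\Imc=\{a\}$, $s^\Imc=\{(a,a)\}$, $t^\Imc=B^\Imc=\emptyset$ is a model of $\OsplitForm$, but no interpretation over the same domain that keeps $A,B,t$ fixed can satisfy $\cl(\Omc)$ (or $\Omc$): the $A$-element would need an $s$-successor with a $t$-successor in $B$, and $t^\Imc=\emptyset$. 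The theorem itself survives because $\Omc$ and $\OsplitForm$ have the same (here only tautological) $\{A,B,t\}$-\emph{consequences}; your lemma does not.

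If instead you read your construction as enlarging the domain (``attaching fresh tree-shaped witnesses''), then literal agreement on $\Sigma\cup\sigC(\Omc)$ is no longer available either --- the fresh subtrees carry $\Sigma$-roles and $\sigC(\Omc)$-names --- and the step ``$\Imc\not\models\alpha$ implies $\Jmc\not\models\alpha$'' no longer follows for free; you would have to prove, in addition, that the violating element retains its $\Sigma\cup\sigC(\Omc)$-type, e.g.\ because old elements acquire no new $\Sigma$-role successors (a generated-submodel/bisimulation argument), and that every kept existential over a discarded role $r\notin\Sigma$ can be re-satisfied by a fresh witness realizing the demanded definer set, which is where $\deSigmaPart$ and its minimality enter, exactly as you sketch. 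So the intuition is salvageable, but the stated lemma must be weakened to per-axiom countermodel transfer with domain extension, and the preservation argument supplied. Note that the paper avoids this model-theoretic route altogether: it argues proof-theoretically, importing the role-forgetting correctness result $\Res_\Sigma(\cl(\Omc),\cl(\Omc))\equiv_{\Sigma\cup\sigC(\Omc)}\cl(\Omc)$ of Zhao et al.\ (Lemma~\ref{theo:r-forget}), showing that the restriction to axioms whose negative definers satisfy the role condition preserves inseparability (Lemma~\ref{lemma:restrictSigma}), and then proving that $\cl(\Omc)$ and $\OsplitForm$ yield the \emph{same} restricted role forgetting, using Lemma~\ref{corMr} (which rests on the completeness result of Lemma~\ref{theo-preprocessing}) to show that the side conditions of the r-Res inferences coincide for the two ontologies.
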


\label{sec:pre-processing}

To compute $\deSigmaPart$, we saturate $\cl(\Omc)$ using the
inference rules shown in Fig.~\ref{fig:r-Rule}, which is sufficient due to the
following lemma.
\begin{restatable}{lemma}{lemmaPreprocessing}\label{theo-preprocessing}
Let $\mathcal{S}$ be the set of axioms 
$\neg D_1\sqcup \ldots\sqcup
\neg D_n$, 
obtained by applying the
rules in Fig.~\ref{fig:r-Rule} exhaustively
on $\cl(\mathcal{O})$.
Then, for all $D_1$, $\ldots$, $D_n\in\ND$,
we have $\cl(\mathcal{O})\models  D_1\sqcap \ldots\sqcap
D_n\sqsubseteq \perp$ iff $\neg D_{i_1}\sqcup \ldots\sqcup \neg D_{i_k} \in
\mathcal{S}$ for some subset $\{i_1,\ldots, i_k\}\subseteq \{1, \ldots, n\}$.
\end{restatable}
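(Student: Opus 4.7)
The plan is to prove the two directions separately. For the right-to-left direction (soundness), I would argue by induction on the derivation that places $\neg D_{i_1} \sqcup \ldots \sqcup \neg D_{i_k}$ in $\mathcal{S}$: for each rule in Fig.~\ref{fig:r-Rule} one verifies that its conclusion is $\cl(\Omc)$-entailed whenever its premises are, the base case being that every starting clause comes from $\cl(\Omc)$ itself. Monotonicity then yields $\cl(\Omc) \models D_1 \sqcap \ldots \sqcap D_n \sqsubseteq \perp$ for any superset of the indices. This part is largely routine; the work lies in checking the rules individually.

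The left-to-right direction (completeness) is the core of the proof, and I would argue it contrapositively. Assume no subset of $\{D_1,\ldots,D_n\}$ appears as a clause in $\mathcal{S}$; the goal is to construct a model $\Imc$ of $\cl(\Omc)$ with $D_1^\Imc \cap \ldots \cap D_n^\Imc \neq \emptyset$. I would build $\Imc$ canonically as a tree rooted at an element whose definer-label is $\{D_1,\ldots,D_n\}$: starting from this label, close under the non-role axioms of $\cl(\Omc)$ by choosing, for each disjunctive axiom, a satisfied disjunct; then, for every axiom of the form $\neg D \sqcup \ldots \sqcup \exists r.D' \sqcup \ldots$ whose other disjuncts are not satisfied at the current element, create a fresh $r$-successor whose initial definer-label contains $D'$ together with every $D''$ forced by a universal restriction $\neg D_j \sqcup \forall r. D''$ with $D_j$ in the label. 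Recursing on successors and appealing to the acyclicity of $\preceq_d$ yields a well-founded structure.

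The hard part will be to argue that this construction never gets stuck, i.e.\ that at each step some consistent choice of disjuncts and some consistent successor-label exists. Suppose a successor ended up with a definer-label $T$ such that $\cl(\Omc)$ entails that the conjunction of the definers in $T$ is unsatisfiable; the whole point of Fig.~\ref{fig:r-Rule} is to propagate this conflict back along the $r$-edge that created the successor, combining the conflict on $T$ with the triggering axiom $\neg D \sqcup \exists r. D'$ and all applicable $\forall r$-axioms to yield a negated-definer clause over the predecessor's label. Iterating this propagation up the tree eventually produces a clause in $\mathcal{S}$ that is a subset of $\{D_1,\ldots,D_n\}$, contradicting the assumption. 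Making the propagation argument precise — in particular matching each shape of successor conflict to the correct rule of Fig.~\ref{fig:r-Rule} and handling the interaction of one existential with several universals on the same role — is the delicate step, and it is the only place where the exact shape of the rules matters; once in place, verifying that the resulting tree satisfies every axiom of $\cl(\Omc)$ follows standard tableau-style reasoning.
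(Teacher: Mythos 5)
Your proposal is correct in outline, but it takes a genuinely different route from the paper. The paper does not construct models at all: it first re-normalizes $\cl(\Omc)$ into an auxiliary ontology $\cl_P(\Omc)$ in which role literals $\quant r.D$ are hidden behind fresh placeholder names, invokes the completeness of the consequence-based calculus of \cite{DBLP:conf/ijcai/SimancikKH11} (so that $\cl_P(\Omc)\models D_1\sqcap\ldots\sqcap D_n\sqsubseteq\bot$ iff this is derivable in that calculus), and then proves by induction on the length of such derivations (Lemma~\ref{claim1}) that every derivation of $D_1\sqcap\ldots\sqcap D_n\sqsubseteq N$ can be simulated by A-Rule/r-Rule inferences yielding a clause over a subset of $\{\neg D_1,\ldots,\neg D_n\}$ and the names in $N$; a rule-commutation argument (A-Rule steps on placeholders can be pushed to the front) then transfers derivability from $\cl_P(\Omc)$ back to $\cl(\Omc)$. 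You instead propose a self-contained refutational completeness proof via a canonical tree model, with rule soundness giving the converse direction. Both are legitimate: your route avoids the detour through the external calculus and the placeholder normalization, and your appeal to acyclicity of $\preceq_d$ for well-foundedness is exactly right, since in $\cl(\Omc)$ all role restrictions are over definers, so non-root labels are sets of definers of strictly smaller $\preceq_d$-depth. The caveat is that the step you defer --- ``the construction never gets stuck'' --- is precisely where the weight of the paper's simulation lemma reappears: at each node you must argue that if no selection of disjuncts succeeds, then exhaustive A-Rule resolution (refutation-complete for the propositional part, using that definers never occur positively) combined with r-Rule instances matched to each failed successor label (taking as premises the triggering existential clause and exactly those universal clauses whose definers occur in the inductively obtained conflict clause, exploiting that $K_D$ may or may not contain $D_1$) produces a purely negative definer clause over the node's label; note also that a successor should inherit $D''$ only from universal literals actually selected as true at the node, not from every axiom mentioning a label definer, as your phrasing suggests. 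Your sketch identifies the right mechanism for all of this, so the approach is viable, but as written essentially the entire proof obligation of the lemma sits in that deferred argument.
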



\begin{example}[Example \ref{exp:normalize} cont'd]\label{exp:DSigma}
The axiom $\neg D_2\sqcup \neg D_4$ in $\deSigmaPart$ is
obtained by applying two  A-Rule inferences:
\begin{table}[htp]
    \centering
    \begin{tabular}{ccc}
         $\neg D_2\sqcup B_1$,& $\neg B_1\sqcup \neg B_3 $ & \\
          \specialrule{0em}{1pt}{1pt}
         \cline{1-2}
            \specialrule{0em}{1pt}{1pt}
        \multicolumn{2}{c}{$\neg D_2\sqcup \neg B_3$,}&$\neg D_4\sqcup B_3$\\
         \specialrule{0em}{1pt}{1pt}
        \hline
           \specialrule{0em}{1pt}{1pt}
         &\multicolumn{2}{l}{\hspace{4.5mm}$\neg D_2\sqcup \neg D_4$}
    \end{tabular}
\end{table}
\end{example} 




\subsection{Role Forgetting for \SigmaForm
Ontologies}\label{sec:RolE0}
    \begin{figure}
        \centering
       \begin{tcolorbox}
\begin{eqnarray*} 
          \underline{ \emph{A-Rule}}: & \cfrac{\quad C_1\sqcup A_1\qquad \neg A_1\sqcup
C_2\quad}{C_1\sqcup C_2}\\[5pt]
  \underline{ \emph{r-Rule}}: &\cfrac{C_1\sqcup \exists r. D_1,\  \bigcup_{j=2}^{n}\{ C_j\sqcup \forall r. D_j\},\ K_D}{C_1\sqcup \ldots\sqcup C_n},
    \end{eqnarray*}
     where $K_D = \neg D_1\sqcup \ldots\sqcup \neg D_n$ or $\neg D_2\sqcup \ldots\sqcup \neg D_n$.
    \end{tcolorbox}
        \caption{Inference rules for computing $\deSigmaPart$}
        \label{fig:r-Rule}
    \end{figure}
If $\mathcal{O}$ is \sigmaForm for $\Sigma$, a role forgetting
for $\Omc$ and $\Sigma$ can be obtained using the \emph{r-Rule}
in Figure~\ref{fig:r-Rule}. 
Our method applies to any ontology in normal form, not necessarily normalized using \cl,
which is why now
%
%
the concept names $D_1,\ldots, D_n$ in the r-Rule can include also
concept names outside $\ND$.
\begin{definition}\label{defi:rolE}
$\texttt{rolE}_\Sigma(\mathcal{O})$ is the ontology
obtained as follows:
\begin{enumerate}
    \item apply the \emph{r-Rule} exhaustively for each $r\in
\sigR(\mathcal{O})\setminus \Sigma$,
    \item remove all axioms containing some
$r\in\sigR(\mathcal{O})\setminus\Sigma$.
\end{enumerate}
\end{definition}
\noindent
The second step ensures that all role names in the resulting ontology
$\texttt{rolE}_\Sigma(\mathcal{O})$ are in  $\Sigma$ and
therefore, we have
$\sig(\texttt{rolE}_\Sigma(\mathcal{O}))\subseteq \Sigma\cup\sigC(\Omc)$.

\begin{example}[Example \ref{exp:DSigma} cont'd]\label{exp:rolE}
For the ontology $\OsplitForm$, Table \ref{tab:all_data} (fourth row)  shows
$\texttt{rolE}_\Sigma(\OsplitForm)$ which is obtained through the following
two steps:
\begin{enumerate}
\item[1.] The new axiom  $\neg D_1\sqcup \neg A_2\sqcup A_3$ is generated by the
r-Rule inference:
  $$ \frac{\neg D_1\sqcup \exists s. D_2,\ \neg A_2\sqcup A_3\sqcup \forall s. D_4,\ \neg D_2\sqcup \neg D_4}{\neg D_1\sqcup \neg A_2\sqcup A_3}$$    
  
\item[2.] The two axioms $\neg D_1\sqcup \exists s. D_2,\ \neg A_2\sqcup A_3\sqcup \forall s. D_4$  are removed because they contain $s\in \sig(\OsplitForm)\setminus \Sigma$.
        \end{enumerate}
\end{example}
\begin{restatable}{theorem}{theorolE}\label{theo:rolE}
If $\mathcal{O}$ is \sigmaForm for $\Sigma$, then
$\texttt{rolE}_\Sigma(\mathcal{O})$ is a role-forgetting for $\Omc$ and $\Sigma$.
\end{restatable}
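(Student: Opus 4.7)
Since role forgetting is defined to be uniform interpolation for the larger signature $\Sigma' = \Sigma \cup \sigC(\Omc)$, my plan is to verify three conditions on $\texttt{rolE}_\Sigma(\Omc)$: (i) $\sig(\texttt{rolE}_\Sigma(\Omc)) \subseteq \Sigma'$, (ii) $\Omc \models \texttt{rolE}_\Sigma(\Omc)$, and (iii) every $\Sigma'$-axiom entailed by $\Omc$ is entailed by $\texttt{rolE}_\Sigma(\Omc)$. Together these give both inseparability and the entailment direction required to make $\texttt{rolE}_\Sigma(\Omc)$ a general module, and hence a uniform interpolant with respect to $\Sigma'$.

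Condition (i) is immediate: the r-Rule never introduces fresh concept or role names, so $\sigC(\texttt{rolE}_\Sigma(\Omc)) \subseteq \sigC(\Omc)$, while step~2 of Definition~\ref{defi:rolE} removes any remaining axiom containing an $r \in \sigR(\Omc) \setminus \Sigma$. For (ii), I would verify soundness of the r-Rule directly and then observe that discarding axioms preserves entailment from $\Omc$. Concretely, suppose $\Imc \models \Omc$ and $x \in \Delta^\Imc$. If no $C_i$ holds at $x$, then $\exists r.D_1$ forces an $r$-successor $y$ with $y \in D_1^\Imc$, while each $\forall r.D_i$ forces $y \in D_i^\Imc$; but $K_D$ asserts unsatisfiability of either $D_1\sqcap\ldots\sqcap D_n$ or $D_2\sqcap\ldots\sqcap D_n$, yielding a contradiction in both cases. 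Thus $C_1\sqcup\ldots\sqcup C_n$ holds at $x$.

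The substantive part is condition (iii), for which I would argue the contrapositive: take any model $\Imc$ of $\texttt{rolE}_\Sigma(\Omc)$ and construct a model $\Jmc$ of $\Omc$ sharing the same domain and interpretation of all names in $\Sigma'$. The idea is to add, for each $x \in \Delta^\Imc$ and each non-$\Sigma$ role $r$, fresh $r$-successors witnessing all $\exists r.D$ literals forced at $x$, with each such successor assigned the conjunction of definers $D$ together with all $D'$ such that some $\forall r.D'$ is forced at $x$. The definers and other non-$\Sigma'$ names on these new successors are then interpreted recursively using $\cl(\Omc)$. The critical obstacle is showing that this construction is consistent, i.e.\ that the required intersection of definers at each new successor is satisfiable. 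This is exactly where role isolation is essential: axioms of form (c2) restrict what happens strictly \emph{inside} role restrictions, while any interaction between definers whose role behaviour is non-$\Sigma$ is encoded as (c1)-axioms over $\unbounded_\Sigma(\Omc)$-names. By Lemma~\ref{theo-preprocessing}, every unsatisfiable combination $D_{i_1}\sqcap\ldots\sqcap D_{i_k}$ is witnessed by a clause $\neg D_{i_1}\sqcup\ldots\sqcup\neg D_{i_k}$, so if the chosen intersection were unsatisfiable, the corresponding r-Rule application would already have fired at $x$ in $\Imc$, forcing some $C_i$ to hold there and removing the need for that particular successor. Hence the construction succeeds, $\Jmc$ agrees with $\Imc$ on $\Sigma'$, and any $\Sigma'$-axiom violated by $\Imc$ is also violated by $\Jmc$, completing the proof.
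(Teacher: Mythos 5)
Your overall strategy (show the signature condition, soundness of the r-Rule, and then a model-theoretic completeness argument by extending a model of $\rolE_\Sigma(\Omc)$ to a model of $\Omc$ that agrees on $\Sigma\cup\sigC(\Omc)$) is a genuinely different route from the paper, which instead observes that on a \sigmaForm ontology every r-Rule application corresponds exactly to an application of the r-Res rule of \cite{DBLP:conf/aaai/ZhaoASFSJK19} on $\cl(\Omc)$, so that $\Res_\Sigma(\cl(\Omc),\cl(\Omc))=\cl(\rolE_\Sigma(\Omc))$, and then imports the correctness of r-Res (Lemma~\ref{theo:r-forget}). A direct model construction of the kind you sketch could work in principle, but as written it has a genuine gap at its crucial step.

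The gap is in the justification that the construction of the new $r$-successors ($r\notin\Sigma$) is consistent. You argue: if the required conjunction of definers at a fresh successor were unsatisfiable, then ``by Lemma~\ref{theo-preprocessing} it is witnessed by a clause $\neg D_{i_1}\sqcup\ldots\sqcup\neg D_{i_k}$, so the corresponding r-Rule application would already have fired at $x$.'' This does not follow from the cited lemma. Lemma~\ref{theo-preprocessing} states that such a clause is \emph{derivable from $\cl(\Omc)$ by exhaustively applying both the A-Rule and the r-Rule}; it says nothing about the clause being \emph{present} in the given role-isolated ontology $\Omc$. However, the r-Rule of Definition~\ref{defi:rolE} has the purely syntactic side premise $K_D$, i.e.\ it fires only if the negative definer clause (or the variant omitting $\neg D_1$) actually occurs among the axioms being saturated, and Definition~\ref{defi:rolE} applies only the r-Rule, not the A-Rule. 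In the pipeline this presence is guaranteed by construction, because $\OsplitForm$ explicitly contains $\deSigmaPart$ (Definition~\ref{def:SigmaForm}); but Theorem~\ref{theo:rolE} is stated for an \emph{arbitrary} \sigmaForm ontology, for which \SigmaForm-ness is a purely syntactic condition on the shape of axioms and does not by itself assert that every entailed unsatisfiable combination of $\unbounded_\Sigma(\Omc)$-names is subsumed by a (c1) clause of $\Omc$. To close the gap you would need an extra lemma of exactly this form, e.g.: in a consistent \sigmaForm ontology, names in $\unbounded_\Sigma(\Omc)$ occur at the top level only negatively and only in (c1) clauses, hence if $\Omc\models D_1\sqcap\ldots\sqcap D_n\sqsubseteq\bot$ for such names then some subset clause $\neg D_{i_1}\sqcup\ldots\sqcup\neg D_{i_k}$ already belongs to $\Omc$ (provable by duplicating an element of a model and switching on exactly $D_1,\ldots,D_n$ among the unbounded names), plus a separate treatment of the inconsistent case. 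As it stands, the appeal to Lemma~\ref{theo-preprocessing} replaces this missing argument with a statement about a different calculus applied to a different ontology.

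A secondary, more repairable looseness: the ``recursive'' completion of the fresh successors and the notion of a literal being ``forced at $x$'' are circular as stated, because adding successors for one non-$\Sigma$ role can falsify $\forall r'.A$ literals (for other non-$\Sigma$ roles $r'$) that were vacuously true at $x$, changing which clauses are forced; and the fresh elements must satisfy \emph{all} global axioms of $\Omc$, not only the definer constraints. These issues are standard and can be handled (e.g.\ by attaching a disjoint copy of a model witnessing satisfiability of the definer conjunction, and by being conservative about which $\forall$-literals are counted as forced), but they should be spelled out; the essential missing ingredient remains the presence of the $K_D$ premises discussed above.
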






\hideThisPart{
\subsection{Role Forgetting for General Ontologies}\label{sec:RolE}
For an arbitrary ontology $\Omc$, by Theorem~\ref{theo:rolE}
and~\ref{theo:Sigma_normalized}, we can deduce that
$\mathcal{O}\equiv_{\Sigma\cup \sigC(\mathcal{O})}
\texttt{rolE}_\Sigma(\OsplitForm)$.
However, $\texttt{rolE}_\Sigma(\OsplitForm)$ may contain definers and thus not
be a role forgetting for $\Omc$ and $\Sigma$.
For instance, in Example~\ref{exp:rolE}, we can observe that
$\texttt{rolE}_\Sigma(\OsplitForm)$ contains the definers $D_1$, $D_2$, $D_3$,
$D_4$.
Actually, those definers $D$ can be eliminated using their corresponding
concepts $C_D$ as follows.

\begin{restatable}{theorem}{theorolEALC}\label{theo:rolEALC}
Let $\mathcal{R}_\Sigma(\Omc)$ be the ontology obtained by replacing each definer
$D$ in $\texttt{rolE}_\Sigma(\OsplitForm)$ by $C_D$. 
Then, we have \textbf{(i)} $\mathcal{O}\equiv_{\Sigma\cup \sigC(\mathcal{O})}
\mathcal{R}_\Sigma(\Omc)$, and  \textbf{(ii)}
if $\sig_R(C_D)\subseteq \Sigma$ for all
$D\in\sig(\texttt{rolE}_\Sigma(\OsplitForm))$, then $\mathcal{R}_\Sigma(\Omc)$
is a role forgetting for $\Omc$ and $\Sigma$.
\end{restatable}
\noindent
When $\Omc$ is normalized, $\mathcal{R}_\Sigma(\Omc)$ is always a role forgetting for $\Omc$ and  $\Sigma$ because  $\sig_R(C_D){=}\emptyset$ for every definer $D{\in} cl(\Omc)$. 
However, in the general case, $\mathcal{R}_\Sigma(\Omc)$ may not be a role forgetting. For instance, we have $\sig_R(C_{D_1}) {=} \{s\}\not\subseteq \Sigma$ in Example \ref{exp:rolE}. In this case, we can regard $\mathcal{R}_\Sigma(\Omc)$ as a role forgetting approximation.

Note that role forgetting does not always exist. 
This is the case for  $\Omc = \{A\sqsubseteq \exists s. B,\ \exists s. B
\sqsubseteq \exists r. \exists s. B,\ \exists s. B \sqsubseteq A_1\}$ and
$\Sigma = \{A, A_1, r\}$ because it is impossible to find an alternative
definition of $\exists s.B$ without using the role name $s\not\in\Sigma$.


In conclusion, we have presented how to compute a role forgetting (or its
approximation)
for each pair $\Omc$ and $\Sigma$  through three steps: 
(i) role isolation of $\Omc$ producing $\OsplitForm$,
(ii) application of the r-Rule on $\OsplitForm$,
(iii) substitution of definers~$D$ by the corresponding concepts~$C_D$.


\hui{I proposed to move Section 4.3 to appendix. And we could add the following remark following Section \ref{sec:RolE0}:
\begin{remark}
The role forgetting procedure given above is enough for computing general modules. This procedure could be generalized to the case of general ontologies using role isolation. Because of the space limitation, the details are provided in Appendix \ref{XXX}  \end{remark}
}
\patrick{Remove 4.3 entirely?}
\hui{Yes, that is the plan.}
}

\section{Computing General Modules via
$\texttt{rolE}_\Sigma$}\label{sec:computeGM}
We compute a general module from $\texttt{rolE}_\Sigma(\Omc)$ by forgetting also
the concept names 
and eliminating
all definers. 
The latter is necessary to obtain an ontology entailed by $\Omc$. Forgetting concept names is done to further simplify the ontology.

\subsection{Concept Forgetting}
We say that an  ontology $\Mmc$ is a
\emph{concept forgetting} for $\mathcal{O}$ and $\Sigma$ iff $\Mmc$ is a uniform
interpolant for $\Omc$ and the signature $\Sigma' = \Sigma\cup
\sigR(\Omc)\cup\ND$.
\patrick{I changed this: previously, $\Sigma'$ did not include the definers,
which I believe we do not eliminate in this step.}
A concept forgetting can be computed through the \emph{A-Rule} in
Figure~\ref{fig:r-Rule}. 
\begin{definition}
$\texttt{conE}_\Sigma(\mathcal{O})$ is the ontology obtained as follows:
\begin{enumerate}
\item  apply the \emph{A-Rule} exhaustively for each
$A\in\sigC(\Omc)\setminus \Sigma$, 
\item delete every axiom $\alpha$ that contains $A$ or $\neg A$, where
$A\in\NC\setminus\Sigma$ and no axiom contains $\quant r.A$ for 
$\quant\in\{\forall,\exists\}$ and $r\in\NR$.
%
\end{enumerate}
\end{definition}
\begin{example}[Example \ref{exp:rolE} cont'd]\label{exp:con_elimination}
Table \ref{tab:all_data} (the 5th row) shows the axioms in
$\texttt{conE}_{\Sigma}(\texttt{rolE}_\Sigma(\OsplitForm))$ 
obtained as follows.
\begin{enumerate}
\item $\neg D_3\sqcup B_4$, $\neg B_2\sqcup A_4$, and $\neg D_3\sqcup
A_4$ are first generated by applying the A-Rule on $B_2$ and $B_4$.
\item Axioms containing $B_i$ or $\neg B_i$, $i\in\{1,\ldots,4\}$, are
removed since $B_i\not\in\Sigma$. $\neg D_2 \sqcup \neg D_4$ is also removed
because there are no literals of the form $\quant r.D_2$ or $\quant
r.D_4$.
\end{enumerate}

\end{example} 
\noindent
The following is a consequence of~\cite[Theorem 1]{DBLP:conf/ijcai/ZhaoS17}.
\begin{theorem}\label{prop_A_rule}
If $\mathcal{O}$ is in normal form, then $\texttt{conE}_\Sigma(\mathcal{O})$ is a
concept forgetting for $\Omc$ and  $\Sigma$.
\end{theorem}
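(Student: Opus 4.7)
The plan is to derive Theorem \ref{prop_A_rule} as a direct specialization of Zhao and Schmidt's Theorem~1 on resolution-based concept forgetting, after checking that our setup satisfies its hypotheses and that the deletion step is both sound and signature-correct. Recall that we need to establish two things: (i)~$\Omc\equiv_{\Sigma'}\texttt{conE}_\Sigma(\Omc)$ for $\Sigma' = \Sigma\cup\sigR(\Omc)\cup\ND$, and (ii)~$\sig(\texttt{conE}_\Sigma(\Omc))\subseteq\Sigma'$.

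For the soundness direction $\Omc\models\texttt{conE}_\Sigma(\Omc)$, I would note that each A-Rule application is a propositional resolution step on a literal $A$/$\neg A$ (purely a logical consequence of the premises), and the deletion phase only removes axioms, so every axiom of $\texttt{conE}_\Sigma(\Omc)$ is entailed by $\Omc$. This immediately gives one half of $\equiv_{\Sigma'}$ and also witnesses the second requirement of a general module. For the signature condition~(ii), the key observation is that because $\Omc$ is in normal form, a concept name $A$ can occur inside a role restriction $\quant r.A$ only when $A$ is a definer (by construction of $\cl$, such positions are always filled by definers). Hence, for every $A\in\sigC(\Omc)\setminus\Sigma$ that is not a definer, the side condition of the deletion step is vacuously satisfied, and all axioms containing $A$ or $\neg A$ are removed; for definers $A\in\ND$, the name remains, but $\ND\subseteq\Sigma'$ so this is harmless. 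Thus $\sig(\texttt{conE}_\Sigma(\Omc))\subseteq\Sigma\cup\sigR(\Omc)\cup\ND=\Sigma'$.

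The main obstacle is the completeness direction: for every $\Sigma'$-axiom $\alpha$ with $\Omc\models\alpha$, we must show $\texttt{conE}_\Sigma(\Omc)\models\alpha$. This is exactly the uniform interpolation guarantee of Zhao and Schmidt's resolution calculus, and I would invoke their Theorem~1 on the set $\sigC(\Omc)\setminus\Sigma'$ of non-definer concept names to be forgotten. Two small points need checking to transport their result: first, exhaustive A-Rule saturation on each such $A$ yields all resolvents, so any $\Sigma'$-consequence that would require a clause containing $A$ has already been obtained as a derived clause free of $A$; second, after saturation the remaining axioms mentioning $A$ are redundant with respect to $\Sigma'$-entailment, which justifies their deletion. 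Since our A-Rule coincides with their concept-resolution rule, and the normal-form/definer discipline ensures no hidden occurrences of the forgotten names, the completeness argument of Theorem~1 carries over verbatim, yielding $\Omc\equiv_{\Sigma'}\texttt{conE}_\Sigma(\Omc)$ as required.
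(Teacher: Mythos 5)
Your proposal takes exactly the same route as the paper: the paper's entire justification of this theorem is the one-line remark that it is a consequence of \cite[Theorem 1]{DBLP:conf/ijcai/ZhaoS17}, which is precisely the external result you invoke for the completeness direction, and your extra bookkeeping (soundness of the A-Rule as resolution, correctness of the deletion step) is fine. One caveat: your signature argument claims that in a normal-form ontology every filler of a role restriction is a definer; the paper's definition of normal form does not guarantee this (it is a property of $\cl(\Omc)$ and of the pipeline input $\rolE_\Sigma(\OsplitForm)$, and the paper explicitly allows non-definer fillers in general normal-form ontologies), so if some $A\in\NC\setminus(\Sigma\cup\ND)$ occurs under a role restriction it is never eliminated and $\sig(\texttt{conE}_\Sigma(\Omc))\subseteq\Sigma\cup\sigR(\Omc)\cup\ND$ fails --- but in that case the theorem as literally stated is itself problematic, so this is an imprecision inherited from the statement rather than a flaw specific to your argument.
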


Theorems \ref{theo:Sigma_normalized}, \ref{theo:rolE} and \ref{prop_A_rule},
give us the following corollary.
\begin{corollary}\label{cor:con_equiv}
$\texttt{conE}_{\Sigma}(\texttt{rolE}_\Sigma(\OsplitForm))
\equiv_{\Sigma} \Omc$.
\end{corollary}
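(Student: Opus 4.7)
The plan is to chain together the three theorems already established in the excerpt, using transitivity of $\Sigma$-inseparability and the observation that $\equiv_{\Sigma'}$ implies $\equiv_\Sigma$ whenever $\Sigma\subseteq\Sigma'$.

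First, I would apply Theorem~\ref{theo:Sigma_normalized} to get
\[
\Omc \;\equiv_{\Sigma\cup\sigC(\Omc)}\; \OsplitForm,
\]
and the fact that $\OsplitForm$ is \sigmaForm for $\Sigma$. This second fact is crucial because it is exactly the precondition required to invoke Theorem~\ref{theo:rolE} in the next step. Since $\Sigma\subseteq\Sigma\cup\sigC(\Omc)$, the above already gives $\Omc\equiv_\Sigma\OsplitForm$.

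Next, I would apply Theorem~\ref{theo:rolE} to $\OsplitForm$ to conclude that $\texttt{rolE}_\Sigma(\OsplitForm)$ is a role forgetting for $\OsplitForm$ and $\Sigma$, i.e.\ a uniform interpolant for $\OsplitForm$ and $\Sigma\cup\sigC(\OsplitForm)$. Unfolding the definition of uniform interpolant,
\[
\OsplitForm \;\equiv_{\Sigma\cup\sigC(\OsplitForm)}\; \texttt{rolE}_\Sigma(\OsplitForm),
\]
which again restricts to $\equiv_\Sigma$.

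Finally, I would apply Theorem~\ref{prop_A_rule} to $\texttt{rolE}_\Sigma(\OsplitForm)$. The small obstacle to check here is that the hypothesis of Theorem~\ref{prop_A_rule}—being in normal form—really holds for $\texttt{rolE}_\Sigma(\OsplitForm)$: $\OsplitForm$ is in normal form by Definition~\ref{defi:Sigma-norm}, the r-Rule produces only disjunctions of literals from disjunctions of literals, and the subsequent deletion step only removes axioms, so normal form is preserved. Thus $\texttt{conE}_\Sigma(\texttt{rolE}_\Sigma(\OsplitForm))$ is a concept forgetting for $\texttt{rolE}_\Sigma(\OsplitForm)$ and $\Sigma$, giving
\[
\texttt{rolE}_\Sigma(\OsplitForm) \;\equiv_{\Sigma\cup\sigR(\texttt{rolE}_\Sigma(\OsplitForm))\cup\ND}\; \texttt{conE}_\Sigma(\texttt{rolE}_\Sigma(\OsplitForm)),
\]
which again restricts to $\equiv_\Sigma$.

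Putting the three $\equiv_\Sigma$ statements together by transitivity of $\Sigma$-inseparability (which is immediate from its definition via preservation of $\Sigma$-axiom entailment) yields $\Omc\equiv_\Sigma\texttt{conE}_\Sigma(\texttt{rolE}_\Sigma(\OsplitForm))$, as required. The only real work is the bookkeeping step of verifying that normal form is preserved through $\texttt{rolE}_\Sigma$; everything else is a direct composition of the cited theorems.
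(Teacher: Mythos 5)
Your proposal is correct and follows essentially the same route as the paper, which derives the corollary directly by chaining Theorems~\ref{theo:Sigma_normalized}, \ref{theo:rolE} and \ref{prop_A_rule} and restricting the larger signatures to $\Sigma$. Your additional check that $\texttt{rolE}_\Sigma(\OsplitForm)$ remains in normal form (so that Theorem~\ref{prop_A_rule} applies) is a detail the paper leaves implicit, and it is verified correctly.
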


\subsection{Constructing the General Module}

Now, in order to obtain our general modules, we have to eliminate the definers from
$\texttt{conE}_{\Sigma}(\texttt{rolE}_\Sigma(\OsplitForm))$. To improve
the results, we delete \emph{subsumed axioms} (i.e., axioms
$\top\sqsubseteq C\sqcup D$ for which we also derived $\top\sqsubseteq
C$) and also simplify the axioms.


\begin{restatable}{theorem}{maintheo}\label{main_theo}
Let $\gm_\Sigma(\mathcal{O})$ be the ontology obtained from
$\texttt{conE}_{\Sigma}(\texttt{rolE}_\Sigma (\OsplitForm))$ by
\begin{itemize}
 \item deleting subsumed axioms,
 \item replacing each definer $D$ by $C_D$, and
 \item exhaustively applying $C_1\sqsubseteq\neg C_2\sqcup C_3$ $\Rightarrow$
 $C_1\sqcap C_2\sqsubseteq C_3$ and 
 $C_1\sqsubseteq\quant r.\neg C_2\sqcap C_3$ $\Rightarrow$ 
 $C_1\sqcap\overline{\quant}r.C_2\sqsubseteq C_3$, where ${\overline{\exists}}={\forall}$ and ${\overline{\forall}}={\exists}$.
\end{itemize}
%
Then, $\gm_\Sigma(\mathcal{O})$ is a general module for $\mathcal{O}$ and
$\Sigma$.
\end{restatable}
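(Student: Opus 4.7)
The plan is to verify the two conditions for $\gm_\Sigma(\Omc)$ to be a general module: (i) $\Omc \equiv_\Sigma \gm_\Sigma(\Omc)$, and (ii) $\Omc \models \gm_\Sigma(\Omc)$. Let $\Omc' = \texttt{conE}_{\Sigma}(\texttt{rolE}_\Sigma(\OsplitForm))$, and let $\Omc''$ denote the ontology obtained from $\Omc'$ by first deleting subsumed axioms and then exhaustively applying the two rewriting rules, but \emph{before} replacing definers. Thus $\gm_\Sigma(\Omc)$ is obtained from $\Omc''$ by replacing each definer $D$ by $C_D$.

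First I would observe that the transformations leading from $\Omc'$ to $\Omc''$ preserve logical equivalence: removing an axiom $\top \sqsubseteq C \sqcup D$ whose disjunct-weakening $\top \sqsubseteq C$ is already present changes no models, and the two rewriting rules are just syntactic reshuffling of De~Morgan and the standard equivalences between $\forall r. \neg C$ and $\neg \exists r. C$. Combined with Corollary~\ref{cor:con_equiv}, this gives $\Omc'' \equiv \Omc'$ and hence $\Omc'' \equiv_\Sigma \Omc$. Moreover, since $\Omc'$ is obtained by sound inferences from $\cl(\Omc)$, we have $\cl(\Omc) \models \Omc''$.

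The heart of the proof is then the definer replacement step, which I would handle through a model-extension argument based on the fact that each definer $D$ was introduced as a name for the concept $C_D$. Define, for any interpretation $\Imc$ of $\sig(\Omc)$, the extension $\Imc^+$ by setting $D^{\Imc^+} := C_D^{\Imc}$ for every definer $D$ (in the order given by $\preceq_d$, which is acyclic by the assumption on normalization, so this is well-defined). Then $\Imc^+$ agrees with $\Imc$ on $\Sigma$ and, crucially, for every axiom $\alpha \in \Omc''$, the interpretations $\alpha^{\Imc^+}$ and $(\alpha[D \mapsto C_D])^{\Imc}$ coincide. This gives the two key facts: (a) if $\Imc \models \Omc$ then $\Imc^+ \models \cl(\Omc)$, hence $\Imc^+ \models \Omc''$, hence $\Imc \models \alpha[D \mapsto C_D]$ for every $\alpha \in \Omc''$, which is exactly $\Imc \models \gm_\Sigma(\Omc)$, giving condition (ii); and (b) conversely, if $\Imc \models \gm_\Sigma(\Omc)$, then $\Imc^+ \models \Omc''$, so for any $\Sigma$-axiom $\beta$ entailed by $\Omc$ we have $\Omc'' \models \beta$ (by $\Omc'' \equiv_\Sigma \Omc$), thus $\Imc^+ \models \beta$, and since $\beta$ uses only $\Sigma$-symbols and $\Imc^+$ agrees with $\Imc$ on $\Sigma$, we get $\Imc \models \beta$; together with direction (ii) this yields $\gm_\Sigma(\Omc) \equiv_\Sigma \Omc$.

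The main obstacle I anticipate is justifying that the back-substitution $D \mapsto C_D$ is well-defined and semantically faithful when definers are nested, i.e., when $C_D$ itself contains other definers $D' \preceq_d D$. This is where the acyclicity of $\preceq_d$ guaranteed by the normalization is essential: by performing the substitution in an order compatible with $\preceq_d$, or equivalently by defining $\Imc^+$ inductively along $\preceq_d$, every $D$ eventually gets interpreted purely in terms of the original $\sig(\Omc)$-symbols, so the coincidence $\alpha^{\Imc^+} = (\alpha[D \mapsto C_D])^{\Imc}$ used above holds at every level of nesting. A secondary subtlety is that the rewriting rules are applied after normalization but the subsumed-axiom deletion should be interpreted modulo the $\top\sqsubseteq$-convention on the normal form; I would state this explicitly and then the remainder of the proof is the routine model-theoretic argument sketched above.
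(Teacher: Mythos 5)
Your proof is correct, but it takes a genuinely different route from the paper's. You dispose of the definer back-substitution directly and semantically: taking Corollary~\ref{cor:con_equiv} as the black box for the forgetting steps (plus the observation that subsumption deletion and the rewritings are equivalence-preserving), you expand any interpretation $\Imc$ to $\Imc^+$ with $D^{\Imc^+}:=C_D^{\Imc}$ and use that each $C_D$ is a definer-free concept over $\sig(\Omc)$, so satisfaction of an axiom of $\texttt{conE}_\Sigma(\texttt{rolE}_\Sigma(\OsplitForm))$ under $\Imc^+$ coincides with satisfaction of its substituted version under $\Imc$; this yields $\Omc\models\gm_\Sigma(\Omc)$ and $\Sigma$-inseparability in one stroke. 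The paper instead makes the definer semantics explicit syntactically: it introduces $\Omc_D=\{D\equiv \texttt{copy}_\Sigma(C_D)\}$, where the non-$\Sigma$ symbols of $C_D$ are renamed to fresh copies, forms $\cl^{ex}(\Omc)=\OsplitForm\cup\cl(\Omc_D)$, proves $\Omc\equiv_{\Sigma\cup\sigC(\Omc)}\cl^{ex}(\Omc)$ by a model expansion very close in spirit to your $\Imc^+$ (Lemma~\ref{claim_ex_rolE}), and then re-runs $\texttt{rolE}$/$\texttt{conE}$ over an extended signature, invoking Theorems~\ref{theo:rolE} and~\ref{prop_A_rule}, to obtain $\cl^{ex}(\Omc)\equiv_\Sigma\gm_\Sigma(\Omc)$ (Lemma~\ref{claim_gm}); the fresh copies are precisely what keeps the definition axioms untouched by the forgetting. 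Your argument is shorter and has the merit of making condition (ii), $\Omc\models\gm_\Sigma(\Omc)$, fully explicit (the paper's written proof only concludes $\Sigma$-inseparability), while the paper's detour buys reusability: the same $\cl^{ex}$ scaffolding is reused almost verbatim for Theorem~\ref{cor:gmOpti}. Your step (a) does rely on the standard but unstated property of $\cl$ that interpreting each $D$ exactly as $C_D$ turns any model of $\Omc$ into a model of $\cl(\Omc)$; since the paper's Lemma~\ref{claim_ex_rolE} leans on the same fact, this matches the paper's own level of rigor. Two cosmetic remarks: you apply the rewriting rules before substituting definers, whereas the theorem substitutes first --- harmless, as each rewriting is a per-axiom logical equivalence that commutes semantically with the substitution --- and your worry about nested definers inside $C_D$ is moot, because $C_D$ is always the original, definer-free subconcept of $\Omc$.
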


\begin{example}[Example \ref{exp:con_elimination} cont'd]\label{exp:general_mod}
Table \ref{tab:all_data} (the 8th row) shows the general module
$\gm_\Sigma(\mathcal{O})$, which has been obtained using
$C_{D_1}{=}\exists s. B_1$ and $C_{D_3}{=}B_2$. 
%
\end{example}
%
Eliminating definers in this way may reintroduce previously forgotten names,
which is why our general modules are in general not uniform interpolants. This
has the
advantage of avoiding the triple exponential blow-up caused by uniform
interpolation (see Section~\ref{sec:intro}).
In contrast, the size of our result is at most single exponential in the size
of the input.

\begin{restatable}{proposition}{propsizegm}\label{prop:size_gm}
For any ontology  $\Omc$  and signature  $\Sigma$, we have
$\lVert \gm_\Sigma(\mathcal{O})\rVert  \leq 2^{O(\longcl)}$.
On the other hand, there exists a family of ontologies $\Omc_n$ and signatures $\Sigma_n$ s.t.
$\lVert\Omc_n\rVert$ is polynomial in $n\geq 1$ and
$\lVert  \gm_{\Sigma_n}(\mathcal{O}_n) \rVert  = n\cdot 2^{O(\lVert\cl(\Omc_n)\rVert)}$.
\end{restatable}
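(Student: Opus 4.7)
The plan is to establish the two bounds separately, leveraging the size analysis of the intermediate ontologies generated by our pipeline (Figure~\ref{schema:ALC}).

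\textbf{Upper bound.} I would argue by counting the distinct literals that can appear in axioms derived by the r-Rule or A-Rule. Every such literal is either a concept literal $A$ or $\neg A$ with $A \in \sigC(\cl(\Omc))$, or a role literal $\quant r.D$ with $r \in \sigR(\cl(\Omc))$ and $D \in \ND$. Our normalization guarantees at most one literal of the form $\quant r.D$ per definer $D$, so the pool of distinct literals has size $O(\longcl)$. Crucially, neither rule of Figure~\ref{fig:r-Rule} introduces fresh literals: both simply combine literals already present in their premises. Hence every axiom appearing in $\texttt{conE}_\Sigma(\texttt{rolE}_\Sigma(\OsplitForm))$ is a subset of this fixed literal pool, giving at most $2^{O(\longcl)}$ distinct axioms, each of size $O(\longcl)$. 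The definer substitution $D \mapsto C_D$ from Theorem~\ref{main_theo} inflates each axiom by at most a polynomial factor in $\longcl$, since $C_D$ is a definer-free subconcept of $\Omc$ of size at most $\longont$. Multiplying these factors (and noting that subsumed-axiom deletion and the two rewriting rules can only shrink the result) gives $\lVert \gm_\Sigma(\Omc)\rVert \leq 2^{O(\longcl)}$.

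\textbf{Lower bound.} I would exhibit a family $(\Omc_n, \Sigma_n)$ of polynomial-size inputs on which our algorithm produces an output of size at least $n \cdot 2^{\Omega(\longcl_n)}$. The construction targets the r-Rule: we design an ontology placing $n$ \enquote{head} concepts $A_1, \ldots, A_n$ on the left of axioms whose right-hand sides, after forgetting, expand to pairwise distinct disjunctions each of exponential size. Concretely, each $A_i$ binds to a common subconcept via a non-$\Sigma_n$ role, and the inconsistency conditions $K_D$ in the r-Rule are arranged to be satisfied by $2^{\Omega(\longcl_n)}$ different subsets of definers, yielding that many r-Rule inferences whose conclusions are, by construction, pairwise non-subsumed and whose $C_D$ substitutions remain pairwise logically independent over $\Sigma_n$.

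The main obstacle is engineering the construction so that the exponential blow-up genuinely survives all simplification steps of Theorem~\ref{main_theo}, especially the subsumed-axiom deletion. The delicate point is to arrange the $K_D$-inconsistencies as a balanced antichain, so that no derived axiom dominates another; once that is achieved, the substitution step cannot collapse the output since the $C_D$'s are chosen to be disjoint, and the matching upper bound confirms that the construction is tight up to polynomial factors.
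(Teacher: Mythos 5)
Your upper bound is sound, and in fact takes a slightly cleaner route than the paper: the paper bounds the number of derivable axioms by counting unions of subsets of $\clSigmaPart$ (each axiom of $\rolE_\Sigma(\OsplitForm)$, and then of $\conE_\Sigma(\rolE_\Sigma(\OsplitForm))$, is obtained from some $\bigsqcup_i\beta_i$ with $\beta_i\in\clSigmaPart$ by dropping non-$\Sigma$ literals), whereas you count the fixed pool of literals directly and use that normalized axioms are literal sets and that the A-Rule and r-Rule never create fresh literals. Both give $2^{O(\longcl)}$ many axioms of length $O(\longcl)$, followed by the observation that substituting $C_D$ (a definer-free subconcept of $\Omc$) inflates each axiom only polynomially and that subsumption deletion and rewriting only shrink the result, so $\lVert\gm_\Sigma(\Omc)\rVert\leq 2^{O(\longcl)}$. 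This part is fine.

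The lower bound, however, is a genuine gap: the statement asserts the \emph{existence} of a family $(\Omc_n,\Sigma_n)$, so a proof must exhibit a concrete family, verify that $\lVert\Omc_n\rVert$ is polynomial in $n$, and compute the size of $\gm_{\Sigma_n}(\Omc_n)$ after all simplification steps. You only describe desiderata (``arrange the $K_D$-inconsistencies as a balanced antichain'', ``$C_D$'s chosen to be disjoint'') and explicitly defer the hard part --- engineering an input on which exponentially many r-Rule conclusions survive subsumption deletion and definer substitution --- which is precisely what the proof has to deliver. For comparison, the paper's witness is quite simple: take $Z_1\sqcap\ldots\sqcap Z_n\sqsubseteq\bot$, $X_i\sqcup Y_i\sqsubseteq Z_i$, $\top\sqsubseteq A_1\sqcup\exists s.X_1$, $\top\sqsubseteq\overline{A_1}\sqcup\exists s.Y_1$, and $\top\sqsubseteq A_j\sqcup\forall s.X_j$, $\top\sqsubseteq\overline{A_j}\sqcup\forall s.Y_j$ for $2\leq j\leq n$, with $\Sigma_n=\{A_j,\overline{A_j}\}_{j=1}^n$ (so $\lVert\cl(\Omc_n)\rVert=16n+1$). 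Then every choice of one definer from each pair $\{D^X_i,D^Y_i\}$ yields a minimal inconsistent set, so $\deSigmaPart$ has $2^n$ axioms, role forgetting of $s$ produces the $2^n$ pairwise set-incomparable axioms $\neg A_1^\ast\sqcup\ldots\sqcup\neg A_n^\ast$ with $A_i^\ast\in\{A_i,\overline{A_i}\}$, each of length $n$; since no definer occurs positively afterwards, the remaining definer axioms are deleted and substitution plays no role, giving $\lVert\gm_{\Sigma_n}(\Omc_n)\rVert=n\cdot 2^{n+1}$. Note also that your sketch aims at $n$ axioms with exponentially long right-hand sides, whereas what actually falls out of the r-Rule here is exponentially many short axioms; either shape could in principle yield $n\cdot 2^{O(\longcl)}$, but without a concrete construction and the accompanying size computation the second half of the proposition remains unproven.
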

We will see in Section~\ref{sec:eval} that this theoretical bound is usually not reached in practice, and usually 
general modules are much smaller than the input ontology.

For some module extraction methods, such as for locality-based
modules \cite{grau2008modular}, iterating the computation can lead to smaller
modules.
The following result shows that this is never the case for our method.

\begin{restatable}{proposition}{propgmMono}\label{prop:gmMono}
Let 
$(\mathcal{M}_i)_{i\geq 1}$ be the sequence of ontologies defined by
(i)~$\mathcal{M}_1 = \gm_\Sigma(\mathcal{O})$ and (ii) $\mathcal{M}_{i+1} =
\gm_\Sigma(\mathcal{M}_i)$ for $i\geq 1$. Then, we have 
$\mathcal{M}_i\subseteq \mathcal{M}_{i+1}\text{ for }i\geq 1.$ 
Moreover, there exists $i_0\geq 0$ s.t. $\mathcal{M}^k = \mathcal{M}^{i_0}
\text{ for all }k\geq i_0$.
\end{restatable}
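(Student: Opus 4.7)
I would prove the two assertions separately.

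\smallskip\noindent\emph{Inclusion $\mathcal{M}_i\subseteq\mathcal{M}_{i+1}$.} I argue by induction on $i$, reducing both the base case and the inductive step to a single claim: for any $\mathcal{N}$ that is itself the output of $\gm_\Sigma$, every axiom of $\mathcal{N}$ is re-produced by $\gm_\Sigma(\mathcal{N})$. Fix $\alpha\in\mathcal{N}$ and trace $\alpha$ through the pipeline of Figure~\ref{schema:ALC} applied to $\mathcal{N}$: the normalization $\cl(\mathcal{N})$ contains a set of clauses that jointly encode $\alpha$, each complex sub-concept of $\alpha$ appearing under a role restriction being represented by a fresh definer. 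The plan is to check that these encoding clauses either survive in $\cl_\Sigma(\mathcal{N})$ or are compensated by suitable entries of $\mathcal{D}_\Sigma(\mathcal{N})$, then combined by the r-Rule and A-Rule into clauses from which the final definer substitution and the simplifying rewrites of Theorem~\ref{main_theo} reconstruct~$\alpha$.

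\smallskip\noindent\emph{The main obstacle} lies in the role-forgetting stage: when a definer $D$ in the encoding of $\alpha$ uses a role $s\notin\Sigma$, the clause $\neg D\sqcup C$ defining $D$ is excluded from $\cl_\Sigma(\mathcal{N})$. The key observation is that $\mathcal{D}_\Sigma(\mathcal{N})$ supplies the compensating conflict clauses: by Lemma~\ref{theo-preprocessing} these are exactly the definer-inconsistencies entailed by $\cl(\mathcal{N})$, which is precisely the information the r-Rule needs to eliminate $s$ and rebuild the $s$-free skeleton of $\alpha$. Our running example illustrates this mechanism: when computing $\mathcal{M}_2=\gm_\Sigma(\mathcal{M}_1)$, the axiom $A_2\sqcap\exists s.B_1\sqsubseteq A_3\in\mathcal{M}_1$ is recovered thanks to $\neg D_3\sqcup\neg D_4\in\mathcal{D}_\Sigma(\mathcal{M}_1)$, where $D_3$ encodes $B_1$ and $D_4$ encodes $\neg B_1$, so the r-Rule eliminates $s$ from $\neg D_1\sqcup\exists s.D_3$ and $\neg A_2\sqcup\forall s.D_4\sqcup A_3$, and definer substitution then yields $\alpha$ back verbatim.

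\smallskip\noindent\emph{Stabilization.} Observe first that $\gm_\Sigma$ never introduces truly fresh names: the only new symbols are definers, which are eliminated by the substitution $D\mapsto C_D$, and each $C_D$ is a sub-concept of the input. Hence $\sig(\mathcal{M}_{i+1})\subseteq\sig(\mathcal{M}_i)\subseteq\sig(\Omc)$. A parallel induction on role nesting shows that the maximum role depth of $\mathcal{M}_i$ is bounded by that of $\mathcal{M}_1$, hence by that of $\Omc$, since normalization produces axioms of role depth $1$ over definers, forgetting does not increase depth, definer substitution restores at most the depth of the input, and the rewrites in Theorem~\ref{main_theo} clearly preserve depth. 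Over a fixed finite signature with bounded role depth, there are only finitely many syntactically distinct axioms (identifying disjunctions and conjunctions modulo commutativity and idempotence, as is done in the normal form). Therefore the weakly increasing chain $\mathcal{M}_1\subseteq\mathcal{M}_2\subseteq\cdots$ admits only finitely many strict inclusions and must stabilize at some index $i_0$.
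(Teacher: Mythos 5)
Your overall strategy is the same as the paper's (show that every axiom of $\mathcal{M}_i$ is regenerated when the pipeline is replayed on $\mathcal{M}_i$, then conclude stabilization from monotonicity plus a finiteness bound), but as written there are two genuine gaps. First, the heart of the inclusion claim is asserted rather than proved. The conflict clauses in $\mathcal{D}_\Sigma(\mathcal{N})$ are \emph{not} by themselves "precisely the information the r-Rule needs": when $\alpha\in\mathcal{N}$ contains a literal $\neg C_D$ with $C_D=\exists s.C_1$ and $s\notin\Sigma$, normalizing $\alpha$ only yields a clause with $\forall s.D'$ (where $C_{D'}=\neg C_1$), and the r-Rule cannot fire without an \emph{existential} premise $C\sqcup\exists s.D''$ over the same role. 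That premise exists only because $\mathcal{N}$, being an output of $\gm_\Sigma$, must also contain a "companion" axiom with a positive occurrence $\quant r'.C_D$: in the previous round the clause carrying $\neg D$ would have been deleted by $\texttt{conE}_\Sigma$ unless $D$ occurred positively under a role restriction in a surviving axiom, and the definer-substituted form of that axiom is in $\mathcal{N}$. Normalizing this companion supplies $\exists s.D''$ with $C_{D''}=C_1$, the entailment $\cl(\mathcal{N})\models D'\sqcap D''\sqsubseteq\bot$ then puts the needed conflict clause into $\mathcal{D}_\Sigma(\mathcal{N})$, and only then does the r-Rule reproduce the $s$-free clause. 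This provenance argument (the paper carries it out by a case analysis on the syntactic shape of $C_D$ — atomic, single role restriction, conjunction, general disjunction — reusing the argument of Proposition~\ref{prop_specialO} for the atomic case) is the missing idea; your formulation "every output of $\gm_\Sigma$ is reproduced" is only usable once you exploit how each axiom and its definers were produced in the preceding round, which your sketch never does beyond the running example.

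Second, your stabilization argument is incorrect as stated: a finite signature together with bounded role depth does \emph{not} yield finitely many axioms modulo commutativity and idempotence, because the nesting depth of $\sqcap/\sqcup$ is unbounded (already at role depth $0$ over $\{A,B\}$ there are infinitely many pairwise non-identified concepts such as $A\sqcap(A\sqcup(A\sqcap B))$, etc.). What actually bounds the chain is the structural invariant established in the inclusion proof: every axiom of every $\mathcal{M}_i$ is a disjunction of literals of the forms $A$, $\neg A$, $\neg C_D$, $\quant r.C_D$, where $C_D$ ranges over the (finitely many) sub-concepts of the original $\Omc$; hence only finitely many distinct axioms can ever appear, and the increasing chain $\mathcal{M}_1\subseteq\mathcal{M}_2\subseteq\cdots$ reaches a fixpoint. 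Without that invariant (which your depth-and-signature bookkeeping does not deliver, since it does not rule out that the pool of concepts occurring in literals grows from round to round), the finiteness claim does not follow.
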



This property holds thanks to the substitution step of Theorem \ref{main_theo}.
This step may reintroduce in $\gm_\Sigma(\Omc)$ concept and role names outside
of $\Sigma$.
As a result, the repeated application of $\texttt{rolE}_\Sigma$ and
$\texttt{conE}_\Sigma$ on $\gm_\Sigma(\Omc)$ can produce additional but
unnecessary axioms.
However, for ontologies in normal form, our method is \emph{stable} in the sense
that repeated applications produce the same ontology.

\begin{restatable}{proposition}{propspecialO}\label{prop_specialO}
Let $\mathcal{O}$ be an ontology in normal form and $\mathcal{M} = \gm_\Sigma(\Omc)$. Then,
$\gm_\Sigma(\mathcal{M})=\mathcal{M}$.
\end{restatable}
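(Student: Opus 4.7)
The plan is to prove the two inclusions $\mathcal{M} \subseteq \gm_\Sigma(\mathcal{M})$ and $\gm_\Sigma(\mathcal{M}) \subseteq \mathcal{M}$ separately. The first follows immediately from Proposition~\ref{prop:gmMono} applied with $i=1$. The substantive direction is the reverse inclusion, and this is where the normal-form hypothesis on $\mathcal{O}$ is essential.

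The key structural observation is that, since $\mathcal{O}$ is already in normal form, every definer $D$ introduced by $\cl(\mathcal{O})$ has $C_D \in \NC$: there are no complex subconcepts under role restrictions to abstract away. Writing $\Pi := \texttt{conE}_\Sigma(\texttt{rolE}_\Sigma(\OsplitForm))$ for the ontology produced just before the definer-substitution step, the passage from $\Pi$ to $\mathcal{M}$ therefore consists only of (i)~deleting subsumed axioms, (ii)~renaming each definer $D$ back to the concept name $C_D$, and (iii)~rearranging literals via the two simplification rewrites of Theorem~\ref{main_theo}. In particular, no new symbol outside $\sig(\cl(\mathcal{O}))\setminus\ND$ is introduced by these three steps.

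The heart of the argument is then to trace the second invocation of $\gm_\Sigma$ on $\mathcal{M}$ and show that each stage reproduces its first-iteration counterpart, up to a bijective renaming of definers. I would first verify that $\cl(\mathcal{M})$ coincides with $\Pi$ under a bijection that maps each fresh definer of the second normalization to the corresponding definer of $\cl(\mathcal{O})$; this relies on the fact that the two simplification rewrites are semantically reversed by normalization, together with the fact that every role restriction surviving in $\mathcal{M}$ is already applied to a concept name (so normalization only has to assign a fresh definer to each such concept name, paralleling the choices made in $\cl(\mathcal{O})$). Next, role isolation on $\cl(\mathcal{M})$ is trivial because, by Corollary~\ref{cor:con_equiv}, $\mathcal{M}$ uses only roles from $\Sigma$, so $\texttt{rolE}_\Sigma$ has no r-Rule premise to fire and deletes nothing. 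For concept forgetting, any A-Rule inference available on $\cl(\mathcal{M})$ corresponds under the definer bijection to an inference on $\Pi$ that was already exhausted in the first iteration, so no new clause is produced. The final subsumption deletion, definer substitution, and simplification then yield exactly $\mathcal{M}$ again.

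The main obstacle is the bookkeeping of the first step above: checking that $\cl(\mathcal{M})$ and $\Pi$ really do agree set-for-set once definers are identified. The delicate case is a defining axiom $\neg D \sqcup C_D \in \cl(\mathcal{O})$: after substitution in the first iteration it becomes the tautology $\neg C_D \sqcup C_D$, which is absorbed during the simplification of $\mathcal{M}$, while in the second iteration normalization reintroduces a fresh axiom $\neg D' \sqcup C_D$ whenever $C_D$ appears under a role restriction in $\mathcal{M}$. One has to verify that these fresh defining axioms play exactly the role that their predecessors played in $\Pi$, so that the resulting A-Rule and r-Rule closures coincide under the bijection. Once this correspondence is established, the remainder of the argument is a routine appeal to the saturation properties of the two inference rules.
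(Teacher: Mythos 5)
Your overall plan (split into two inclusions; observe that for normalized $\Omc$ every $C_D$ is a concept name, that $\sigR(\Mmc)\subseteq\Sigma$, and hence that role isolation and $\texttt{rolE}_\Sigma$ act trivially on $\cl(\Mmc)$) matches the paper's strategy, but the pivotal step is misstated and the real work is left unproven. The claim that $\cl(\Mmc)$ coincides, modulo a definer bijection, with $\Pi := \texttt{conE}_{\Sigma}(\texttt{rolE}_\Sigma(\OsplitForm))$ is false whenever $\sigC(\Omc)\not\subseteq\Sigma$: by construction $\sig(\Pi)\subseteq\Sigma\cup\ND$, whereas the definer substitution reintroduces forgotten concept names into $\Mmc$ (negatively and under role restrictions), so $\cl(\Mmc)$ contains fresh defining axioms $\neg D'\sqcup C_D$ and main axioms with literals $\neg C_D$ for $C_D\notin\Sigma$. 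A two-axiom example already breaks the claimed equality: for $\Omc=\{\neg A_1\sqcup\exists r.B,\ \neg B\sqcup A_2\}$ and $\Sigma=\{A_1,A_2,r\}$ one gets $\Pi=\{\neg A_1\sqcup\exists r.D,\ \neg D\sqcup A_2\}$ but $\cl(\Mmc)=\{\neg A_1\sqcup\exists r.D',\ \neg D'\sqcup B,\ \neg B\sqcup A_2\}$. Consequently your assertion that the second concept-forgetting pass "produces no new clause" is also wrong: the second $\texttt{conE}_\Sigma$ genuinely re-forgets the reintroduced names, and its new resolvents (e.g.\ $\neg D'\sqcup A_2$ above) are exactly what reconstructs the axioms of $\Mmc$ after substitution. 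Showing that this re-forgetting yields exactly $\Mmc$ — i.e.\ that every $c\in\Mmc$ is regenerated by replaying its A-Rule derivation using the fresh defining axioms together with the positive occurrences $C\sqcup\quant r.D$ that must survive in $\Pi$ (else the clause would have been deleted), and conversely that the restricted syntactic shapes of the axioms in $\cl(\Mmc)$ admit nothing beyond $\Mmc$ — is precisely the content of the proposition, and it is what you defer as "bookkeeping" and a "routine appeal to saturation properties". This is the paper's actual proof, so as it stands your proposal has a genuine gap rather than a complete argument.

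Two smaller points. First, Corollary~\ref{cor:con_equiv} is an inseparability statement and does not give $\sigR(\Mmc)\subseteq\Sigma$; that containment follows from $\sig(\Pi)\subseteq\Sigma\cup\ND$ together with $C_D\in\NC$ for normalized $\Omc$. Second, obtaining $\Mmc\subseteq\gm_\Sigma(\Mmc)$ by citing Proposition~\ref{prop:gmMono} inverts the dependency order used in the paper, whose proof of Proposition~\ref{prop:gmMono} reuses the arguments of this very proposition; to avoid an implicit circularity you should either prove that inclusion directly (it is the easier half of the replay argument) or make sure Proposition~\ref{prop:gmMono} is established independently first.
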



\begin{figure}
    \centering
   \begin{tcolorbox}
       \underline{\emph{conD-Elim}}:
   \begin{equation}\nonumber
      \frac{C_1\sqcup \quant r.D_1, \bigcup_{j=2}^{n}\{ C_j\sqcup \forall r.
D_j\}, \neg D_1\sqcup\ldots\sqcup \neg D_n}{C_1\sqcup \ldots\sqcup
C_n\sqcup\quant r. \bot}
    \end{equation}
   \begin{equation}\nonumber
 \underline{\emph{D-Prop}}:  \hspace{5mm}   \frac{C_1\sqcup \quant r.D,\quad
\bigcup_{j=2}^{n}\{ \neg D\sqcup C_j\}}{C_1\sqcup \quant r.(C_2
\sqcap\ldots\sqcap C_n)}\hspace{15mm}
    \end{equation}
    where $\bigcup_{j=2}^{n}\{ \neg D\sqcup C_j\}$ ($n\geq 1$) are all the axioms
of the form $\neg D\sqcup C$. This rule is applicable only if no 
$C_j$ contains definers. 

\end{tcolorbox}
    \caption{Rules to eliminate definers}
    \label{fig:D-Prop}
\end{figure}

\subsection{Optimizing the Result}\label{sec:elimD}
The general module $\gm_\Sigma(\Omc)$  may contain complex axioms since
definers $D$ can stand for complex concepts $C_{D}$.
To make the result more concise, we eliminate some definers before substituting
them. In particular, we use the following operations on
$\texttt{conE}_{\Sigma}(\texttt{rolE}_\Sigma(\OsplitForm))$, inspired
by~\cite{DBLP:conf/dlog/SakrS22}.
\begin{enumerate}
\item[Op1.] \textbf{Eliminating conjunctions
of definers} 
aims to eliminate
disjunctions of negative definers ($\neg D_1\sqcup \ldots\sqcup \neg
D_n$).
This is done in two steps: 
(i) Applying the \emph{conD-Elim} rule in Figure~\ref{fig:D-Prop} on
$\texttt{conE}_{\Sigma}(\texttt{rolE}_\Sigma(\OsplitForm))$, and then
(ii) deleting all axioms of the form $\neg D_1\sqcup \ldots\sqcup \neg D_n$.

\item[Op2.] \textbf{Eliminating single definers} aims to get rid of definers
$D$ that do not occur in axioms of the form $\neg D\sqcup \neg D_1\sqcup C$.
This is done in two steps: 
(i) applying the \emph{D-Prop} rule of Figure~\ref{fig:D-Prop}
exhaustively and then
(ii) deleting all axioms containing definers for which \emph{D-Prop}
has been applied.

\end{enumerate}

\begin{restatable}{theorem}{corgmOpti}\label{cor:gmOpti}
Let $\gm^*_\Sigma(\mathcal{O})$ be the ontology obtained by:
\begin{itemize}
    \item successive application of Op1 and Op2 over
$\texttt{conE}_{\Sigma}(\texttt{rolE}_\Sigma(\OsplitForm))$, followed by
    \item application of the steps described in Theorem~\ref{main_theo}.
\end{itemize}
Then, $\gm^*_\Sigma(\mathcal{O})$ is a general module for $\mathcal{O}$ and
$\Sigma$.
\end{restatable}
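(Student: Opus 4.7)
The plan is to factor the argument into two stages: first show that Op1 and Op2, applied to $\Omc_0 := \texttt{conE}_{\Sigma}(\texttt{rolE}_\Sigma(\OsplitForm))$, preserve both soundness ($\Omc \models$ result) and $\Sigma$-inseparability; then invoke the reasoning of Theorem~\ref{main_theo} on the resulting ontology to conclude that the final transformations (subsumption deletion, substitution $D \mapsto C_D$, and the two rewriting rules) produce a general module. The starting point is Corollary~\ref{cor:con_equiv}, which gives $\Omc_0 \equiv_\Sigma \Omc$, and our task reduces to showing that $\gm^*_\Sigma(\Omc)$ is $\Sigma$-inseparable from $\Omc_0$ and entailed by $\Omc_0$ (hence by $\Omc$).

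For Op1, I would first verify that the conD-Elim rule is sound: given $C_1 \sqcup \quant r.D_1$, $C_j \sqcup \forall r.D_j$ for $j = 2,\ldots,n$, and $\neg D_1 \sqcup \ldots \sqcup \neg D_n$, any element outside $C_1 \sqcup \ldots \sqcup C_n$ would require an $r$-successor in $D_1 \sqcap \ldots \sqcap D_n$, contradicting the last premise; so the element satisfies $\quant r.\bot$, and adding the conclusion preserves all models. The delicate point is that after exhaustive application, deleting the premise axioms $\neg D_1 \sqcup \ldots \sqcup \neg D_n$ preserves $\Sigma$-inseparability. Since these axioms only mention definers (outside $\Sigma$), their contribution to $\Sigma$-entailments can only occur through interaction with role-quantified axioms, and conD-Elim has already extracted all such interactions. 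I would formalize this via a model-extension argument: given a model $\Imc$ of the post-deletion ontology, construct a model $\Jmc$ agreeing with $\Imc$ on $\Sigma$ by appropriately redefining the definer extensions so that the deleted axioms hold.

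For Op2, soundness of D-Prop follows because the side condition ensures that $\{\neg D \sqcup C_j\}_{j=2}^n$ lists \emph{all} axioms with $\neg D$ as a disjunct and no $C_j$ contains definers, so $\Omc \models D \sqsubseteq C_2 \sqcap \ldots \sqcap C_n$, hence $\quant r.D \sqsubseteq \quant r.(C_2 \sqcap \ldots \sqcap C_n)$. After exhaustive application, deleting all remaining axioms that involve $D$ is justified by a similar model-extension: any model of the reduced ontology can be extended to one of the pre-deletion ontology by simply setting $D^\Imc := (C_2 \sqcap \ldots \sqcap C_n)^\Imc$; since all occurrences of $D$ in existential/universal contexts have already been rewritten, and $D$ is outside $\Sigma$, this redefinition agrees with $\Imc$ on $\Sigma$ and satisfies all deleted axioms.

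Once Op1 and Op2 have been shown to produce an intermediate ontology $\Omc^*$ with $\Omc \equiv_\Sigma \Omc^*$ and $\Omc \models \Omc^*$, the final stage mirrors Theorem~\ref{main_theo}: deletion of subsumed axioms preserves entailments in both directions; the substitution $D \mapsto C_D$ is sound because the normalization guarantees $\cl(\Omc) \equiv_{\sig(\Omc)} \Omc$ and $C_D$ remains definer-free once performed in topological order on $\preceq_d$; and the two shape-rewriting steps are logical equivalences. The main obstacle will be the model-extension arguments justifying the \emph{deletion} steps of Op1 and Op2: they require a precise bookkeeping of which definers still occur after saturation, and a verification that the reconstructed interpretations of definers do not spoil the remaining axioms. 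This is the crux because the completeness of conD-Elim and D-Prop in extracting all $\Sigma$-relevant consequences before deletion is what ultimately licenses dropping the premise axioms.
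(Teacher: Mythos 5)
Your treatment of Op2 (Ackermann-style: set $D:=C_2\sqcap\ldots\sqcap C_n$) and your final stage (replay the argument of Theorem~\ref{main_theo}) are essentially what the paper does, except that the paper also adjusts the auxiliary ontology $\mathcal{O}_D$ so that it only ranges over the definers still present after Op1/Op2. The genuine gap is exactly at the step you yourself single out as the crux: justifying the deletion of the pure-definer axioms in Op1 by ``redefining the definer extensions'' in a model of the post-deletion ontology. That model-extension claim is false in general, not merely delicate. Counterexample: let $\Sigma=\{A,B,r\}$ and consider the saturated set $N=\{\,A\sqcup\exists r.D_1,\ B\sqcup\forall r.D_2,\ \neg D_1\sqcup\neg D_2,\ A\sqcup B\sqcup\exists r.\bot\,\}$, where the last axiom is the (only) conD-Elim conclusion, and let $N'$ be $N$ without $\neg D_1\sqcup\neg D_2$. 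Take $\Imc$ with domain $\{a,b,c\}$, $A^\Imc=\{b,c\}$, $B^\Imc=\{a\}$, $r^\Imc=\{(a,c),(b,c)\}$, $D_1^\Imc=D_2^\Imc=\{c\}$; this is a model of $N'$. Any reinterpretation of $D_1,D_2$ that keeps the $\Sigma$-part fixed must put $c$ into $D_2$ (because $b\notin B^\Imc$ and $B\sqcup\forall r.D_2$) and also into $D_1$ (because $a\notin A^\Imc$, $A\sqcup\exists r.D_1$, and $c$ is $a$'s only $r$-successor), so $\neg D_1\sqcup\neg D_2$ cannot be restored; note that after $\texttt{rolE}_\Sigma$ all role names are in $\Sigma$, so you are not allowed to repair this by changing $r$. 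What the theorem needs is only deductive $\Sigma$-inseparability, which is strictly weaker than the same-domain model conservativity over the definers that your sketch would establish, and which in this example holds even though your construction fails.

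Consequently, the correctness of Op1 cannot be obtained by a pointwise redefinition of definers; it requires a completeness argument for the saturation with respect to \ALC $\Sigma$-consequences (or a model construction that is allowed to change the domain, e.g.\ by duplicating witnesses). The paper sidesteps this by invoking Theorem~5 of Sakr and Schmidt for the RP/Red calculus of Figure~\ref{fig:UI-rule}, observing that conD-Elim is an instance of the RP rule, so that applying RP/Red to $\texttt{conE}_{\Sigma}(\texttt{rolE}_\Sigma(\OsplitForm))$ and to the Op1-output yields the same ontology; Op2 is then handled exactly as you propose, via the Ackermann-lemma result of Zhao and Schmidt. Unless you either cite such a result or supply the missing completeness proof for the deletion step, your argument for Op1 does not go through, and with it the first half of your two-stage plan.
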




\begin{example}\label{exp:opti}
Assume $\Sigma = \{r, A, A_1\}$ and 
$$\mathcal{O} = \{A\sqsubseteq \forall r. \exists s. B_1, \ A_1 \sqsubseteq \forall r. \forall s.B_2, \ B_1\sqcap B_2\sqsubseteq \bot\}.$$
Then, $\texttt{conE}_{\Sigma}(\texttt{rolE}_\Sigma(\OsplitForm))$ is:
$$ \{\neg A\sqcup \forall r. D_1,\ \neg A_1 \sqcup \forall r. D_2,\ \neg D_1\sqcup \neg D_2\},$$
where $C_{D_1} = \exists s. B_1,\ C_{D_2} = \forall s.B_2$. And thus, by
replacing $D_i$ by $C_{D_i}$, we obtain $\gm_\Sigma(\Omc)=$
$$\{A\sqsubseteq \forall r. \exists s. B_1, \ A_1 \sqsubseteq \forall r. \forall
s.B_2,\ \exists s. B_1\sqcap \forall s.B_2\sqsubseteq \bot\},$$
which is actually more intricate than $\mathcal{O}$. 
We can  avoid this by applying the two optimizations described
above.

The elimination of definer conjunctions (Op1) produces
    \begin{equation}\label{firstOp}
    \{\neg A\sqcup \forall r. D_1, \neg A_1 \sqcup \forall r. D_2, \neg A\sqcup \neg A_1\sqcup \forall r. 
    \bot \}.\end{equation}
(i) The first step of Op1  applies the conD-Elim inference:
 \begin{equation}\nonumber
      \frac{\neg A\sqcup \forall r. D_1,
      \neg A_1 \sqcup \forall r. D_2, \neg D_1\sqcup \neg D_2}{\neg A\sqcup \neg
      A_1\sqcup \forall r. \bot}.
    \end{equation}
\noindent   
(ii) The second step of Op1 removes the axiom $\neg D_1\sqcup \neg D_2$.

   Then, the elimination of definers (Op2) produces
       \begin{equation}\label{SecondOp}
       \{\neg A\sqcup \forall r. \top, \neg A_1 \sqcup \forall r. \top, \neg
            A\sqcup \neg A_1\sqcup \forall r.\bot \}
    \end{equation}
   by replacing $D_1, D_2$ by $\top$ as there is no axioms with negative $D_i, i
= 1, 2$ in Equation (\ref{firstOp}).
    Note that  the first two axioms in Equation (\ref{SecondOp}) are tautologies
and thus can be ignored.

   Finally, we have $\gm^*_\Sigma(\mathcal{O}) = 
   \{A\sqcap A_1\sqsubseteq\forall r.\bot\}$.
    \end{example}


\section{Deductive Modules and Uniform Interpolants}\label{sec:dmAndUI}
\paragraph{Deductive modules}
Depending on the situation, users might prefer the axioms in the original
ontology $\Omc$ rather than newly introduced axioms (e.g., axioms in
$\gm_\Sigma(\Omc)$ or $\gm^*_\Sigma(\Omc)$).
For such situations, we can compute a deductive module for $\Omc$ and $\Sigma$ by
tracing back the inferences performed when computing
the general module
$\gm^*_\Sigma(\Omc)$.

Let $\Res_\Sigma(\Omc)$ be the set of all axioms generated by the computation
progress of  $\gm^*_\Sigma(\Omc)$.
Clearly, $\gm^*_\Sigma(\Omc)\subseteq \Res_\Sigma(\Omc)$. 
We iteratively construct a relation $R$ on $\Res_\Sigma(\Omc)$ during the computation $\gm^*_\Sigma(\Omc)$ as follows:
we start with $R=\emptyset$, and 
each time a new axiom $\beta$ is generated from a premise set $\{\alpha_1, \ldots, \alpha_n\}$ (e.g., if $\beta$ is obtained by applying r-Rule on
$\{\alpha_1, \ldots, \alpha_n\}$), we add to $R$ the relations $\alpha_1 R\beta$, \ldots, $\alpha_n R\beta$. 
Let $R^*$ be the smallest transitive closure of $R$. Then the deductive module is defined as follows.

\begin{restatable}{theorem}{theodeduct}\label{theo:deduct}
Let us define $\Dm_\Sigma(\Omc)$ by
\begin{equation}\nonumber
   \Dm_\Sigma(\Omc) = \{\alpha\in\Omc\mid \alpha R^* \beta \text{ for some }\beta\in
\gm^*_\Sigma(\mathcal{O})\}.
\end{equation}
Then, $\Dm_\Sigma(\Omc)$ is a deductive module for $\Omc$ and $\Sigma$.
\end{restatable}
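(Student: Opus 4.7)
The plan is to verify the two conditions of a deductive module, namely $\Dm_\Sigma(\Omc) \subseteq \Omc$ and $\Omc \equiv_\Sigma \Dm_\Sigma(\Omc)$. The subset condition is immediate from the definition of $\Dm_\Sigma(\Omc)$, and it directly gives $\Omc \models \Dm_\Sigma(\Omc)$ as well as the easy direction $\Dm_\Sigma(\Omc) \models \alpha \Rightarrow \Omc \models \alpha$ of $\Sigma$-inseparability for every $\Sigma$-axiom $\alpha$.

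For the non-trivial direction, I would show that $\Dm_\Sigma(\Omc) \models \gm^*_\Sigma(\Omc)$; combined with Theorem~\ref{cor:gmOpti}, which gives $\gm^*_\Sigma(\Omc) \equiv_\Sigma \Omc$, this yields $\Dm_\Sigma(\Omc) \equiv_\Sigma \Omc$. The key claim is that for every $\beta \in \Res_\Sigma(\Omc)$ the set $\Pi(\beta) := \{\alpha \in \Omc \mid \alpha R^* \beta\}$ entails $\beta$. I would prove this by induction on the derivation producing $\beta$. The base case concerns the axioms in $\cl(\Omc)$: each such axiom is entailed by the single axiom of $\Omc$ it originates from during normalization, which by construction is in $\Pi(\beta)$. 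For the inductive step, each rule used during the computation of $\gm^*_\Sigma(\Omc)$ (A-Rule, r-Rule, conD-Elim, D-Prop, subsumption deletion, the syntactic rewritings described in Theorem~\ref{main_theo}, and the definer-substitution $D \mapsto C_D$) is sound, so whenever the premises are each entailed by $\Pi(\beta)$, so is the conclusion.

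The main obstacle will be the definer-substitution step, because it replaces a concept name by a compound concept and this can only be justified using the definitional relationship $D \equiv C_D$ that holds in $\cl(\Omc)$ but not, in general, in $\Omc$. I would handle it with a model-extension argument: given any model $\Imc$ of $\Pi(\beta)$, define $\Imc'$ by letting $D^{\Imc'} = C_D^{\Imc}$ for every definer $D$ and keeping all other interpretations as in $\Imc$; a routine check shows that $\Imc'$ still satisfies $\Pi(\beta)$ and additionally satisfies every $\cl(\Omc)$-axiom that descends from an element of $\Pi(\beta)$ via normalization, since definers occur only in the controlled positions $\neg D$ and $\quant r.D$ of the normal form. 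By the induction hypothesis applied before substitution, $\Imc'$ satisfies the pre-substitution version of $\beta$; and since $D^{\Imc'} = C_D^{\Imc}$, this transfers to $\Imc \models \beta[D \mapsto C_D]$. Combining everything, every $\beta \in \gm^*_\Sigma(\Omc)$ is entailed by $\Pi(\beta) \subseteq \Dm_\Sigma(\Omc)$, finishing the argument.
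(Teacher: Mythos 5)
Your proposal is correct in substance but takes a genuinely different route from the paper. The paper's proof is a two-line argument at the level of the whole procedure: setting $\Mmc=\Dm_\Sigma(\Omc)$, it observes that $\gm^*_\Sigma(\Mmc)=\gm^*_\Sigma(\Omc)$, since $\Mmc$ contains exactly the $\Omc$-axioms used in the derivations producing $\gm^*_\Sigma(\Omc)$, so re-running the procedure on $\Mmc$ reproduces the same output; applying Theorem~\ref{main_theo} (more precisely Theorem~\ref{cor:gmOpti}, since $\gm^*$ is involved) to both $\Omc$ and $\Mmc$ then yields $\Mmc\equiv_\Sigma\gm^*_\Sigma(\Mmc)=\gm^*_\Sigma(\Omc)\equiv_\Sigma\Omc$, and $\Mmc\subseteq\Omc$ finishes the proof. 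You instead prove only the one entailment that is actually needed, $\Dm_\Sigma(\Omc)\models\gm^*_\Sigma(\Omc)$, by a rule-by-rule soundness induction along the tracked relation $R^*$, and then combine it with $\gm^*_\Sigma(\Omc)\equiv_\Sigma\Omc$ from Theorem~\ref{cor:gmOpti}. What your route buys is independence from the re-run identity $\gm^*_\Sigma(\Mmc)=\gm^*_\Sigma(\Omc)$, which the paper asserts without proof and which is not entirely innocent, since the procedure contains deletion steps and globally conditioned rules (the minimality condition in $\deSigmaPart$, the side condition of D-Prop, subsumption deletion) whose behaviour could in principle change on a subset of the input; what it costs is the lengthier bookkeeping over normalization and every inference rule.

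One caveat on your write-up: the induction invariant as you first state it is too strong, and the stated base case is literally false. An axiom of $\cl(\Omc)$ containing a definer, e.g.\ $\neg D\sqcup C_D$, is \emph{not} entailed by the $\Omc$-axiom it originates from, because $\Omc$ places no constraint on the fresh name $D$; for the same reason, intermediate derived axioms mentioning definers are not entailed by $\Pi(\beta)$. The model-extension idea you introduce only for the substitution step must therefore be made the invariant throughout: for every model $\Imc$ of $\Pi(\beta)$, the extension $\Imc'$ with $D^{\Imc'}=(C_D)^{\Imc}$ satisfies $\beta$, for every $\beta$ in the saturation. With that invariant, your base case (satisfaction of the normalized descendants of $\Pi(\beta)$ in $\Imc'$), the soundness of the A-Rule, r-Rule, conD-Elim, D-Prop and the rewritings, and the final transfer back to $\Imc$ for the definer-free axioms of $\gm^*_\Sigma(\Omc)$ all go through exactly as sketched in your last paragraph, so the repair is routine.
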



\paragraph{Uniform interpolants}
While general modules can be a good alternative to uniform interpolants for
ontology reuse, 
uniform interpolation has applications 
that require the ontology to be fully in the selected signature, as stated in the introduction. 
If instead of substituting definers $D$ by $C_D$, we eliminate them using existing uniform interpolation tools, we can compute a uniform interpolant for the input.
%
\patrick{Maybe move these use cases to the introduction.}

When computing $\gm^*_\Sigma(\Omc)$ for an ontology $\mathcal{O}$ and 
signature $\Sigma$, if all definers have been eliminated by Op1 and Op2
from Section~\ref{sec:elimD}  (as in Example~\ref{exp:opti}), then
$\gm^*_\Sigma(\Omc)$ is indeed 
a uniform interpolant for $\mathcal{O}$ and
$\Sigma$.
Otherwise,
we compute a uniform interpolant by forgetting the
remaining definers using an existing uniform
interpolation tool such as~\Lethe or
\Fame~\cite{koopmann2020lethe,zhao2018fame}. 
As we see in
Section~\ref{sec:eval}, this allows us to compute uniform interpolants much
faster
than using 
the tool alone.

\section{Evaluation}\label{sec:eval}

To show that our general modules can serve as a better alternative for ontology reuse and analysis, 
we compared them 
with the state-of-the-art tools implementing module extraction and uniform interpolation 
for \ALC. 
We were also interested in the impact of our optimization, and the performance of our technique
for computing deductive modules and uniform interpolants.
We 
implemented a prototype called \ourM in Python 3.7.4.  
As evaluation metrics, we looked at
run time, length of computed ontologies, and length of largest axiom in the
result.
\patrick{We should also publish our experimental data, maybe on zenodo. The source code could also be 
published on github.}
All the experiments were performed on a machine with an Intel Xeon Silver 4112 2.6GHz, 64
GiB of RAM, Ubuntu 18.04, and OpenJDK 11.
\patrick{Since you are using Java in your experiments as well, you should
mention which JVM you used (there are massive differences in performance.)}
\hui{like this? openjdk 11.0.17}
\paragraph{Corpus} The ontologies used in our experiment are generated from the
OWL Reasoner Evaluation (ORE) 2015 classification track
\cite{DBLP:journals/jar/ParsiaMGGS17} by the two following steps.
First, we removed axioms outside of $\mathcal{ALC}$ from each ontology in ORE~2015.
Then, we kept the ontologies $\mathcal{O}$ for which
$\cl(\mathcal{O})$ contained between 100 and 100,000 names.
This resulted in 222 ontologies.

\paragraph{Signatures} For each ontology, we generated 50 signatures consisting
of 100 concept and role names. As in~\cite{DBLP:conf/ijcai/KoopmannC20}, we selected each
concept/role name with a probability  proportional to their occurrence
frequency in the ontology. In the following, a \emph{request} is a pair consisting of an ontology
and a signature.
%

\patrick{Wait a moment - you measured 1.13 seconds on average for your
computation of general modules. Does this not include the preprocessing time??
That would not be a fair comparison...}
\patrick{Also, it would be interesting to also know the times \emph{per
ontology}, rather than just per role. }


\paragraph{Methods}
For each
request $(\mathcal{O}, \Sigma)$, \ourM produced three different (general)
modules
$\gm_\Sigma(\Omc)$, $\gm^*_\Sigma(\Omc)$  and $\Dm_\Sigma(\Omc)$,
respectively
denoted by \GM, $\GM^*$, and \DM.
\GMLETHE denotes the uniform interpolation method described
in Section~\ref{sec:dmAndUI}, where we used \ourM for computing $\GM^*$ and then  \Lethe for definer forgetting. In the implementation, for each request, we first extracted a locality based \emph{\starM}~\cite{grau2008modular} to accelerate the computation. This is a common practice also followed by the uniform interpolation and deductive module extraction tools used in our evaluation.
Since removing subsumed axioms as mentioned in Theorems~\ref{main_theo} and \ref{cor:gmOpti} can be challenging, we set a time limit of 10s for this task.
\patrick{The description of redundancy elimination is now mentioned in the section on general modules, and can go there. But we should still mention the time out. (Would be interesting to know how often it was used in the end. We should investigate this after we submitted the paper.)}
\hui{Sorry, I don't record this number.}

We compared our methods with four different alternatives:
\textbf{(i)}~\emph{\starM}s \cite{grau2008modular} as implemented in the OWL API \cite{OWL_API};
\patrick{Version of the OWL API?}
\hui{I use a existing jar file. I don't know the version of OWL API.}
\textbf{(ii)}~\minM \cite{DBLP:conf/ijcai/KoopmannC20} that computes
\emph{minimal deductive modules} under $\ALCH^\nabla$-semantics;
\textbf{(iii)}~\Lethe~
0.6\footnote{\url{https://lat.inf.tu-dresden.de/~koopmann/LETHE/}}%
\cite{koopmann2020lethe}  and \Fame~1.0\footnote{\url{http://www.cs.man.ac.uk/~schmidt/sf-fame/}} \cite{zhao2018fame}
that compute uniform interpolants. 



\paragraph{Success rate} We say a method \emph{succeeds} on a request if it
outputs the expected results within~600s.
Table~\ref{table_success} summarizes the success rate for the methods considered.
After the \starM s, our method \ourM had the highest success rate.
\begin{table}[]
    \centering     
    \small
     \setlength{\tabcolsep}{2pt}
    \begin{tabular}{c c cc cc}
    \toprule
         \starM &  \minM & \Lethe & \Fame &\ourM& \GMLETHE\\
        \midrule
        {\color{red}100}\%&  84.34\% & 85.27\% & 91.25\% & {\color{blue}97.34}\%& 96.17\%\\ 
    \bottomrule
    \end{tabular}
    \caption{Success rate evaluation. The {\color{red}first} (resp. {\color{blue}second}) best-performing method
is highlighted in {\color{red}red} (resp. {\color{blue}blue}).}
    \label{table_success}
\end{table}
\paragraph{Module length and run time}
Because some of the methods can change the shape of axioms, the number of axioms is not a good metric for understanding the quality of general modules.
We thus chose to use ontology length as defined in 
Section~\ref{sec:prelimGM}, rather than size, for our evaluation. 
Table \ref{exp:mainCompare} shows the length and run time for the requests on which all 
methods were successful ($78.45\%$ of all requests).

We observe that $\DM$ and \GMLETHE have the best
overall performance: their results had a substantially smaller average length and were computed significantly faster than others. 
Note that the average size of results for $\DM$ was even smaller than that for $\minM$. 
The reason is that $\minM$ preserves entailments over $\ALCH^\nabla$, while we preserve only entailments over $\ALC$.
%
Therefore, the $\minM$ results may contain additional axioms compared to the $\ALC$ deductive modules.


Comparing $\GM$ and $\GM^*$ regarding length lets us conclude that the
optimization in Section~\ref{sec:elimD} is effective.
On the other hand, \minM produced results of small length but at
the cost of long computation times. \Fame and  \starM\ were quite time-efficient but  
less satisfactory in  size, especially for \Fame, whose results are often considerably
larger than for the other methods. \Lethe took more time than \Fame,
but produced more concise uniform interpolants on average.
\patrick{Since this paper is about our method and not about comparing the state of the art,
we may leave the comparison of existing methods out if space becomes an issue. }
\hui{We have space now.}

For $87.87\%$ of the requests reported in Table~\ref{exp:mainCompare},  $\GM^*$ already computed a uniform interpolant, so that \GMLETHE did not need to perform any additional computations. 
\newcommand{\tabincell}[2]{\begin{tabular}{@{}#1@{}}#2\end{tabular}}

\begin{table}[]
    \centering
      \small
    \setlength{\tabcolsep}{0.5pt}
    \begin{tabular}{c cc c}
    \toprule
    \multicolumn{2}{c}{Methods} & Resulting ontology length & Time cost\\
    \midrule
        \multicolumn{2}{c}{\minM} &   {\color{red}2,355} / 392.59 / 264 & 595.88 / 51.82   / 8.86\\
  \rowcolor{gray!10} \multicolumn{2}{c}{\starM} & 4,008 / 510.77 / 364 &  {\color{red}5.94} / {\color{red}1.03} / {\color{red}0.90}  \\
    \multicolumn{2}{c}{\Fame} & 9,446,325 / 6,661.01 / 271 &   526.28 / 3.20 / {\color{blue}1.17}\\
     \rowcolor{gray!10}    \multicolumn{2}{c}{\Lethe} & 131,886 / 609.30 / 196  &  598.20 / 49.21 / 13.57\\
 \multirow{3}*{ \ourM} & \GM &  179,999 / 2,335.05 / 195 & \multirow{3}*{{\color{blue}17.50} / {\color{blue}2.44} / 1.63}\\
   & $\texttt{gm}^*$ & 21,891 / 466.15 / {\color{blue}166} \\
    & \texttt{dm} &{\color{blue}2,789} /  {\color{blue}366.36} / 249\\
    \rowcolor{gray!10}   \multicolumn{2}{c}{\GMLETHE}& 21,891 / {\color{red}364.10} /{\color{red}162} &  513.15 / 3.08 / 1.68\\
  \bottomrule
    \end{tabular}
    \caption{Comparison of different methods (max. / avg. / med.).}
    \label{exp:mainCompare}
\end{table}

Figure \ref{figure:eval} provides a detailed comparison of \minM, $\GM^*$ and
\GMLETHE. $\GM^*$ was often faster but produced larger results. 
In contrast, \GMLETHE produced more concise results at the cost of longer computation time. 
While \minM avoided large modules, it was generally much slower than our methods. 
  \begin{figure}[]
     \centering
     \includegraphics[width = 7cm]{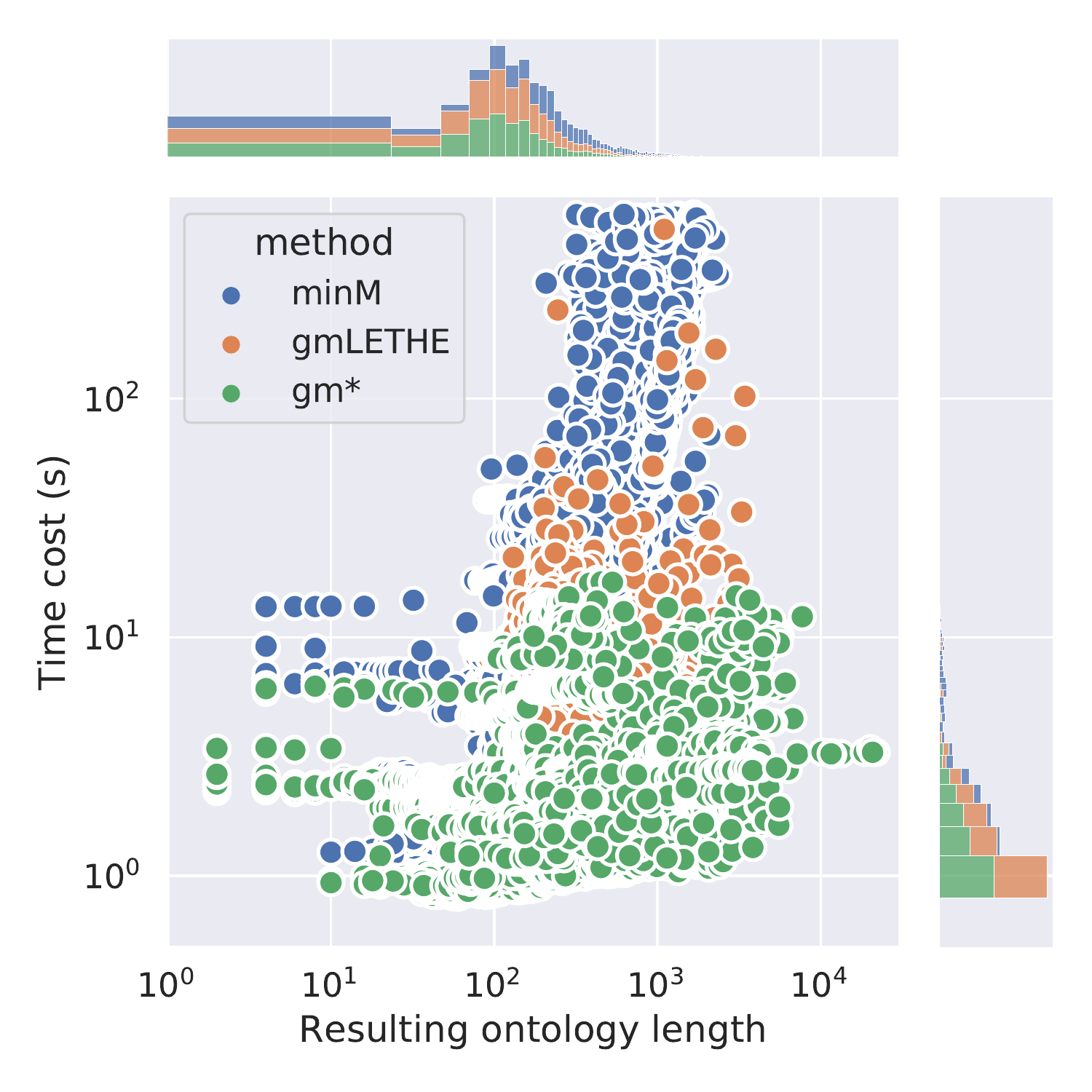}
     \caption{
     Comparison of   \texttt{minM}, \texttt{gmLETHE}, and $\texttt{gm}^*$. 
     }
     \patrick{What I do not understand is why blue does not occur at all on the bar charts on the right. Shouldn't the amount of each color be the same for all methods?}
     \hui{It is here, just being too small. You can find it by zooming in.}
     \hui{I comment out the logarithmic scale because it is shown in the axis of the figure}
     \label{figure:eval}
 \end{figure}

Table \ref{exp:mainOurs} summarizes the results concerning all requests for which \ourM (resp. \GMLETHE) was successful. We see that the results of $\DM$ had a small average size. However, as for $\GM^*$ and $\GMLETHE$, the median size of results was much smaller, which suggests that  $\GM^*$ and $\GMLETHE$ perform better over relatively simple cases. 
\begin{table}[]
    \centering
    \small
      \setlength{\tabcolsep}{2.5pt}
    \begin{tabular}{c cc c}
    \toprule
    \multicolumn{2}{c}{Methods} & Resulting ontology length & Time cost\\
\midrule
 \multirow{3}*{ \ourM} & \GM &  17,335,040 / 35,008.2 / 310 & \multirow{3}*{585.97 / 4.89 / 1.75}\\
   & $\texttt{gm}^*$ & 2,318,878 / 2,978.77 / 214 \\
    & \texttt{dm} & 18,218 / 638.74 / 309\\
    \rowcolor{gray!10} \multicolumn{2}{c}{\GMLETHE}& 353,107 / 1,006.34 /192 &  579.70 / 7.56 / 2.02\\
  \bottomrule
    \end{tabular}
    \caption{\ourM\  and \GMLETHE : Summary of results for all their own successful experiments (max. / avg. / med.).}
    \label{exp:mainOurs}
\end{table}
\paragraph{Uniform interpolants} 
For $80.23\%$ of requests where \ourM was successful, 
$\gm^*_\Sigma(\Omc)$ was already a uniform interpolants. 
In the other cases, 
 the success rate for \GMLETHE was $93.96\%$.  
In the cases where \Lethe failed, the success rate for \GMLETHE was $36.23\%$. 
\patrick{This sentence is hard to parse: do you mean: take all requests for which 
$\GM^*$ produced no uniform interpolant and \Lethe timed out. Of those, \GMLETHE was successful 
in $25\%$ of cases? The following numbers might be easier to grasp:
1. in the cases where $\GM^*$ did not produce a uniform interpolant, 
what was the success rate for \GMLETHE?
2. in the cases where \Lethe timed out, what was the success rate for \GMLETHE?
}

The comparison of \Lethe with \GMLETHE in
Figure~\ref{figure:gmlethe} shows that \GMLETHE was significantly faster than
\Lethe in most of the cases.
  \begin{figure}
     \centering
     \includegraphics[width =5.5cm]{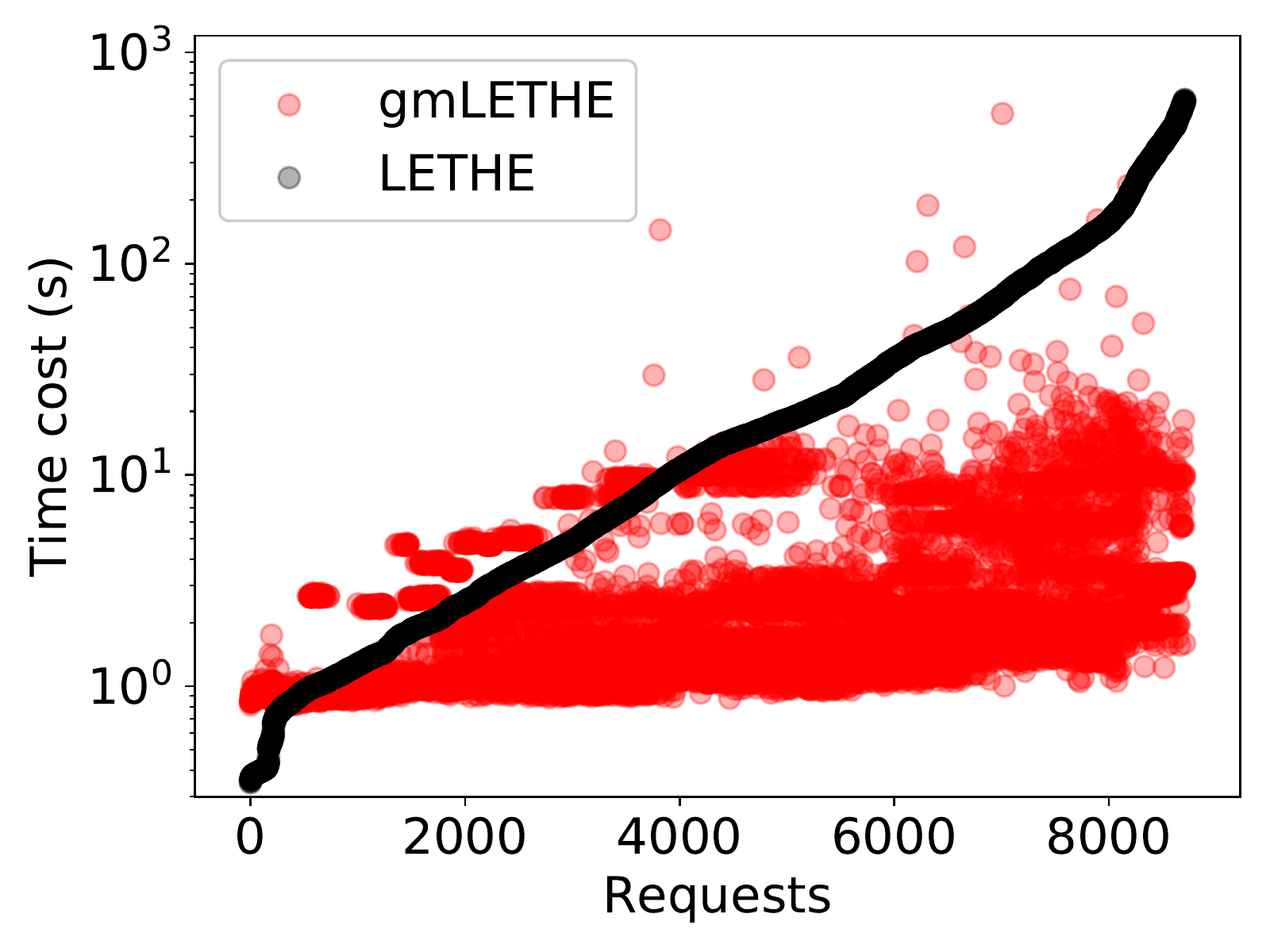}
     \caption{Run time comparison of \Lethe and \GMLETHE.}
     \label{figure:gmlethe}
 \end{figure}

\paragraph{Axiom size}
A potential shortcoming of general modules compared to classical modules is that they 
could contain axioms that are more complex than those of the input, and thus 
be harder to handle by human end-users. 
For the requests reported in Table~\ref{exp:mainCompare}, the largest axiom in the 
output of \minM had length 352, while for $\GM^*$, it had length 5,815, and for $\GM$, even only 56. 
In contrast, for the uniform interpolants computed by \Lethe and \Fame, the situation was much worse: 
here, the largest axiom had a length of 26,840 and 130,700, respectively, which is clearly beyond what can be understood by a human end-user. 
Besides these extreme cases, we can also observe differences wrt. the median values:
for $\GM^*$ the longest axiom had a median length of 3, 
which is even lower than the corresponding value for 
\minM (5),  and, as expected, lower than for \Lethe (4) and \Fame (6).
This indicates that, in most cases,
general modules computed by $\GM^*$ are simple than for the other tools.
\patrick{At this point, one would really like to know the situation for \GMLETHE. Do we know the numbers there?}

\patrick{Subsection on preprocessing: rerun }

\section{Conclusion}\label{sec:concluGM}
We presented new methods for 
computing 
general modules for \ALC ontologies, which can also be  used for computing deductive modules and uniform interpolants. Due to its higher syntactical flexibility, our general modules are often smaller and less complex than both classical modules and uniform interpolants computed with the state-of-the-art, which makes them particularly useful for applications such as ontology reuse and ontology analysis.
Our method is based on a new role isolation process that enables efficient role forgetting and an easy definer elimination. 
The experiments on real-world ontologies validate the efficiency of our proposal and the quality of the computed general modules. 
In the future, we want to optimize the concept elimination step to obtain more concise general modules. Also, we would like to investigate how to generalize our ideas to more expressive DLs.

\section*{Acknowledgements}
Hui Yang, Yue Ma and Nicole Bidoit are funded by the BPI-France (PSPC AIDA: 2019-PSPC-09) and ANR (EXPIDA: ANR-22-CE23-0017). Patrick Koopmann is partially funded by DFG Grant 389792660 as part of TRR~248---CPEC (see \url{https://perspicuous-computing.science})

\bibliographystyle{named}
\bibliography{bibliography}

\appendix

\newpage
\onecolumn
\patrick{Remark: I deactivate double columns to make it easier to read this
part. I don't know whether there are any requirements for the appendix, but we
can always switch back to double column later.
}

\section{Proofs of Results from the Paper}
The order in which we prove the results differs slightly from the order in which they appear in the main text. In particular,
since the proof  of Proposition~\ref{prop:gmMono} depends on some arguments used for Proposition~\ref{prop_specialO}, we show it after the proof for Proposition~\ref{prop_specialO}.
\patrick{Check: still called corollary?}

\subsection{Lemma \ref{theo-preprocessing}}

\newcommand{\NP}{\mathsf{N_P}\xspace}

Fix an ontology $\Omc$, and fix a set $\NP\subseteq
\NC$ of concept names such that
$\NP\cap \sig(\Omc) =\NP\cap \ND = \emptyset$. We
will use the concept names $P\in \NP$ as \emph{placeholders} to
substitute literals of the from $\quant r.D$ in $\cl(\Omc)$. In particular, we
normalize the axioms in $\cl(\Omc)$ further so that every axiom is of one of the
following forms.
\begin{enumerate}
\item $\neg P\sqcup \quant r. D$, where $\quant\in\{\forall,
\exists\}$, and $P\in\NP$;

\item or $L_1\sqcup\ldots\sqcup L_n$, where each $L_i$ is of the forms $A$, or
$\neg B$, where $A\in \NC\setminus\ND,\ B\in \NC\setminus\NP$. 
\end{enumerate}
\hui{I modified the   form of axioms again. We do not allow literals of the form $D, D\in\textsf{N}_D$}
Denote the resulting ontology by $\cl_P(\mathcal{O})$. Similar as with the
normalization introduced in Section~\ref{sec:normalize}, we can ensure that
$\cl_P(\mathcal{O})\equiv_{\sig(\cl(\Omc))}\cl(\Omc)$.

  \begin{table}[t]
    \centering
   \begin{tcolorbox}
  \begin{align*}
    &\mathbf{R^+_A}~ \frac{}{H\sqsubseteq A}: A\in H \qquad\qquad\mathbf{R^-_A}
\frac{H\sqsubseteq N\sqcup A}{H\sqsubseteq N}: \neg A\in H\\[0.7mm]
 &\mathbf{R^n_{\sqcap}}~ \frac{\{H\sqsubseteq N_i\sqcup
A_i\}_{i=1}^n}{H\sqsubseteq \sqcup_{i=1}^n N_i\sqcup N} : (\bigsqcup_{i=1}^n
\neg A_i)\sqcup N\in cl_P(\Omc)\\[0.7mm]
 &\mathbf{R^+_\exists}~ \frac{H\sqsubseteq N\sqcup P}{H\sqsubseteq N\sqcup
\exists r.D} :  \neg P\sqcup \exists r. D\in cl_P(\Omc)\\[0.7mm]
 &\mathbf{R^\bot_\exists}~ \frac{H\sqsubseteq N\sqcup \exists r.K, K\sqsubseteq
\bot}{H\sqsubseteq N}\\[0.7mm]
 &\mathbf{R_\forall}~ \frac{H\sqsubseteq N\sqcup \exists r.K, H\sqsubseteq
N_1\sqcup P}{H\sqsubseteq N\sqcup  N_1\sqcup \exists r.(K\sqcap D)} : \neg P
\sqcup \forall r.D\in cl_P(\Omc)
    \end{align*}
    \end{tcolorbox}
    \caption{Adaptation of the calculus from
\protect\cite{DBLP:conf/ijcai/SimancikKH11}
to the syntax in $\cl_P(\Omc)$.}
    \label{inference rules}
\end{table}

In order to prove Lemma~\ref{theo-preprocessing}, we make use of the inference
calculus introduced in~\cite{DBLP:conf/ijcai/SimancikKH11}, which can be seen in
Table~\ref{inference rules}, adapted to our syntax of normalized axioms. Here, $H$ and $K$ stand for conjunctions of
concept names ( i.e, $A_1\sqcap\ldots \sqcap A_n$), and $N$ for
a disjunction of concept names  ( i.e, $A_1\sqcup\ldots \sqcup A_n$). We write
$$\Omc\vdash_P  H\sqsubseteq \bot$$ if $H\sqsubseteq \bot$ can be
derived from $\Omc$ using the rules in Table~\ref{inference rules}. We have the
following theorem as the adaptation of
\cite[Theorem~1]{DBLP:conf/ijcai/SimancikKH11} to $\cl_P(\mathcal{O})$.
\begin{theorem}
\label{theo:condor}
Let $\Omc$ be an ontology and $H$ be any conjunction of
concept names. Then, $cl_P(\Omc)\models H\sqsubseteq \bot$ iff
$cl_P(\Omc)\vdash_P H\sqsubseteq \bot$.
\end{theorem}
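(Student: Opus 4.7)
The plan is to derive this theorem by lifting the soundness and completeness of the Simancik et al.~(2011) consequence-based calculus for \ALC through a syntactic translation. Since the normal form $cl_P(\Omc)$ differs from theirs only by the introduction of placeholder names $P\in\NP$ standing for literals of the form $\quant r. D$, the two calculi should correspond step by step once placeholders are expanded.

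For soundness ($\Leftarrow$), I would verify each inference rule in Table~\ref{inference rules} semantically. Rules $\mathbf{R^+_A}$ and $\mathbf{R^-_A}$ are trivial, since $A\in H$ means $H$ entails $A$ and $\neg A\in H$ means the disjunct $A$ is vacuous. Rule $\mathbf{R^n_{\sqcap}}$ is propositional resolution against an axiom of $cl_P(\Omc)$. Rules $\mathbf{R^+_\exists}$ and $\mathbf{R_\forall}$ apply a single axiom of the form $\neg P\sqcup \quant r. D$, in the universal case together with a known existential successor whose label is accumulated, and $\mathbf{R^\bot_\exists}$ contracts an $r$-successor whose label is unsatisfiable. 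A one-line semantic check suffices for each.

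For completeness ($\Rightarrow$), I would invoke the completeness result of Simancik et al. Let $cl'(\Omc)$ be obtained from $cl_P(\Omc)$ by expanding every placeholder, i.e., folding each literal $\quant r. D$ declared by an axiom $\neg P\sqcup \quant r. D$ back into wherever $P$ occurs positively in the other axioms and then deleting these defining axioms. The resulting ontology lies in the exact normal form of Simancik et al.\ and is logically equivalent to $cl_P(\Omc)$ modulo the placeholders, so by their Theorem~1 any entailment $cl'(\Omc)\models H\sqsubseteq \bot$ has a derivation in their calculus. I would then show, by induction on the length of that derivation, that each step can be mirrored in our adapted calculus by first lazily introducing, whenever the original rule would produce a literal $\quant r. D$, the corresponding placeholder $P$ via the axiom $\neg P\sqcup \quant r. D$, and then instantiating $P$ via $\mathbf{R^+_\exists}$ or $\mathbf{R_\forall}$.

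The main obstacle is the bookkeeping around placeholders: one must ensure that every occurrence of $\quant r. D$ appearing in an intermediate judgement $H\sqsubseteq N$ of the simulated derivation can be obtained from the expansion of a suitable $P$, and conversely that placeholders never leak into the left-hand conjunctions $H$. This should follow by a careful invariant on the shape of derived judgements, but may require a small strengthening of the induction hypothesis to track the correspondence between placeholder occurrences and their expansions uniformly across rule applications.
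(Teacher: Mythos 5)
Your $\Leftarrow$ direction (soundness) is unproblematic: each rule in Table~\ref{inference rules} is easily verified semantically, as you sketch. The genuine gap is in the $\Rightarrow$ direction, and it lies in the very first step of your reduction: the translation goes the wrong way. Expanding the placeholders---substituting $\exists r.D$ or $\forall r.D$ back for the positive occurrences of each $P$ and deleting the axioms $\neg P\sqcup\exists r.D$, $\neg P\sqcup\forall r.D$---returns you to $cl(\Omc)$-style clauses in which role restrictions occur as disjuncts of arbitrary clauses (e.g.\ $\neg A_1\sqcup\exists r.D_1\sqcup\exists r.D_3$). That is \emph{not} the normal form of \cite{DBLP:conf/ijcai/SimancikKH11}: their normal form confines role restrictions to axioms of the dedicated shapes $A\sqsubseteq\exists r.B$, $A\sqsubseteq\forall r.B$, $\exists r.A\sqsubseteq B$, and their Theorem~1 applies only to such ontologies. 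So the derivation you intend to simulate (a derivation of $H\sqsubseteq\bot$ from the expanded ontology in their calculus) is not supplied by the result you invoke, and the lazy-placeholder-introduction simulation has nothing to start from. The placeholder construction is designed for exactly the opposite direction: $cl_P(\Omc)$, with the $P$'s read as ordinary concept names, already \emph{is} an ontology in their normal form (one containing no axioms $\exists r.A\sqsubseteq B$), and the calculus of Table~\ref{inference rules} is precisely their calculus restricted to this fragment --- the one rule whose side condition involves $\exists r.A\sqsubseteq B$ axioms is vacuous here, which is why it is absent from the table.

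For comparison: the paper gives no proof of Theorem~\ref{theo:condor} at all; it records the statement as the instance of \cite[Theorem~1]{DBLP:conf/ijcai/SimancikKH11} obtained for ontologies of the shape of $cl_P(\Omc)$. Once you orient the translation correctly, your argument collapses to the same observation: since the placeholders are genuine concept names, $cl_P(\Omc)\models H\sqsubseteq\bot$ is an ordinary entailment about a normal-form ontology in their sense, so both directions follow from their soundness and completeness theorem without any induction over derivations and without the invariant you anticipate needing about which occurrences of $\exists r.D$/$\forall r.D$ stem from which $P$ (the worry about placeholders leaking into $H$ is also moot, since $H$ may be any conjunction of concept names). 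The only points worth checking explicitly when citing their result are that the query shape $H\sqsubseteq\bot$ for an arbitrary conjunction $H$ of concept names is covered and that omitting the rule for $\exists r.A\sqsubseteq B$ axioms does not affect completeness on this fragment; both are immediate. Your rule-by-rule semantic check remains a perfectly good self-contained way to obtain the soundness half if you prefer not to cite it.
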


In the following, we write
$$cl_P(\Omc)\vdash \neg D_1\sqcup \ldots \sqcup \neg D_n$$ if we can derive
$\neg D_1\sqcup \ldots \sqcup \neg D_n$ from $cl_P(\Omc)$ using applications
of the A-Rule and the r-Rule.
Before we show Lemma~\ref{theo-preprocessing}, we need an auxiliary lemma.
\begin{lemma}\label{claim1}
Let $N= A_1\sqcup \ldots \sqcup A_m$ be an axiom s.t.
$A_j\in \textsf{N}_C$ for each $1\leq j\leq m$, let $D_1, \ldots,
D_n\in\textsf{N}_D$ be definers, and assume
$cl_P(\Omc)\vdash_P D_1\sqcap\ldots \sqcap D_n\sqsubseteq  N$. Then,
$D_1\sqcap\ldots\sqcap D_n\sqsubseteq N$ is a tautology, or
$cl_P(\Omc)\vdash N'$, where $N'$ is a disjunction over some subset of
$\{\neg D_1,\ldots, \neg D_n, A_1, \ldots, A_n\}$.
%
\end{lemma}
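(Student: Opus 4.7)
My plan is to proceed by induction on the length of the derivation $cl_P(\Omc)\vdash_P D_1\sqcap\ldots\sqcap D_n\sqsubseteq N$ in the calculus of Table~\ref{inference rules}, with case analysis on the last rule applied. I would first dispose of the easy cases: rules $R^+_\exists$ and $R_\forall$ leave a role restriction in the conclusion and so cannot be the final rule when the target $N$ is a disjunction of concept names; rule $R^-_A$ is inapplicable because $H=D_1\sqcap\ldots\sqcap D_n$ contains only positive definers; rule $R^+_A$ forces $N$ to be a single concept name lying in $\{D_1,\ldots,D_n\}$, so $D_1\sqcap\ldots\sqcap D_n\sqsubseteq N$ is immediately a tautology.

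For the case $R^n_\sqcap$, I first observe that the side-condition axiom $\alpha=\neg A'_1\sqcup\ldots\sqcup\neg A'_k\sqcup N_\alpha\in cl_P(\Omc)$ must be of type~2, since a type-1 axiom $\neg P\sqcup\quant r.D$ would leave an unabsorbed role restriction in the conclusion. Applying the inductive hypothesis to each premise $H\sqsubseteq N'_i\sqcup A'_i$ gives, for each $i$, either a tautology (in which case $A'_i$ or $N'_i$ is already entailed by $H$) or a derivation by A-Rule and r-Rule from $cl_P(\Omc)$ of a clause over a subset of $\{\neg D_1,\ldots,\neg D_n,A_1,\ldots,A_m,A'_i\}$. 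Chaining these clauses with $\alpha$ through A-Rule steps that resolve away the $A'_i$'s produces the required $N'$.

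The principal obstacle will be the case $R^\bot_\exists$, whose premise $H\sqsubseteq N\sqcup\exists r.K$ carries a role restriction that the simpler calculus can only eliminate through the r-Rule. To handle this, I strengthen the induction hypothesis: along with every derived axiom $H\sqsubseteq N_0\sqcup\exists r.K$ with $K=D'_1\sqcap\ldots\sqcap D'_t$, I maintain a witness consisting of (i)~axioms $\gamma_1=\neg P_1\sqcup\exists r.D'_1\in cl_P(\Omc)$ and $\gamma_j=\neg P_j\sqcup\forall r.D'_j\in cl_P(\Omc)$ (for $2\leq j\leq t$) that introduced the successive conjuncts via $R^+_\exists$ and $R_\forall$, together with (ii)~a derivation by A-Rule and r-Rule from $cl_P(\Omc)$ of a clause of the form $\neg P_1\sqcup\ldots\sqcup\neg P_t\sqcup N_0^*$, where $N_0^*$ is a disjunction over a subset of $\{\neg D_1,\ldots,\neg D_n,A_1,\ldots,A_m\}$ that entails $N_0$ modulo $H$. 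This invariant is straightforward to preserve through $R^+_\exists$, $R_\forall$, and also through $R^n_\sqcap$ when it acts on clauses containing a role restriction.

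When the final step is then $R^\bot_\exists$, the inductive hypothesis applied to $K\sqsubseteq\bot$ --- whose left-hand side is a conjunction of definers, so the main statement directly applies --- yields a derivation by A-Rule alone of some clause $\neg D'_{i_1}\sqcup\ldots\sqcup\neg D'_{i_k}$, which matches precisely the $K_D$ side condition of the r-Rule. A single r-Rule inference with the witness axioms $\gamma_1,\gamma_{i_2},\ldots,\gamma_{i_k}$ then eliminates the role restriction and produces a clause over $\neg P_1,\neg P_{i_2},\ldots,\neg P_{i_k}$ (plus literals already in the $\gamma$'s); composing this with the witness derivation from (ii) via A-Rule steps on the $P_j$'s finally yields $N'$ as a disjunction over the required subset of $\{\neg D_1,\ldots,\neg D_n,A_1,\ldots,A_m\}$. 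The main bookkeeping difficulty, which I expect to be the real obstacle, is precisely tracking these witness axioms cleanly through the role-restriction-manipulating rules, which is why the induction has to be strengthened beyond the literal statement of the lemma.
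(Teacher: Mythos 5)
Your overall route is the same as the paper's: induction on the length of the $\vdash_P$-derivation, disposing of $R^+_A$, $R^-_A$, $R^+_\exists$, $R_\forall$ as you do, handling $R^n_\sqcap$ by resolving the side-condition axiom against the clauses given by the inductive hypothesis, and handling $R^\bot_\exists$ by relating the existential premise back to the type-1 axioms $\neg P_j\sqcup\quant r.D'_j$ that introduced the conjuncts of $K$, using the inductive hypothesis on $K\sqsubseteq\bot$ to obtain the $K_D$ side condition of the r-Rule. The paper does not strengthen the induction hypothesis; instead it observes that in this calculus a clause $H\sqsubseteq N\sqcup\exists r.K$ can only arise from one $R^+_\exists$ inference followed by $R_\forall$ inferences, and at the $R^\bot_\exists$ step it simply unwinds this sub-derivation and applies the \emph{plain} inductive hypothesis to the side premises $H\sqsubseteq N_j\sqcup P_j$ (their right-hand sides are disjunctions of concept names, since $P_j\in\NC$), which makes your strengthened invariant unnecessary.

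However, the invariant you do state is flawed, and as written the $R^\bot_\exists$ case would fail. Your item (ii) asks for a derivation, by A-Rule and r-Rule, of a clause $\neg P_1\sqcup\ldots\sqcup\neg P_t\sqcup N_0^*$ with \emph{negative} placeholder literals. In $cl_P(\Omc)$ negative placeholders occur only in the type-1 axioms $\neg P_j\sqcup\quant r.D'_j$, so deriving such a clause without role restrictions already requires an r-Rule step over $r$, i.e.\ a clause $\neg D'_{i_1}\sqcup\ldots\sqcup\neg D'_{i_k}$ over the fillers --- exactly the information that only becomes available from $K\sqsubseteq\bot$ at the final $R^\bot_\exists$ step; so the invariant cannot be maintained when the existential clause is first derived. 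Moreover, your own closing step produces, via the r-Rule on the $\gamma$'s, another clause whose placeholder literals are all negative, and the A-Rule cannot resolve two clauses that both contain $P_j$ only negatively. What your combination actually needs (and what the paper's unwinding provides) is, for each $j$, a derivable clause with \emph{positive} $P_j$ and remaining literals from $\{\neg D_1,\ldots,\neg D_n,A_1,\ldots,A_m\}$, obtained by applying the inductive hypothesis to $H\sqsubseteq N_j\sqcup P_j$; these are then resolved against the negative-placeholder clause (or, as in the paper, first resolved with the type-1 axioms and then closed by one r-Rule application). With that correction your argument becomes essentially the paper's proof. Two smaller points: the inductive hypothesis on $K\sqsubseteq\bot$ gives derivability with A-Rule \emph{and} r-Rule, not "A-Rule alone"; and in the $R^n_\sqcap$ case the subcase where a premise $H\sqsubseteq N'_i\sqcup A'_i$ is a tautology with $A'_i$ a definer needs an explicit argument (in the paper it yields that the side-condition axiom itself is already of the required form $N'$), which your sketch leaves open.
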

\patrick{I think in the main text, you only say "axiom" and never "clause" - this should be consistent in both parts of the text. In general, I wouldn't object to called the normalized axioms clauses. But then we should introduce this notation in Section 3, and be consistent throughout the text.}
\hui{I changed all clauses to axiom.}
\begin{proof}
Let $k_P(D_1\sqcap\ldots \sqcap D_n\sqsubseteq N)$
be the minimal number of applications of rules from
Table~\ref{inference rules} required to derive $D_1\sqcap\ldots \sqcap
D_n\sqsubseteq N$.
 We prove the lemma by induction on $k_P(D_1\sqcap\ldots \sqcap D_n\sqsubseteq
N)$.

\begin{enumerate}
\item If $k_P(D_1\sqcap\ldots \sqcap D_n\sqsubseteq N)=1$, then we have $N = D_i$ for some $1\leq i\leq n$. In this case, the lemma holds directly.
\item Assume that the lemma holds for all $D_1'$, $\ldots$, $D_{n'}'$, $N'$
s.t.  $k_P(D_1'\sqcap\ldots \sqcap D_{n'}\sqsubseteq
N')<k_0$ for some $k_0\geq 1$.

We show that the lemma also holds for $D_1$, $\ldots$, $D_n$, $N$ s.t.
$k_P(D_1\sqcap\ldots \sqcap
D_n\sqsubseteq N)=k_0$. For simplicity, let $H_D := D_1\sqcap\ldots \sqcap
D_n$, and let $$\rho_1, \ldots, \rho_{k_0}$$ be a sequence of inferences
generated by inference rules in Table \ref{inference rules} that derives
$H_D\sqsubseteq N$. Then there are two different cases, depending on which was
the last inference performed.
\begin{enumerate}
\item The last inference $\rho_{k_0}$ is of the form

$$ \frac{\{H_D \sqsubseteq N_i\sqcup A_i\}_{i=1}^n}{H_D\sqsubseteq
\bigsqcup_{i=1}^n N_i\sqcup N_0},\ \left(\bigsqcup_{i=1}^n \neg A_i\right)\sqcup
N_0 \in cl_P(\Omc).$$
In particular, $N = \bigsqcup_{i=1}^n N_i\sqcup N_0$. We show that we then also
have $\cl_P(\Omc)\vdash D_1\sqcap\ldots \sqcap D_n\sqsubseteq  N$.

If $H_D \sqsubseteq N_{i_0}\sqcup A_{i_0}$ is a tautology for some $1\leq i_0\leq n$, then one of $H_D \sqsubseteq N_{i_0},\ H_D \sqsubseteq A_{i_0}$ must be a tautology. There are two cases:

\textbf{(i)} If  $H_D \sqsubseteq N_{i_0}$  is a tautology, then $H_D \sqsubseteq N$ is also a tautology since $N_{i_0}$ is a sub-concept of $N$.  Therefore, the lemma holds for this case;

\textbf{(ii)} If  $H_D \sqsubseteq A_{i_0}$  is a tautology, then $A_{i_0}\in \{D_1,\cdots, D_n\}$ must be a definer. By the construction of $cl_P(\Omc)$, we have $n=1$ (in the formula of inference $\rho_{k_0}$) and thus $ \neg A_{i_0}\sqcup
N_0 \in cl_P(\Omc)$.  
Then $cl_P\models  \neg A_{i_0}\sqcup
N_0$.  Therefore, the lemma holds also for this case.

We obtain that the lemma holds for the case where $H_D \sqsubseteq N_{i_0}\sqcup A_{i_0}$ is a tautology for some $1\leq i_0\leq n$.

Now assume that $H_D \sqsubseteq N_i\sqcup A_i$ is not a tautology for any $1\leq i\leq n$.

Since $k(H_D\sqsubseteq N_i\sqcup A_i)<k_0$, by applying our
inductive hypothesis on $H_D\sqsubseteq N_i\sqcup A_i$, $i\in[1, n]$, we have
\begin{align*}
  &cl_P(\Omc)\vdash K_D^{i}\sqcup  N_i^{sub}\\
  \text{ or }\ \ &cl_P(\Omc)\vdash K_D^{i}\sqcup  N_i^{sub}\sqcup A_i
\end{align*}
for some $K_D^{i} = \bigsqcup_{j\in I_i}\neg D_j$,  $I_i\subseteq \{1,\ldots ,
n\}$ and $N^{sub}_i$ being a disjunction over some concept names
occurring in~$N_i$.  We distinguish those two cases.

\textbf{(i)} If $cl_P(\Omc)\vdash K_D^{i}\sqcup  N_i^{sub}$ for some $1\leq
i\leq n$, then $cl_P(\Omc)\vdash K_D^{i}\sqcup  N_i^{sub} $ is as desired.

\textbf{(ii)} Otherwise, $cl_P(\Omc)\vdash K_D^{i}\sqcup  N_i^{sub}\sqcup A_i$
for all $1\leq i\leq n$. By applying the A-Rule for all $A_i, 1\leq i\leq n$ on
\begin{align*}
K_D^{i}&\sqcup  N_i^{sub}\sqcup A_i,\ \ \ (1\leq i\leq n)\\
\text{ and }&(\bigsqcup_{i=1}^n \neg A_i)\sqcup N_0\in cl_P(\Omc),
\end{align*}
we obtain the desired conclusion
$$cl_P(\Omc)\vdash K_D \sqcup  N^{sub},  $$ 
where 
$$K_D= \bigsqcup_{1\leq i\leq n}K_D^{i},\text{ and }$$  
$$N^{sub}= \bigsqcup_{1\leq i\leq n}N^{sub}_{i}\sqcup N_0.$$

Therefore, the lemma holds for this case.
\item The last inference $\rho_{k_0}$ is generated by Rule~$\mathbf{R^\bot_\exists}$ and is of the form
$$\frac{H_D\sqsubseteq N\sqcup \exists r.K,\ K\sqsubseteq \bot}{H_D\sqsubseteq N}.$$
Note that $H_D\sqsubseteq N\sqcup \exists r.K$ must be obtained by applying 
\begin{itemize}
\item first an $\mathbf{R^+_\exists}$ inference of the form
$$ \frac{H_D\sqsubseteq N_0\sqcup P_0}{H_D\sqsubseteq N_0\sqcup \exists r.D_0'} :  \neg P_0\sqcup \exists r. D_0'\in cl_P(\Omc);$$

\item followed by $m$ $\mathbf{R_\forall}$ inferences of the form
$$\frac{H_D\sqsubseteq (\bigsqcup\limits_{i=0}^{j-1}N_i)\sqcup \exists r.K_{j-1},\ H_D\sqsubseteq N_j\sqcup P_j}{H_D\sqsubseteq (\bigsqcup\limits_{i=0}^{j}N_i)\sqcup \exists r.(K_{j-1}\sqcap D_j')},$$
where $\neg P_j \sqcup \forall r.D_j'\in cl_P(\Omc)$ and $K_j =
D_0'\sqcap\ldots \sqcap D_{j}'$ for $1\leq j\leq m$. Moreover, we have
$$N = (\bigsqcup\limits_{i=0}^{m}N_i),\ \ K = K_{m}.$$
\end{itemize}

By applying the inductive hypothesis on $K\sqsubseteq \bot$, we obtain
$$cl_P(\Omc)\vdash \bigsqcup_{i\in I^*} \neg D_i', \text{ for some } I^*\subseteq \{0,\ldots m\},$$
and by applying the inductive hypothesis on $H_D\sqsubseteq N_j\sqcup P_j$,
$j\in[0, m]$, we obtain
\begin{align*}
  &cl_P(\Omc)\vdash H_D^{j}\sqcup  N_j^{sub},\\
  \text{ or }\ \ &cl_P(\Omc)\vdash H_D^{j}\sqcup  N_j^{sub}\sqcup P_j,
\end{align*}
for some $H_D^{j} = \bigsqcup_{j\in I_i}\neg D_j$ with  $I_i\subseteq \{1,\ldots
, n\}$ and $N^{sub}_j$ a disjunction of concepts from $N_j$,

We again distinguish both cases.

\textbf{(i)} If $cl_P(\Omc)\vdash H_D^{j}\sqcup  N_j^{sub}$ for some $0\leq
j\leq m$, then $cl_P(\Omc)\vdash H_D^{j}\sqcup  N_j^{sub}$ directly holds.

\textbf{(ii)} Otherwise, $cl_P(\Omc)\vdash H_D^{j}\sqcup  N_j^{sub}\sqcup P_j$
for all $0\leq j\leq m$. By
  applying the A-Rules for all $P_j,\ 1\leq j\leq m$ on

\begin{align*}
H_D^0\sqcup N_0^{sub}\sqcup P_0, \ \ &\neg P_0\sqcup \exists r. D_0'\\
H_D^j\sqcup N_j^{sub}\sqcup P_j,\ \ &\neg P_j \sqcup \forall r.D_j',\\
&(1\leq j\leq m);
\end{align*}
and applying the r-Rule on
\begin{align*}
&\ \ \ \ \ \  \bigsqcup_{i\in I^*} \neg D_i',\\
&H_D^0\sqcup N_0^{sub}\sqcup \exists r. D_0',\\
&H_D^j\sqcup N_j^{sub}\sqcup \forall r.D_j',\\
&\ \ \ \ \ \ (j\in I^*\cap\{1,\ldots, m\}),
\end{align*}

we obtain
$$cl_P(\Omc)\vdash H_D \sqcup N^{sub},$$
where 
$$H_D = \bigsqcup_{j\in I^*\cup\{0\} }H_D^{j},\text{ and}$$
$$N^{sub}= \bigsqcup_{j\in I^*\cup\{0\}}N^{sub}_{j}.$$
\end{enumerate}
We obtain that the lemma also holds in this case.\qedhere
\end{enumerate}
\end{proof}

Using Theorem~\ref{theo:condor} and Lemma~\ref{claim1}, we can now prove Lemma~\ref{theo-preprocessing}.

\lemmaPreprocessing*
\begin{proof}
For any definers $D_1,\ldots, D_n\in\ND$, since $\cl_P(\mathcal{O})\equiv_{\sig(\cl(\Omc))}\cl(\Omc)$, we have $\cl(\Omc)\models D_1\sqcap\ldots \sqcap D_n\sqsubseteq \bot$ iff $\cl_P(\Omc)\models D_1\sqcap\ldots \sqcap D_n\sqsubseteq \bot$.

Note that we can exchange the order of application of a A-Rule on concept name $P\in \NP$ and other rules without influencing the final result. For instance:
the following two rules that produce $C_1\sqcup \ldots\sqcup C_n\sqcup \quant r. D$.

$$\text{(r-Rule):} \cfrac{C_1\sqcup P\sqcup \exists r. D_1,\  \bigcup_{j=2}^{n}\{ C_j\sqcup \forall r. D_j\},\ K_D}{C_1\sqcup \ldots\sqcup C_n\sqcup P},\ \ \text{(A-Rule on $P$):} \cfrac{C_1\sqcup \ldots\sqcup C_n\sqcup P, \ \neg P\sqcup \quant r. D}{C_1\sqcup \ldots\sqcup C_n\sqcup \quant r. D} $$

 We also obtain $C_1\sqcup \ldots\sqcup C_n\sqcup \quant r. D$  by following two rules, where  a A-Rule on $P\in \NP$ is applied first.
$$\text{(A-Rule on $P$):} \cfrac{C_1\sqcup P\sqcup \exists r. D_1, \ \neg P\sqcup \quant r. D}{C_1\sqcup \quant r. D \sqcup \exists r. D_1} ,\ \ \text{(r-Rule):} \cfrac{C_1\sqcup \quant r. D \sqcup \exists r. D_1,\   \bigcup_{j=2}^{n}\{ C_j\sqcup \forall r. D_j\},\ K_D}{C_1\sqcup \ldots\sqcup C_n\sqcup \quant r. D}$$

Therefore, when applying A-Rule and r-Rule on $\cl_P(\Omc)$, we could assume that A-Rules on concept names $P\in \NP$ are applied first. Since applying  A-Rules on concept names $P\in \NP$ on $\cl_P(\Omc)$ produces exactly the axioms in $\cl(\Omc)\setminus \cl_P(\Omc)$, we have $\cl(\Omc)\vdash \neg D_1\sqcup \ldots \sqcup \neg D_n$ iff $\cl_P(\Omc)\vdash \neg D_1\sqcup \ldots \sqcup \neg D_n$ for any definers $D_i\in\ND$. 

It is thus enough to show that  for any definers $D_i\in\ND$, 
$\cl_P(\Omc)\vdash_P D_1\sqcap\ldots \sqcap D_n\sqsubseteq \bot$ iff $\cl_P(\Omc)\vdash \neg D_{i_1}\sqcup \ldots \sqcup \neg D_{i_k}$ for some subset $ \{i_1,\ldots,i_k\}\subseteq \{1,\ldots , n\}$. 

We first prove the ``$\Leftarrow$'' direction. If $cl(\Omc)\vdash \neg D_{i_1}\sqcup \ldots \sqcup \neg D_{i_k}$, then we have $cl(\Omc)\models D_1\sqcap\ldots \sqcap D_n\sqsubseteq \bot$. Consequently, by Theorem~\ref{theo:condor}, $cl_P(\Omc)\models D_1\sqcap\ldots \sqcap D_n\sqsubseteq \bot$, and thus $cl_P(\Omc)\vdash_P D_1\sqcap \ldots \sqcap D_n\sqsubseteq\bot$.

The ``$\Rightarrow$'' direction  is a direct result of Lemma~\ref{claim1}.
\end{proof}

Moreover, we have the following lemma that will be used in the proof of Theorem \ref{theo:Sigma_normalized} in the next section.
\begin{restatable}{lemma}{corMr}
\label{corMr}
Let $t\in\NR\setminus\Sigma$ be a role name and $D_1$, $\ldots$, $D_n\in\ND$ be
definers s.t.
$\OsplitForm$ contains a literal of the form $\quant t.D_1$, and for $2\leq
i\leq n$, $\forall t.D_i$ occurs in $\OsplitForm$. Then,
$\cl(\mathcal{O})\models D_1\sqcap\ldots\sqcap
D_n\sqsubseteq\perp$ iff $\OsplitForm\models D_1\sqcap\ldots\sqcap
D_n\sqsubseteq\perp$.
\hideThisPart{$D_2$, $\ldots$, $D_n$ occur under value restrictions,
If definers $D_1,\ D_2,\ \ldots,\ D_n\in \sig(\OsplitForm)$ satisfy
$\bigcap_{i=1}^n \Rol(D_i,\OsplitForm)\neq \emptyset$, then we have
$\cl(\mathcal{O})\models D_1\sqcap\ldots\sqcap
D_n\sqsubseteq\perp$ iff $\OsplitForm\models D_1\sqcap\ldots\sqcap
D_n\sqsubseteq\perp$.
}
\end{restatable}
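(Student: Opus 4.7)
The plan is to prove the two implications separately. The easy direction ($\Leftarrow$) follows from the fact that $\cl(\Omc)$ entails every axiom of $\OsplitForm$: by construction, $\clSigmaPart \subseteq \cl(\Omc)$, and each axiom in $\deSigmaPart$ is of the form $\neg D_1'\sqcup\ldots\sqcup\neg D_m'$ where $\cl(\Omc)\models D_1'\sqcap\ldots\sqcap D_m'\sqsubseteq\bot$ by definition. Hence every model of $\cl(\Omc)$ is a model of $\OsplitForm$, and so the entailment $\OsplitForm\models D_1\sqcap\ldots\sqcap D_n\sqsubseteq\bot$ transfers.

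For the harder direction ($\Rightarrow$), assume $\cl(\Omc)\models D_1\sqcap\ldots\sqcap D_n\sqsubseteq\bot$. By Lemma~\ref{theo-preprocessing}, there is some subset $\{i_1,\ldots,i_k\}\subseteq\{1,\ldots,n\}$ such that $\neg D_{i_1}\sqcup\ldots\sqcup\neg D_{i_k}$ is derivable from $\cl(\Omc)$ using the A-Rule and r-Rule; in particular $\cl(\Omc)\models D_{i_1}\sqcap\ldots\sqcap D_{i_k}\sqsubseteq\bot$. Extract a minimal subset $J=\{j_1,\ldots,j_l\}\subseteq\{i_1,\ldots,i_k\}$ such that $\cl(\Omc)\models D_{j_1}\sqcap\ldots\sqcap D_{j_l}\sqsubseteq\bot$.

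The key step is to show that the axiom $\neg D_{j_1}\sqcup\ldots\sqcup\neg D_{j_l}$ lies in $\deSigmaPart$, which requires checking condition~(i) of Definition~\ref{def:SigmaForm}. We use the role $t\in\NR\setminus\Sigma$ from the hypothesis. By assumption, $\clSigmaPart$ contains a literal $\quant t.D_1$ and, for each $2\leq i\leq n$, a literal $\forall t.D_i$. Hence for every $j_p\in J$, the literal under $t$ is either $\quant t.D_1$ (if $j_p=1$) or $\forall t.D_{j_p}$; in either case, it has the form required by condition~(i). If $1\in J$, take $\quant_1=\quant$; otherwise, take $\quant_1=\forall$. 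By minimality of $J$, condition~(ii) is satisfied. Thus $\neg D_{j_1}\sqcup\ldots\sqcup\neg D_{j_l}\in\deSigmaPart\subseteq\OsplitForm$, which entails $D_{j_1}\sqcap\ldots\sqcap D_{j_l}\sqsubseteq\bot$ and, since $J\subseteq\{1,\ldots,n\}$, also $D_1\sqcap\ldots\sqcap D_n\sqsubseteq\bot$.

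The main obstacle is the bookkeeping in the second direction: one must carefully verify that, regardless of whether the minimal subset $J$ contains the index $1$ (the one whose literal may carry $\exists t$), the shape of the literals in $\clSigmaPart$ under the non-$\Sigma$ role $t$ still fits the template required by condition~(i) of the definition of $\deSigmaPart$. The fact that all $D_2,\ldots,D_n$ occur specifically under $\forall t$ (and not only under some role restriction) is crucial; otherwise one could have e.g.\ two literals $\exists t.D_i$ and $\exists t.D_j$, which would violate condition~(i).
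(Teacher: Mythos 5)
Your proof is correct and follows essentially the same route as the paper's: the ``$\Leftarrow$'' direction because every axiom of $\OsplitForm$ is either in $\cl(\Omc)$ or entailed by it, and the ``$\Rightarrow$'' direction by observing that a minimal inconsistent subset of $\{D_1,\ldots,D_n\}$ satisfies conditions (i) and (ii) of Definition~\ref{def:SigmaForm}, so the corresponding disjunction of negated definers already lies in $\deSigmaPart\subseteq\OsplitForm$. The paper states this second direction as following ``directly from the definition of $\deSigmaPart$''; you merely spell out the same argument (your detour through Lemma~\ref{theo-preprocessing} is harmless but unnecessary, since one can take a minimal inconsistent subset of $\{D_1,\ldots,D_n\}$ directly).
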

\begin{proof}
If $\OsplitForm\models D_1\sqcap\ldots\sqcap D_n\sqsubseteq\bot$, then also
$\cl(\mathcal{O})\models D_1\sqcap\ldots\sqcap D_n\sqsubseteq\bot$ because
$\OsplitForm$ consists only of axioms from $\cl(\Omc)$ or axioms that have been
derived from $\cl(\Omc)$. The other direction follows directly from the
definition of $\deSigmaPart$ in Definition~\ref{def:SigmaForm}.
\end{proof}

\hui{I just notice that it is enough to prove the lemma for the case
$\bigcap_{i=1}^n \Rol(D_i,\OsplitForm) = \{t\}\not\subseteq \Sigma$. Because we
change the proof of Theorem \ref{theo:Sigma_normalized} using Theorem
\ref{theo:r-forget}. This lemma is applied only in the Item 2 in proof of
Theorem \ref{theo:Sigma_normalized}.}
\patrick{With this additional side condition, and changing something else
that was not correct in the formulation of the lemma, the proof simplifies
completely.}
\hideThisPart{
\begin{proof}
Let $t$, $D_1$, $\ldots$, $D_n$ be as in the lemma.
We only have to prove: if $\cl(\mathcal{O})\models D_1\sqcap\ldots\sqcap
D_n\sqsubseteq\bot$ then $\OsplitForm\models D_1\sqcap\ldots\sqcap
D_n\sqsubseteq\bot$. The other direction follows directly from
Definition~\ref{def:SigmaForm}.
\hideThisPart{
Because in our normalization, we assume that different occurrences of concepts
are always replaced by distinct definers, we have that for each definer $D\in
\sig(\clSigmaPart)$, $\Rol(D, \clSigmaPart)$ contains exactly one role name.
\patrick{Check throughout the text: do we say ``role'' or ``role name''? If
different, make consistent.}
\hui{Corrected.}
Consequently, since $\bigcap_{i=1}^n \Rol(D_i,\OsplitForm)\neq
\emptyset$, there must exists a single role name $t\in\NR$ such that
\begin{align*}
    \bigcap_{i=1}^n \Rol(D_i,\OsplitForm) &= \Rol(D_1,\OsplitForm)\\
    &= \Rol(D_2,\OsplitForm)\\
    &\phantom{=\ } \vdots\\
    &= \Rol(D_n,\OsplitForm) \\
    &= \{t\}.
\end{align*}
Now if $t\not\in\Sigma$, then by the definition of $\deSigmaPart$ in
Definition~\ref{def:SigmaForm}, $\neg D_{i_1}\sqcup \ldots\sqcup \neg D_{i_n}\in
\deSigmaPart\subseteq \clSigmaPart$, where $\{i_1, \ldots i_m\}\subseteq \{1,
\ldots, n\}$.
\patrick{That is not fully correct: the definers may simply occur in different
ways. }
Therefore, the interesting case is where $t\in\Sigma$. Note that
 in this case, for no $i\in\{1,\ldots,n\}$, the definer $D_i$
occurs in $\deSigmaPart$.
}
If $t\not\in\Sigma$, then by the definition of $\deSigmaPart$ in
Definition~\ref{def:SigmaForm}, $\neg D_{i_1}\sqcup \ldots\sqcup \neg D_{i_n}\in
\deSigmaPart\subseteq \clSigmaPart$, where $\{i_1, \ldots i_m\}\subseteq \{1,
\ldots, n\}$. Therefore, the interesting case is where $t\in\Sigma$. Note that
 in this case, for no $i\in\{1,\ldots,n\}$, the definer $D_i$
occurs in $\deSigmaPart$.

Assume that $t\in\Sigma$. We then have $\Rol(D_i,\clSigmaPart)\subseteq \Sigma$
for all $1\leq i\leq n$.
Assume further $\cl(\mathcal{O})\models D_1\sqcap\ldots\sqcap D_n\sqsubseteq\bot$. By
Lemma~\ref{theo-preprocessing}, we can derive, using the A-Rule and the r-Rule, an axiom $\neg D_{i_1}\sqcup \ldots\sqcup \neg D_{i_k} \in
\mathcal{S}$, where $\{i_1,\ldots, i_k\}\subseteq \{1, \ldots, n\}$.
Without loss of generality, we assume that $\{i_1,\ldots, i_k\}= \{1, \ldots, n\}$. 

%
%
Consider a derivation tree $T$ showing the derivation of $\neg
D_1\sqcup\ldots\sqcup \neg D_n$ from
$\cl(\mathcal{O})$ using the A-Rule and the r-Rule as shown in
Figure~\ref{derivation-tree}.
Here, nodes correspond to axioms, and edges to applications of the A-rule or
the r-rule, so that the root is $\neg D_1\sqcup\ldots\sqcup D_n$ and the leafs
are axioms from $\cl(\Omc)$.
We create a new derivation tree $T_1$ by removing
all children of nodes of the form $\neg D_1'\sqcup \ldots\sqcap \neg D_m' \in \OsplitForm$ from $T$, so that those nodes become leafs. 
Clearly, the root of $T_1$ is still $\neg D_1\sqcup\ldots\sqcup \neg D_n$. We
argue that $T_1$ shows a derivation of $\neg D_1\sqcup\ldots\sqcup
\neg D_n$ from $\OsplitForm$.
For this, we show that every leaf node of $T_1$ occurs in
$\OsplitForm$.

\begin{figure}
\centering
\scalebox{0.8}{\begin{tikzpicture}
[
    block/.style ={rectangle, draw=black, thick, fill=white!20, text width=8em,align=center, rounded corners, minimum height=1.5em},
    level 1/.style = {black, sibling distance = 4.5cm},
    level 2/.style = {black, sibling distance = 1cm},
    level 3/.style = {black, sibling distance = 3.5cm},
        level 4/.style = {black, sibling distance = 1.5cm},
    edge from parent fork down
]
\node (root)[block]{ $\neg D_{i_1}\sqcup \ldots\sqcup \neg D_{i_k}$}
    child {node(c1)[block] {$\textit{axiom}_1$}
    edge from parent [<-]
        child {node {\large\bf$\ldots$}
            edge from parent [<-]
            child {node [block] {$\neg D'_1\sqcup\ldots\sqcup \neg D'_m$}
                edge from parent [<-]
                child {node {\large\bf$\ldots$}}
                child {node {\large\bf$\ldots$}}
                }
            child {node  {\scriptsize \bf $\vdots$}}
            }
        }
    child {node[block] {$\textit{axiom}_n$}
        edge from parent [<-]
        child {node {\scriptsize \bf $\vdots$ }
         }
        child {node {\scriptsize \bf $\vdots$ } }
    };
    \path (0,-0.95) node (f1) {\textcolor{teal}{r-Rule or A-Rule} };
    \path (0,-1.5) node (f2) {\large\bf$\ldots$};
    \path (-1.7,-4.5) node (f3) {\textcolor{teal}{$\in \deSigmaPart$} };

    \filldraw[thick,dashed, blue, fill=white,fill opacity=0] (-6,-5) -- (-6,0.5) -- (4,0.5) -- (4,-6.5) -- (-1.5,-6.5) -- (-1.5,-5) -- (-6,-5) -- (-6,-6.5)-- (-1.5,-6.5);
    \path (-5.5,0) node (f4) {\textcolor{red}{$T_1$} };
   \path (-5.5,-5.5) node (f4) {\textcolor{red}{Cut}};
\end{tikzpicture}}
    \caption{Derivation tree $T$ ($\{i_1,\ldots, i_k\}\subseteq \{1, \ldots, n\}$)}\patrick{Please update the picture - I believe the notation is not up-to-date anymore.}
    \patrick{Check: The $n$ on the axioms shouldn't be the same $n$ as for the definers, right? The root could have been inferred using less axioms. }
    \hui{Corrected}
    \label{derivation-tree}
\end{figure}

For a proof by contradiction, assume that $T_1$ has a leaf $\alpha'$ s.t.
$$\alpha'\in \cl(\mathcal{O})\setminus \OsplitForm.$$
Then, also $\alpha'\not\in\clSigmaPart$.
By the definition of $\clSigmaPart$, this means that $\alpha'$ must contain
some literal $\neg D_1'$ such that, for some definer $D^\dagger\in\sig(\cl(\Omc))$,
$D_1'\preceq_d D^\dagger$ and $\Rol(D^\dagger,\cl(\Omc))\not\subseteq \Sigma$.
In particular,
it means that for any definer $D'$ s.t. $D_1'\preceq_d D'\preceq_d D^\dagger$,
$\clSigmaPart$
also cannot contain an axiom with the literal $\neg D'$.
By the definition of $\preceq_d$, this means that, unless $D_1'=D^\dagger$,
there also cannot be a clause in $\clSigmaPart$ that
contains $D_1'$, which means that $D_1'\not\in\sig(\clSigmaPart)$.
\patrick{Unless $D_1'= D^\dagger$. Also: explain why!}
Note also that $D_1'$ cannot be a disjunct of the root in $T_1$:
 we already ruled out the case that the root contains disjuncts $D_i$ s.t. $D_i\in\sig(\deSigmaPart)$, but by the assumptions of the lemma, every $D_i$ must occur 
in $\OsplitForm=\clSigmaPart\cup\deSigmaPart$. It follows that $D_1'$ must contribute within the derivation in $T_1$ to an axiom that does not contain $D_1'$.

By our assumptions on $\cl(\Omc)$, there cannot be an axiom containing the positive literal $D_1'$.
Therefore, the only way to eliminate the literal $D_1'$ within the derivation
tree is using the r-Rule, which means that $\alpha'$ must contribute to the
derivation of an axiom $\neg D_1'\sqcup\ldots\sqcup\neg D_k'$ occurring in
$T_1$, s.t.
$\Rol(D_1',\cl(\Omc))=\Rol(D_2',\cl(\Omc))=\ldots=\Rol(D_k',\cl(\Omc))=\{r\}$.
We note that $r\in\Sigma$, since otherwise $\neg D_1'\sqcup\ldots\sqcup\neg
D_k'\in\deSigmaPart$, and $\alpha'$ would have been removed from $T$ together
with all other descendants of $\neg D_1'\sqcup\ldots\sqcup\neg D_k'$ when
creating $T_1$. One consequence of this is that, since
$\Rol(D_1',\cl(\Omc))\subseteq\Sigma$ and
$\Rol(D^\dagger,\cl(\Omc))\not\subseteq\Sigma$, we must have $D^\dagger\neq
D_1'$.
\patrick{This comes too late!}

$\neg D_1'\sqcup\ldots\sqcup \neg D_k'$ is now used in $T_1$ to infer a new axiom $\alpha''$ with the r-Rule. This new axiom must contain a literal of the form $\neg D'$, since we know that there is the definer $D^\dagger\neq D_1'$ s.t. $D_1'\preceq_d D^\dagger$, and we also know that $D'$, for the same reasons as for $D_1'$, cannot occur in $\clSigmaPart$ or in the root of $T_1$. 
But then, we can apply for $D'$ the same argument as for $D_1'$, and obtain that we can never reach the final conclusion $\neg D_1\sqcup\ldots\neg D_n$ of the derivation $T_1$, which means that our axiom $\alpha'$ cannot have been part of the derivation of $T_1$ to begin with. 
\patrick{I got it now, but I think my reformulated version of the proof is still
not very easy to follow. We should spell out what implications our preorder
$\preceq_d$ gives us on $\OsplitForm$. I think this may help. I will take care
of it at a later point. }

We obtain that every leaf node of $T_1$ occurs in $\OsplitForm$, and thus that $T_1$ shows how to derive $\neg D_1\sqcup\ldots\sqcup \neg D_n$
from $\OsplitForm$ using the A-Rule and the r-Rule.
As a consequence, $\OsplitForm\models D_1\sqcap\ldots\sqcap
D_n\sqsubseteq\bot$. 
%
%
\end{proof}
}

\subsection{Theorem \ref{theo:Sigma_normalized}}
\begin{figure}
    \centering
   \begin{tcolorbox}
\begin{align*}
 \textbf{r-Res}: &\frac{C_1\sqcup \exists r. D_1,\ C_2\sqcup \forall r. D_2,\ \ldots, C_n\sqcup \forall r. D_n}{C_1\sqcup \ldots\sqcup C_n},\\
&\text{ where }\mathcal{M}\models D_1\sqcap \ldots\sqcap D_n\sqsubseteq \bot, n\geq 1.
\end{align*}
    \end{tcolorbox}
    \caption{Rule for eliminating role name $r$.}
    \patrick{Why do you keep calling your rules ``resolution rule''? The
resolution rule is a specific rule for propositional or first-order that
resolves a negative literal in one axiom with a postive literal in the other
axiom. If your rule is not such a rule, I would just call it ``rule'' or
``inference rule''.}
\hui{Got it. I will rename them as rule}
    \label{fig:LETHE-rule}
\end{figure}

In order to prove Theorem~\ref{theo:Sigma_normalized}, we make use of a result for role 
forgetting from~\cite{DBLP:conf/aaai/ZhaoASFSJK19}, which describes the method for 
uniform interpolation used in \Fame. 
In~\cite{DBLP:conf/aaai/ZhaoASFSJK19}, the rule shown in Figure~\ref{fig:LETHE-rule} is used
for forgetting roles, which assumes the ontology to be in normal form, and uses a fixed 
ontology $\Mmc$ as a side condition.

Fix a signature $\Sigma\subseteq \sig(\Omc)$.
We denote by $\Res_\Sigma (\cl(\mathcal{O}), \mathcal{M} )$ the ontology
obtained by applying the following two operations on $\cl(\mathcal{O})$:
\begin{enumerate}
\item apply \textbf{r-Res} exhaustively for all role names
$r\in\sig(\Omc)\setminus\Sigma$, and
\item remove all axioms that contain a role name $r\in\sig(\Omc)\setminus\Sigma$.
\end{enumerate}

By~\cite[Lemma 3]{DBLP:conf/aaai/ZhaoASFSJK19}, we have the following result.
\begin{lemma}\label{theo:r-forget}
For any ontology $\mathcal{O}$ and signature $\Sigma\subseteq
\sig(\Omc)$, we have $\Res_\Sigma(\cl(\Omc), \cl(\Omc))\equiv_{\Sigma\cup
\sigC(\mathcal{O})} \cl(\Omc)$.
\end{lemma}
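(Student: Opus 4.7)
The plan is to reduce the statement to \cite[Lemma 3]{DBLP:conf/aaai/ZhaoASFSJK19}. The ontology $\cl(\Omc)$ is in the normal form on which that result operates, and our rule \textbf{r-Res} coincides with the role-elimination inference used there, once the side condition $\Mmc \models D_1 \sqcap \ldots \sqcap D_n \sqsubseteq \bot$ is instantiated with $\Mmc = \cl(\Omc)$. Hence the cited result yields $\Sigma \cup \sigC(\Omc)$-inseparability directly. For the paper to be self-contained, I would still verify soundness and completeness of the transformation at a high level, since this is the key ingredient for the results that follow in Section~\ref{sec:RolE0}.

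For soundness, I would argue that every single application of \textbf{r-Res} yields an axiom entailed by $\cl(\Omc)$. Fix a model $\Imc$ of $\cl(\Omc)$ that satisfies the premises of an inference, and fix $d \in \Delta^\Imc$. If $d \notin C_i^\Imc$ for all $i$, then the existential premise $C_1 \sqcup \exists r.D_1$ provides an $r$-successor $e$ with $e \in D_1^\Imc$, and the universal premises $C_i \sqcup \forall r.D_i$ force $e \in D_i^\Imc$ for $i = 2, \ldots, n$, contradicting the side condition $\cl(\Omc) \models D_1 \sqcap \ldots \sqcap D_n \sqsubseteq \bot$. Hence $d$ satisfies the conclusion $C_1 \sqcup \ldots \sqcup C_n$. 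Deleting axioms that mention forgotten roles only weakens the theory and therefore also preserves entailment.

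For completeness, I would show that any $\Sigma \cup \sigC(\Omc)$-consequence of $\cl(\Omc)$ is already a consequence of $\Res_\Sigma(\cl(\Omc), \cl(\Omc))$. The approach is to take an arbitrary model $\Jmc$ of the residual ontology and extend it to a model $\Imc$ of $\cl(\Omc)$ that agrees with $\Jmc$ on every name in $\Sigma \cup \sigC(\Omc)$. For each $d \in \Delta^\Jmc$ and each forgotten role $r \in \sigR(\Omc) \setminus \Sigma$, one determines which definers must be realised together at some fresh $r$-successor of $d$: for each existential premise $C \sqcup \exists r.D_1 \in \cl(\Omc)$ with $d \notin C^\Jmc$, one requires $D_1$ at a new successor together with every $D_i$ demanded by a matching universal premise $C_i \sqcup \forall r.D_i$ with $d \notin C_i^\Jmc$. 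Exhaustive saturation under \textbf{r-Res} is the crucial property here: if such a collection $\{D_1, \ldots, D_n\}$ were jointly inconsistent w.r.t. $\cl(\Omc)$, then the corresponding instance of \textbf{r-Res} would have produced $C_1 \sqcup \ldots \sqcup C_n$, which survived deletion and must be satisfied by $\Jmc$, contradicting $d \notin C_i^\Jmc$ for all $i$.

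The main obstacle will be verifying in detail that the witness construction is globally consistent, not merely locally. The delicate point is that a single fresh $r$-successor $e$ of $d$ must simultaneously satisfy all definers demanded at $d$ via $r$, and the element $e$ itself may subsequently generate further $r'$-successors for other forgotten roles $r'$, so the extension has to be iterated. The argument must show that this iteration converges to (or is realised by) a tree-shaped structure whose concept-name interpretations on $\Delta^\Jmc$ coincide with those of $\Jmc$, so that no $\Sigma \cup \sigC(\Omc)$-axiom is newly falsified. All of this is handled by the model-theoretic construction in \cite[Lemma 3]{DBLP:conf/aaai/ZhaoASFSJK19}, which one only needs to quote once the correspondence between the two rule formulations has been made explicit.
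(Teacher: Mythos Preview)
Your proposal is correct and matches the paper's approach exactly: the paper proves this lemma simply by citing \cite[Lemma 3]{DBLP:conf/aaai/ZhaoASFSJK19}, with no further argument. Your additional soundness and completeness sketch goes beyond what the paper provides, but it is sound and would make the appendix more self-contained.
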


\patrick{Since the only relevant case is where $\Mmc=\cl(\Omc)$, we can probably simplify notations here.}
\hui{The proof of theorem depends on the case $\Mmc\neq \cl(\Omc)$.}
\patrick{I see. }
Let 
$\Res_\Sigma(\cl(\mathcal{O}), \mathcal{M} )\big|_{\Sigma}$ be the sub ontology of 
$\Res_\Sigma (\cl(\mathcal{O}), \mathcal{M} )$
 that contains only those axioms $\alpha\in\Res_\Sigma (\cl(\mathcal{O}), \mathcal{M})$ that
satisfy:
\begin{itemize}
\item if $L:=\neg D$ is a literal of $\alpha$, then for any definer $D'\in \sig(\cl(\Omc))$ such that $D\preceq_d D'$, we have
 $\Rol\big(D', \cl(\Omc)\big)\subseteq\Sigma$.
\end{itemize}
To prove Theorem~\ref{theo:Sigma_normalized}, we also need the following lemma.

\begin{lemma}\label{lemma:restrictSigma}
    For any ontology $\mathcal{O}$ and signature
    $\Sigma\subseteq \sig(\Omc)$, we have $$
        \Res_\Sigma(\cl(\Omc),\cl(\Omc))
        \ \equiv_{\Sigma\cup \sigC(\mathcal{O})}\
        \Res_\Sigma(
        \cl(\mathcal{O}), \cl(\Omc) )\big|_{\Sigma}.
    $$
\end{lemma}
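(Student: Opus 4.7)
The $\supseteq$ direction is immediate: since $\Res_\Sigma(\cl(\Omc),\cl(\Omc))|_\Sigma \subseteq \Res_\Sigma(\cl(\Omc),\cl(\Omc))$, every $(\Sigma \cup \sigC(\Omc))$-axiom entailed by the restriction is a fortiori entailed by the full ontology.

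For the converse, my plan is a model-theoretic extension argument. Starting from a model $\Imc$ of $\Res_\Sigma(\cl(\Omc),\cl(\Omc))|_\Sigma$, I will construct an interpretation $\Jmc$ that agrees with $\Imc$ on every name in $\Sigma \cup \sigC(\Omc)$ and that satisfies the full $\Res_\Sigma(\cl(\Omc),\cl(\Omc))$. This suffices: if a $(\Sigma \cup \sigC(\Omc))$-axiom $\alpha$ is entailed by the full ontology, then $\Jmc \models \alpha$, and since $\sig(\alpha) \subseteq \Sigma \cup \sigC(\Omc)$, also $\Imc \models \alpha$.

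The extension only needs to modify $\Imc$ on the \emph{bad} definers, i.e., those $D$ for which there exists $D' \succeq_d D$ with $\Rol(D',\cl(\Omc)) \not\subseteq \Sigma$ --- precisely the definers whose negative occurrences cause axioms to be dropped from $|_\Sigma$. Intuitively, a bad definer lives in the scope of a non-$\Sigma$ role quantification that is forgotten by r-Res, so after forgetting its role-based interaction with the $\Sigma$-visible part of the ontology becomes loose enough to permit reinterpretation. The construction processes bad definers in top-down $\preceq_d$-order, setting $D^{\Jmc}$ so that every removed axiom $\neg D \sqcup C$ is satisfied by $\Jmc$ while keeping the axioms already handled intact.

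The main obstacle is that a bad definer $D$ may still occur positively (as $\quant r.D$ with $r \in \Sigma$) in axioms retained in $|_\Sigma$, so the interpretation of $D$ cannot be chosen freely. The structural observation I will exploit is that any $|_\Sigma$-axiom containing $\quant r.D$ has the form $\neg D'' \sqcup \ldots \sqcup \quant r.D \sqcup \ldots$ with $D \preceq_d D''$; by the very definition of $|_\Sigma$, such a $D''$ cannot be bad, so its interpretation is fixed by $\Imc$ and does not change during the construction. This confines the interactions between bad and non-bad definers enough to make the top-down adjustment well-defined. To conclude that no needed r-Res inference is lost along the way, I will invoke Lemma~\ref{theo-preprocessing} to show that whenever the full $\Res_\Sigma(\cl(\Omc),\cl(\Omc))$ uses a side condition $\cl(\Omc) \models D_1 \sqcap \ldots \sqcap D_n \sqsubseteq \bot$ whose conclusion contributes to a $(\Sigma \cup \sigC(\Omc))$-axiom, a witness for this side condition already lives inside $|_\Sigma$, so $\Jmc$ satisfies the corresponding conclusion as well.
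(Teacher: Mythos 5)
Your overall strategy---extend a model $\Imc$ of the restricted set to a model $\Jmc$ of the full set by reinterpreting only the \emph{bad} definers---is viable, and it is essentially the semantic counterpart of the paper's argument, which works syntactically: the paper notes that the topmost definer $D_0$ of a bad $\preceq_d$-chain occurs only under a role $r_0\notin\Sigma$, hence has no positive occurrence left in $\Res_\Sigma(\cl(\Omc),\cl(\Omc))$, so by Ackermann's lemma all clauses with $\neg D_0$ can be deleted, and this is propagated down the chain to $D$. However, your write-up has a genuine gap exactly at the point you yourself call ``the main obstacle''. You must ensure that reinterpreting a bad definer $D$ (to satisfy the removed axioms $\neg D\sqcup C$, which calls for shrinking $D^\Jmc$) does not falsify retained axioms in which $D$ occurs positively as $\quant r.D$ with $r\in\Sigma$. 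Your resolution---``such an axiom has the form $\neg D''\sqcup\ldots\sqcup\quant r.D\sqcup\ldots$ and $D''$ cannot be bad, so its interpretation is fixed''---does not achieve this: that $D''$ keeps its interpretation is beside the point, since $\Imc$ may satisfy the axiom only via the disjunct $\exists r.D$ at some element, and shrinking $D^\Jmc$ then breaks it; moreover, for derived axioms (r-Res resolvents) the claimed syntactic shape is not immediate and would require an induction over derivations that is missing. What actually closes the gap is a stronger invariant you neither state nor prove: since in $\cl(\Omc)$ every definer has a unique positive occurrence and every clause carries at most one negative definer literal, every clause of $\Res_\Sigma(\cl(\Omc),\cl(\Omc))$ containing a positive occurrence $\quant r.D$ of a bad definer with $r\in\Sigma$ also contains a negative literal $\neg E$ of a bad definer (its $\preceq_d$-parent), and this is preserved by r-Res because that rule only drops the resolved role literals. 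Hence no bad definer occurs in $\Res_\Sigma(\cl(\Omc),\cl(\Omc))\big|_{\Sigma}$ at all, the tension you worry about is vacuous, and the top-down bookkeeping collapses: simply set $D^\Jmc=\emptyset$ for every bad definer, which leaves every retained axiom untouched and satisfies every removed axiom through its bad negative literal.

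Your final paragraph is off target: Lemma~\ref{theo-preprocessing} and the r-Res side conditions play no role in this lemma. Both $\Res_\Sigma(\cl(\Omc),\cl(\Omc))$ and its restriction are fixed, already saturated sets of axioms, so there are no inferences that could be ``lost''; the concern about side conditions with respect to $\cl(\Omc)$ versus $\OsplitForm$ belongs to the proof of Theorem~\ref{theo:Sigma_normalized}, where it is handled by Lemma~\ref{corMr}. As written, that step indicates a misreading of what the present lemma asserts and should be dropped.
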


\begin{proof}
Recall that we assume that each definer occurs at most once positively 
in $\cl(\Omc)$. In particular, for each definer $D\in\sig(cl(\Omc))$, there is at most one occurrence of a literal of the form $\quant r.D$ in $\cl(\Omc)$.

For any definer $D\in \sig(\cl(\Omc))$, if there exists  $D'\in \sig(\cl(\Omc))$ such that $D\preceq_d D'$ and $\Rol\big(D', \cl(\Omc)\big) = \{r_0\}\not\subseteq\Sigma$, then we can find a sequence of axioms in $\cl(\Omc)$ such as the following.
\begin{align*}
\alpha_0:\ \  &C_0\sqcup \quant_0 r_0. D_0,\\
\alpha_1:\ \ & \neg D_0\sqcup  C_1\sqcup \quant_1 r_1.D_1, \\
&\ldots,\\
\alpha_n:\ \  &\neg D_n\sqcup C_{n+1}\sqcup \quant_n r_n.D_{n+1}.\\
\alpha_{n+1}:\ \  &\neg D_{n+1}\sqcup C_{n+2}.
\end{align*}
where $D_0 = D', D_{n+1} = D$. Then, for every $1\leq i\leq n$, $\quant_i r_i. D_i$ is the unique literal containing $D_i$ positively, and $\quant_i r_i. D_i$ appears only in $\alpha_i$. 

Since $r_0\not\in\Sigma$, there is no axiom in $\Res_\Sigma(\cl(\Omc), \cl(\Omc))$ that contains a literal of the form $\quant_0 r_0.D_0$. Therefore, $D_0$ does not appear positively in $\Res_\Sigma(\cl(\Omc), \cl(\Omc))$. It is well known that, if a concept 
$D_0$ occurs only negatively in an ontology, we can preserve all entailments of
axioms not using $D_0$ if we replace $D_0$ by $\bot$, which with our normal form
means, we can delete all axioms with the literal $\neg D_0$ without losing any
consequences in $\Sigma$ (see also \cite[Theorem 1]{DBLP:conf/ijcai/ZhaoS17}).
%
Specifically, if we set $\Sigma_{Res} = \sig(\Res_\Sigma(\cl(\Omc), \cl(\Omc)))$, and let $\Res_\Sigma^0(\Omc)$ be the ontology obtained from $\Res_\Sigma(\cl(\Omc), \cl(\Omc))$ by removing all axioms that contain the literal $\neg D_0$, then we have
$$\Res_\Sigma(\cl(\Omc), \cl(\Omc)) \equiv_{\Sigma_{Res}\setminus\{D_0\}} \Res_\Sigma^0(cl(\Omc)).$$

Note that if a literal $L_1 = \quant r .D$ always occurs in $\cl(\Omc)$ together with another
literal $L_2$ which is of the form $A$ or $\neg A$  (i.e., every axiom $\alpha\in
\cl(\Omc)$ either contains both $L_1$ and $L_2$ or none of them), then $L_1$ also
always appears together with  $L_2$ in $\Res_\Sigma(\cl(\Omc), \cl(\Omc))$.
Because the rule r-Res preserves all literals of the form $A$ or $\neg A$,
and since $\quant_1 r_1.D_1$ always appears with $\neg D_0$ in  $\cl(\Omc)$, $D_1$
cannot appear positively in  $\Res_\Sigma^0(\cl(\Omc))$, because we removed all
occurrences of $D_1$ along with the occurrences of $\neg D_0$. Then, if
$\Res_\Sigma^1(\Omc)$ is the ontology obtained  by removing all axioms
containing
the literal $\neg D_1$ from  $\Res_\Sigma^0(\Omc)$, we have
$$ \Res_\Sigma^0(\cl(\Omc)) \equiv_{\Sigma_{Res}\setminus\{D_1\}} \Res_\Sigma^1(\cl(\Omc)).$$
Consequently, we have 
$$ \Res_\Sigma(\cl(\Omc), cl(\Omc))\equiv_{\Sigma_{Res}\setminus\{D_0, D_1\}} \Res_\Sigma^1(\cl(\Omc)).$$

Repeat this process for all $D_2, \ldots, D_{n+1}$, and we have for $0\leq i\leq n$ that
$$\Res_\Sigma^i(\cl(\Omc)) \equiv_{\Sigma_{Res}\setminus\{D_{i+1}\}} \Res_\Sigma^{i+1}(\cl(\Omc))\text{ and}$$
$$ \Res_\Sigma(\cl(\Omc), cl(\Omc)) \equiv_{\Sigma_{Res}\setminus\{D_0, \ldots, D_{i+1}\}} \Res_\Sigma^{i+1}(\cl(\Omc)), $$
where $\Res_\Sigma^{i+1}(\Omc)$ is the ontology obtained from  $\Res_\Sigma^{i}(\Omc)$ by removing all axioms containing the literal $\neg D_{i+1}$. We conclude that removing all axioms that contain the literal $\neg D_{n+1} = \neg D$ preserves all logical consequences over $\Res_\Sigma(\cl(\Omc), \cl(\Omc))$ in the signature that excludes $\{D_0, \ldots, D_{n+1}\}$.

Finally, we have $$\Res_\Sigma(\cl(\Omc), \cl(\Omc))\equiv_{\Sigma\cup \sigC(\mathcal{O})} \Res_\Sigma( \cl(\mathcal{O}), \cl(\Omc) )\big|_{\Sigma}$$
by repeating the process above for all $D$ such that there exists  $D'\in \sig(\cl(\Omc))$ such that $D\preceq_d D'$ and $\Rol\big(D', cl(\Omc)\big) = \{r_0\}\not\subseteq\Sigma$.
\end{proof}

\bigskip

We now have everything to prove Theorem~\ref{theo:Sigma_normalized}.
\theoSigmanormalized*
\begin{proof}
$\OsplitForm$ is \sigmaForm by definition. We show that  
$\mathcal{O}\equiv_{\Sigma\cup \sigC(\mathcal{O})} \OsplitForm.$

By Theorem~\ref{theo:r-forget} and Lemma~\ref{lemma:restrictSigma}, it is sufficient to show that 
\begin{equation}\label{eq:same-UI}
\begin{aligned}
 & \Res_\Sigma(cl(\mathcal{O}), cl(\mathcal{O}))\big|_{\Sigma} 
  =\ \Res_\Sigma(\OsplitForm, \OsplitForm)\big|_{\Sigma} .
 \end{aligned}
\end{equation}
This follows from the following observations.
 \begin{enumerate}
     \item If an axiom $\alpha$ contains the literal $\neg D$, any axiom obtained
from $\alpha$ using \textbf{r-Res} also
contains $\neg D$.
Therefore, we have after Definition~\ref{def:SigmaForm} defining $\clSigmaPart$:
$$\Res_\Sigma(cl(\mathcal{O}), cl(\mathcal{O}))\big|_{\Sigma}=
\Res_\Sigma(\clSigmaPart, cl(\mathcal{O})).$$

     \item      
     Assume that $C_1\sqcup \exists r. D_1,$ $C_2\sqcup \forall r. D_2$,
$\ldots$, $C_n\sqcup \forall r. D_n \in \Res_\Sigma(\clSigmaPart,
\cl(\mathcal{O}))$ and $r\not\in\Sigma$.
     Then, by Lemma \ref{corMr},  we have $\cl(\mathcal{O})\models
D_1\sqcap\ldots\sqcap D_n\sqsubseteq\bot$ iff $\OsplitForm \models
D_1\sqcap\ldots\sqcap D_n\sqsubseteq\bot$.
     Therefore, for a given premise set $P = \{C_1\sqcup \exists r. D_1,\ C_2\sqcup \forall r. D_2,\ \ldots, C_n\sqcup \forall r. D_n\}\subseteq \Res_\Sigma(\clSigmaPart, \cl(\mathcal{O}))$, an \textbf{r-Res} inference $\rho$ is applicable on $P$ with $\mathcal{M} = \cl(\mathcal{O})$ iff $\rho$ is applicable on $P$ with $\mathcal{M} = \OsplitForm$.
     Consequently, we have\
     $$ \Res_\Sigma(\clSigmaPart, cl(\mathcal{O})) = \Res_\Sigma(\clSigmaPart, \OsplitForm).$$

     \item Since $\OsplitForm = \clSigmaPart\cup \mathcal{D}_{\Sigma}(cl(\mathcal{\Omc}))$ and every axiom in $\mathcal{D}_{\Sigma}(cl(\mathcal{\Omc}))$  has a literal $\neg D$ with $Rol(D, cl(\Omc))\not\subseteq \Sigma$, we have 
     \begin{align*}
      &\Res_\Sigma(\clSigmaPart, \OsplitForm)\\
      =\ &\Res_\Sigma(\OsplitForm, \OsplitForm)\big|_\Sigma.\qedhere
      \end{align*}
 \end{enumerate}
%
\end{proof}

\subsection{Theorem~\ref{theo:rolE}}
\hideThisPart{
To prove Theorem~\ref{theo:rolE}, we will show inseparability on the level of interpretations. Given two interpretations $\Imc$, $\Jmc$ and a signature~$\Sigma$, we write $\Imc=_\Sigma\Jmc$ if for all $X\in\Sigma$, $X^\Imc=X^\Jmc$. 

The following is a well-known result regarding deductive inseparability, which we just prove here for matter of self-containment. 
\begin{lemma}\label{lem:inseparability}
    Given two ontologies $\Omc_1$, and $\Omc_2$ and a signature $\Sigma$, we have $\Omc_1\equiv_\Sigma\Omc_2$ if for every model $\Imc_1$ of $\Omc_1$, there is a model $\Imc_2$ of $\Omc_2$ s.t.  $\Imc_1=_\Sigma\Imc_2$, and for every model $\Imc_2$ of $\Omc_2$, there is a model $\Imc_1$ of $\Omc_1$ s.t. $\Imc_2=_\Sigma\Imc_1$.
\end{lemma}
\begin{proof}
Assume the lemma holds for $\Omc_1$, $\Omc_2$ and $\Sigma$. By symmetry, it suffices to show that for every $\Sigma$-axiom $\alpha$, if $\Omc_1\not\models\alpha$, then also $\Omc_2\not\models\alpha$. Assume $\Omc_1\not\models\alpha$. There is then a model $\Imc_1$ of $\Omc_1$ s.t. $\Imc_1\not\models\alpha$, and by the assumption of the lemma, a model $\Imc_2$ of $\Omc_2$ s.t. $\Imc_1=_\Sigma\Imc_2$. Since $X^{\Imc_1}=X^{\Imc_2}$ for every $X\in\sig(\alpha)$, it follows by structural induction on $\alpha$ that also $\Imc_2\not\models\alpha$, and thus $\Omc_2\not\models\alpha$.
\end{proof}
}
\newcommand{\Out}{\textit{Out}}

\theorolE*
\begin{proof} 
Assume $\Omc$ is role isolated for $\Sigma$. Recall that by Lemma~\ref{theo:r-forget},
for any ontology $\mathcal{O}$ and signature $\Sigma\subseteq
\sig(\Omc)$, we have $\Res_\Sigma(\cl(\Omc), \cl(\Omc))\equiv_{\Sigma\cup
\sigC(\mathcal{O})} \cl(\Omc)$. We note that, since $\Omc$ is in normal form, $\cl(\Omc)$ is obtained by replacing every literal $\quant r.A$ by some $\quant r.D$, where we also have the axioms $\neg D\sqcup A$. We observe furthermore that, since $\Omc$ is role isolated, the r-Rule is applicable in $\Omc$ exactly iff the r-Res is applicable on the corresponding normalized axioms, excluding the last premise. As a consequence, we obtain that 
$\Res_\Sigma(\cl(\Omc),\cl(\Omc))=\cl(\rolE_\Sigma(\Omc))$. We obtain $\cl(\rolE_\Sigma(\Omc))\equiv_{\Sigma\cup\NC}\cl(\Omc)$. Since we also have $\cl(\Omc)\equiv_{\Sigma\cup\NC} \Omc$, we obtain that $\rolE_\Sigma(\Omc)$ is a role forgetting of $\Omc$ for $\Sigma$.
\hideThisPart{

To show that $\rolE(\Omc)$ is a role forgetting for $\Omc$ and $\Sigma$, we need to show 
\begin{enumerate}
    \item $\sig(\rolE(\Omc))\subseteq\Sigma$,
    \item for every axiom $\alpha$ s.t. $\sig(\Omc)\subseteq\Sigma\cup\NC$ and $\Omc\models\alpha$, also $\rolE(\Omc)\models\alpha$,
    \item for every axiom $\alpha$ s.t. $\sig(\Omc)\subseteq\Sigma\cup\NC$ and $\rolE(\Omc)\models\alpha$, also $\Omc\models\alpha$.
\end{enumerate}
The first item follows from construction. For the third item, we observe that the r-Rule adds only axioms that follow from $\Omc$, and thus no new entailments can be created. What remains to be shown is the second item.

For simplicity, assume that $\sigC(\Omc)\subseteq\Sigma$. 
We furthermore assume $\Out_\Sigma\subseteq\Sigma$. \patrick{This assumption is not part of the theorem, but I think it has to be!}
We furthermore assume every $A\in\Out_\Sigma$ either occurs only in literals of the form $\exists r.A$, or $\forall r.A$ (that means, always under the same role and with the same quantifier).
\patrick{Again, this is not part of the theorem, but maybe should.
}
We then need to show that $\texttt{roleE}_\Sigma(\Omc)$ is a uniform interpolant of $\Omc$ for $\Sigma$. Since by construction, $\sig(\texttt{roleE}_\Sigma(\Omc))\subseteq\Sigma$, it only remains to show that $\texttt{roleE}_\Sigma(\Omc)\equiv_\Sigma\Omc$. For this, we take a route via concept forgetting. 

Define a new ontology $\Omc_2^\#$ as follows. 
\begin{enumerate}
    \item For every role name $r\in\sig(\Omc)\setminus\Sigma$, we introduce a fresh concept name $B_r$.
    \item For every concept name $A\in\Out(\Sigma)$, we introduce a fresh concept name $B_A$, for which we add the axiom $\neg B_A\sqcup A$.
    \item We add for every clause $\neg A_1\sqcup \ldots\neg A_n$ of the form (c1), where no $A_i$ occurs under an existential role restriction over a role outside of $\Sigma$, a new clause $\neg B_r\sqcup\neg B_{A_1}\sqcup\ldots\sqcup\neg B_{A_n}$.
    \item We add for every clause $\neg A_1\sqcup \ldots\neg A_n$ of the form (c1), where at least one  $A_i$ occurs under an existential role restriction over a role outside of $\Sigma$, a new clause $\neg B_{A_1}\sqcup\ldots\sqcup\neg B_{A_n}$.
    \patrick{Adapt the construction to not use $B_A$?}
    \item We replace every literal $\forall r.A$, where $r\not\in\Sigma$, by $\neg B_r\sqcup B_A$. 
    \item Every clause of the form $C\sqcup\exists r.D$, where $r\not\in\Sigma$, is replaced by two clauses of the form 
    $C\sqcup A_r$, $C\sqcup A_D$. We repeat this operation until no existential role restrictions over roles outside of $\Sigma$ remain. 
\end{enumerate}
Recall that Theorem~\ref{prop_A_rule} does not depend on other results shown in the appendix, so that we can use it already here. Using Theorem~\ref{prop_A_rule} we can conclude that $\texttt{conE}_{\Sigma}(\Omc^\#)\equiv_{\Sigma'}\Omc^\#$, where $\Sigma'=\Sigma\cup\sigC(\Omc)$. Intuitively, $\texttt{conE}_{\Sigma}(\Omc^\#)$ is the result of eliminating all fresh names in $\Omc^\#$. Moreover, we notice that the inference of the r-Rule used when computing $\texttt{rolE}_\Sigma(\Omc)$ directly correspond to inferences of the A-rule used when computing $\texttt{conE}_{\Sigma}(\Omc^\#)$, so that indeed, 
\[
     \texttt{conE}_{\Sigma}(\Omc^\#)\ =\  \texttt{rolE}_\Sigma(\Omc)
\]
We obtain that $\texttt{rolE}_\Sigma(\Omc)\equiv_\Sigma\Omc^\#$. What remains to be shown is that 
$\Omc^\#\equiv_\Sigma\Omc$, for which we use Lemma~\ref{{lem:inseparability}}. Specifically, this means that we have show for every model $\Imc$ of $\Omc$, how to obtain a model $\Imc^\#$ of $\Omc^\#$ s.t. $\Imc^\#=_\Sigma=\Imc$ and vice versa. 

\begin{enumerate}
    \item Let $\mathcal{I}^\# =  (\Delta, \ \cdot^{\mathcal{I}^\#})$ be a model of $\Omc^\#$. 
    We define an interpretation $\Imc=(\Delta,\cdot^\mathcal{I})$ s.t. $\Imc\models\Omc$ and $\Imc=_\Sigma\Imc^\#$. Since we want $\Imc^\#=_\Sigma\Imc$, we set $X^\Imc=X^{\Imc^\#}$ for all $X\in\Sigma$. 
    It remains to define $r^\Imc$ for $r\in\sig(\Omc)\setminus\Sigma$, for which we set 
    \[
        r^\Imc\ =\ \{(d,d) \mid d\in A_r^\Imc\} 
    \]
    It remains to show that $\Imc\models\Omc$. Looking at the transformation, we observe that for this, it is sufficient so show    
    $(\forall r.A)^\Imc=(\neg B_r\sqcup B_A)^{\Imc^\#}$ and $(\exists r.A)^\Imc=(B_r\sqcap B_A)^{\Imc^\#}$ for all $r\in\sigR(\Omc)\setminus\Sigma$ and $A\in\Out_\Sigma(\Omc)$.
    \patrick{We also have to check the other axioms that changed.}
    
    \begin{enumerate}
        \item Assume $d\in(\forall r.A)^\Imc$. Then, either $d$ has no $r^\Imc$-successors, in which case $d\in(\neg B_r)^{\Imc^\#}$, or all its $r^\Imc$-successors are in $A$. In the latter case, the only $r$-successor of $d$ is $d$ itself, and thus $d\in A^\Imc$, which means that also, $d\in A^\#$.  
        \patrick{Maybe the construction requires $B_A=A$.} 
        Conversely, assume $d\in(\neg A_r\sqcup A)^{\Imc^\#}$. Then, $d$ has either no $r^\Imc$-successor, or $d$ is the only $r$-successor, and we have $d\in A^\Imc$. In both cases, $d\in(\forall r.A)^\Imc$.
        \item Assume $d\in(\exists r.A)^\Imc$. The construction ensures that then, $d$ must be that $r$-successor, which means $(d,d)\in r^\Imc$, and $d\in A^\Imc$. By the construction, this means that $d\in(A_r\sqcap A)^{\Imc_\#}$.

        Conversely, if $d\in(A_r\sqcap A)^{\Imc_\#}$, then construction ensures directly that $d\in\{\exists r.A\}$.
    \end{enumerate}

    \item Let $\mathcal{I} =  (\Delta, \ \cdot^{\mathcal{I}})$ be a model of $\Omc$.

    
    We define an interpretation $\Imc^\#=(\Delta,\cdot^\mathcal{I^\#})$ s.t. $\Imc^\#\models\Omc^\#$ and $\Imc^\#=_\Sigma\Imc$. The latter is achieved by setting $X^{\Imc^\#}=X^\Imc$ for all $X\in\Sigma$.     
    For every $r\in\Sigma_r$, we set $B_r^{\Imc^\#}=(\exists r.\top)^\Imc$. For $B_A$, we make a case distinction based on the unique literal under which $A$ occurs in $\Omc$. If it is $\exists r.A$, then we set $(B_A)^{\Imc^\#}=(\exists r.A)^\Imc$. Otherwise, if it is 
    $\forall r.A$, then we set $(B_A)^{\Imc^\#}=(\forall r.A)^\Imc$.  
    %
    %
    To show that $\Imc^\#\models\Imc$, we need to again verify similar equalities between concept interpretations as for the previous item. However, this time, we exploit the fact that every $A\in\Out_\Sigma$ occurs in $\Omc$ either only 
    
    in literals of the form $\exists r.A$ or in literals of the form $\forall r.A$. This means that either only the interpretation of $\exists r.A$ or the interpretation of $\forall r.A$ is relevant for the satisfaction of the model.

    \begin{enumerate}
        \item Assume $d\in(\forall r.A)^\Imc$. Then, either $d$ has no $r^\Imc$-successors, or there are $r$-successors, and they are all in $A$. In the first case, we have $d\in (\neg A_r)^{\Imc^\#}$. In the latter case, we have $d\in(\exists r.A)^{\Imc}\subseteq B_A$. In both cases, we obtain $d\in(\neg B_r\sqcup B_A)^{\Imc^\#}$. 
        
        Assume $d\in(\neg B_r\sqcup B_A)^{\Imc^\#}$. Then, either $d\in(\neg B_r)^{\Imc^\#}=(\forall r.\bot)^\Imc\subseteq(\forall r.A)^\Imc$, or $d\in(B_A)^{\Imc^\#}$.  
        
        In the latter case, we obtain $d\in(\exists r.A)^\Imc$.  \patrick{and now?}
        
        \item Assume $d\in(\exists r.A)^\Imc$. Then, our construction ensures both $d\in B_r^{\Imc^\#}$ and 
        $d\in B_A^{\Imc^\#}$. 
        
        Conversely, if $d\in(B_r\sqcap B_a)^{\Imc^\#}$, then the construction ensures 
        $d\in((\exists r.\top\sqcap\forall r.A)\sqcup \exists r.A)^\Imc=(\exists r.A)^\Imc$.

    \end{enumerate}
    
\end{enumerate}

}
\end{proof}

\subsection{Proof of Theorem \ref{main_theo}}\label{proof_main_theo}
Let $\Sigma$ be a signature. For $X$ being a role name or a concept, we define 
$\texttt{copy}_\Sigma(X)$ by structural induction.

\begin{enumerate}
    \item $\texttt{copy}_\Sigma(A)= A$, if $A\in (\NC\cap\Sigma)\cup \ND$;
    \item $\texttt{copy}_\Sigma(r)= r$, if $r\in \NR\cap\Sigma$;
    \item $\texttt{copy}_\Sigma(B) = \overline{B}$ if $B\in\sigC(\Omc)\setminus\Sigma$, where $\overline{B}$ is fresh;
    \item $\texttt{copy}_\Sigma(r) = \overline{r}$ if $r\in\sigR(\Omc)\setminus\Sigma$, where $\overline{r}$ is fresh;
    \item $\texttt{copy}_\Sigma(C_1\sqcup C_2) = \texttt{copy}_\Sigma(C_1) \sqcup \texttt{copy}_\Sigma(C_2)$;
    \item $\texttt{copy}_\Sigma(C_1\sqcap C_2) = \texttt{copy}_\Sigma(C_1) \sqcap \texttt{copy}_\Sigma(C_2)$;
    \item $\texttt{copy}_\Sigma(\quant r. C) = \quant{\texttt{copy}_\Sigma(r)}.\texttt{copy}_\Sigma(C)$, \item $\texttt{copy}_\Sigma(\neg C) =\neg \texttt{copy}_\Sigma(C)$.
\end{enumerate}

We further define $\cl^{ex}(\mathcal{O})  = \OsplitForm\cup \cl(\mathcal{O}_D)$, where  
\begin{equation}\label{O-D}
\mathcal{O}_D=\{ D\equiv \texttt{copy}_\Sigma(C_D)\mid D\in \sig( \texttt{conE}_{\Sigma}(\texttt{rolE}_{\Sigma}(\OsplitForm)))\cap \ND\}.
\end{equation}

To prove Theorem \ref{main_theo}, we need the following two lemmas.
\begin{lemma}\label{claim_ex_rolE}
We have $\mathcal{O}\equiv_{\Sigma\cup \sigC(\Omc)}  \cl^{ex}(\mathcal{O}) $.
\end{lemma}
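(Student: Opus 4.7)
My plan is to establish $\Omc \equiv_{\Sigma \cup \sigC(\Omc)} \cl^{ex}(\Omc)$ by treating the two directions of the inseparability separately, exploiting the fact that $\cl^{ex}(\Omc) = \OsplitForm \cup \cl(\mathcal{O}_D)$ is a syntactic extension of $\OsplitForm$.

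For the easy direction $\Omc \models \alpha \Rightarrow \cl^{ex}(\Omc) \models \alpha$ on any $(\Sigma \cup \sigC(\Omc))$-axiom $\alpha$, I will appeal directly to Theorem~\ref{theo:Sigma_normalized} to obtain $\OsplitForm \models \alpha$; the inclusion $\cl^{ex}(\Omc) \supseteq \OsplitForm$ then makes every model of $\cl^{ex}(\Omc)$ also a model of $\OsplitForm$ and hence of $\alpha$. This step is immediate.

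The converse direction calls for a model-theoretic construction: given any model $\mathcal{I}$ of $\Omc$, I will build a model $\mathcal{I}''$ of $\cl^{ex}(\Omc)$ that agrees with $\mathcal{I}$ on $\Sigma \cup \sigC(\Omc)$, in three stages. First, interpret each definer $D$ appearing in $\cl(\Omc)$ by $D^{\mathcal{I}_1} := C_D^{\mathcal{I}}$; by the standard correctness of normalization, this yields a model $\mathcal{I}_1$ of $\cl(\Omc)$, which then satisfies every axiom of $\OsplitForm$ (axioms in $\clSigmaPart$ are inherited directly from $\cl(\Omc)$, and those in $\deSigmaPart$ follow because they are consequences of $\cl(\Omc)$ by Lemma~\ref{theo-preprocessing} and Definition~\ref{def:SigmaForm}). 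Second, interpret each fresh copy symbol $\overline{X}$ of a name $X \not\in \Sigma$ by $\overline{X}^{\mathcal{I}_2} := X^{\mathcal{I}}$; a routine structural induction then gives $\texttt{copy}_\Sigma(C_D)^{\mathcal{I}_2} = C_D^{\mathcal{I}} = D^{\mathcal{I}_2}$, so $\mathcal{I}_2$ satisfies every equivalence in $\mathcal{O}_D$. Third, extend $\mathcal{I}_2$ to a model $\mathcal{I}''$ of the further normalization $\cl(\mathcal{O}_D)$ by the standard normalization-extension argument, interpreting its newly introduced inner definers as the concepts they replace.

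The resulting $\mathcal{I}''$ satisfies $\OsplitForm \cup \cl(\mathcal{O}_D) = \cl^{ex}(\Omc)$ and coincides with $\mathcal{I}$ on $\Sigma \cup \sigC(\Omc)$, since every modification concerns only symbols in $\ND$ or the fresh copy and inner-definer symbols, none of which belong to $\Sigma \cup \sigC(\Omc)$. Hence whenever $\cl^{ex}(\Omc) \models \alpha$ for a $(\Sigma \cup \sigC(\Omc))$-axiom $\alpha$, $\mathcal{I}'' \models \alpha$, and by agreement on $\sig(\alpha)$ also $\mathcal{I} \models \alpha$, giving $\Omc \models \alpha$. The main delicate point will be checking that the three stages remain mutually consistent: stage two must not invalidate $\mathcal{I}_1 \models \OsplitForm$, which holds because it only touches fresh copy symbols disjoint from $\sig(\OsplitForm)$; and the standard normalization-extension step must still apply in stage three over the enlarged interpretation, which it does because $\cl(\mathcal{O}_D)$ introduces only fresh inner definers.
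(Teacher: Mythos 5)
Your proof is correct and follows essentially the same route as the paper: the easy direction via Theorem~\ref{theo:Sigma_normalized} and $\OsplitForm\subseteq\cl^{ex}(\Omc)$, and the hard direction by extending a model of $\Omc$ (the paper phrases it as a contradiction with a countermodel of $\cl(\Omc)$) to a model of $\cl^{ex}(\Omc)$ by interpreting each definer $D$ as $C_D$ and each fresh copy $\overline{X}$ as $X$, which changes nothing on $\Sigma\cup\sigC(\Omc)$. Your staged construction $\Imc_1,\Imc_2,\Imc''$ just makes explicit what the paper's single definition of $\Imc^{ex}$ does at once.
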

\begin{proof}
Since $ \OsplitForm\subseteq \cl^{ex}(\mathcal{O}) $, and $\mathcal{O}\equiv_{\Sigma\cup \sigC(\mathcal{O})} \OsplitForm$ (by Theorem~\ref{theo:Sigma_normalized}), we have for every axiom $\alpha$ s.t. $\sig(\alpha)\subseteq\Sigma\cup\sigC(\Omc)$ and $\Omc\models\alpha$, also $\cl^{ex}(\Omc)\models\alpha$. 
We thus only need to show the other direction.

Let $\alpha$ be s.t. $\sig(\alpha)\subseteq\Sigma\cup\sigC(\Omc)$, and assume $\cl^{ex}(\Omc)\models\alpha$ but  $\Omc\not\models\alpha$. Then, also $\cl(\Omc)\not\models\alpha$, and there is a witnessing model $\Imc$ of $\cl(\Omc)$ s.t. $\Imc\not\models\alpha$. Based on $\Imc$, we construct a model $\Imc^{ex}$ of $\cl^{ex}(\Omc)$ and also show that  $\Imc^{ex}\not\models\alpha$, and thus $\cl^{ex}(\Omc)\not\models\alpha$. A contradiction!

$\Imc^{ex}$ is defined as follows.
\begin{enumerate}
    \item $A^{\mathcal{I}^{ex}}= A^\mathcal{I}, r^{\mathcal{I}^{ex}}= r^\mathcal{I}$ for all $A, r\in \sig(\mathcal{O})\cap \sig(cl^{ex}(\Omc))$;
    \item $\overline{r}^{\mathcal{I}^{ex}}= r^{\mathcal{I}^{ex}}$, $\overline{B}^{\mathcal{I}^{ex}}= B^{\mathcal{I}^{ex}}$ for all introduced role names $\overline{r}$ and introduced concept names $\overline{B}$.
    \item $D^{\mathcal{I}^{ex}}= (C_D)^{\mathcal{I}^{ex}}$ for every definer $D\in \sig(\cl^{ex}(\mathcal{O}))\cap\ND$.
\end{enumerate}
Since $\mathcal{I}$ is a model of $\Omc$, by the item 3 above and the definition of $C_D$, we know  $\Imc^{ex}$ is compatible with all axioms in $\OsplitForm$ and $\mathcal{O}_D$. Therefore, $\Imc^{ex}$ is compatible with all axioms in $cl(\Omc_D)$  and thus a model of $cl^{ex}(\Omc_D)$. Moreover, 
because 
$A^{\mathcal{I}^{ex}}= A^\mathcal{I}$ and $r^{\mathcal{I}^{ex}}= r^\mathcal{I}$ for all $A, r\in\Sigma\cup \sig(\mathcal{O})$, we have $\Imc^{ex}\models\cl^{ex}(\Omc)$ and $\Imc^{ex}\not\models\alpha$. A contradiction. \patrick{Still needs to show that $\Imc^{ex}$ is a model of $\Omc_D$.}
\hui{I add a sentence}
%
\end{proof}

\begin{lemma}\label{claim_gm}
$\cl^{ex}(\Omc) \equiv_\Sigma  \gm_\Sigma(\mathcal{O})$.
\end{lemma}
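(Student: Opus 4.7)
The plan is to prove both directions of the $\Sigma$-inseparability by going through the intermediate ontology $\Mmc := \conE_{\Sigma}(\rolE_\Sigma(\OsplitForm))$, and letting $\sigma$ denote the substitution that replaces each definer $D$ by its associated concept $C_D$. Two facts will be used throughout. First, since the deletion of subsumed axioms and the syntactic rewritings from the last bullet of Theorem~\ref{main_theo} all preserve logical equivalence, $\gm_\Sigma(\Omc)$ and $\sigma(\Mmc)$ have the same models. Second, one checks that $\sig(\Mmc) \subseteq \Sigma \cup \ND$: all non-$\Sigma$ roles have been removed by $\rolE_\Sigma$, and all non-$\Sigma$ concept names of $\sigC(\Omc)$ have been eliminated by $\conE_\Sigma$. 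The central technical claim will be that, for any interpretation $\mathcal{K}$ of $\sig(\Omc) \cup \ND$ satisfying $D^\mathcal{K} = (C_D)^\mathcal{K}$ for every definer $D$ occurring in $\Mmc$, we have $\mathcal{K} \models \beta$ iff $\mathcal{K} \models \sigma(\beta)$ for every $\beta \in \Mmc$; this follows by a straightforward induction over the disjunctive literals in $\beta$, using that every occurrence of a definer in $\Mmc$ is either of the form $\neg D$ or $\quant r.D$ with $r \in \Sigma$.

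For the direction $\gm_\Sigma(\Omc) \models \alpha \Rightarrow \cl^{ex}(\Omc) \models \alpha$, I would take an arbitrary $\Imc \models \cl^{ex}(\Omc)$ and build $\Imc^\sigma$ by keeping $X^{\Imc^\sigma} := X^\Imc$ for $X \in \Sigma \cup \ND$ and setting $X^{\Imc^\sigma} := \overline{X}^\Imc$ for every $X \in \sig(\Omc) \setminus \Sigma$. Because $\Imc \models \mathcal{O}_D$ and by the definition of $\texttt{copy}_\Sigma$, we then have $(C_D)^{\Imc^\sigma} = (\texttt{copy}_\Sigma(C_D))^\Imc = D^\Imc = D^{\Imc^\sigma}$ for every definer. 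Moreover, $\Imc^\sigma$ agrees with $\Imc$ on $\sig(\Mmc) \subseteq \Sigma \cup \ND$, and since $\Imc \models \OsplitForm$ and $\Mmc$ is obtained from $\OsplitForm$ by sound rule applications, $\Imc \models \Mmc$ and hence also $\Imc^\sigma \models \Mmc$. The central claim then yields $\Imc^\sigma \models \sigma(\Mmc) \equiv \gm_\Sigma(\Omc)$, and since $\Imc^\sigma$ and $\Imc$ coincide on $\Sigma$, any $\Sigma$-axiom entailed by $\gm_\Sigma(\Omc)$ is satisfied in $\Imc$, giving $\cl^{ex}(\Omc) \models \alpha$.

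For the converse direction, I would chain the previously established equivalences: if $\cl^{ex}(\Omc) \models \alpha$ for a $\Sigma$-axiom, then Lemma~\ref{claim_ex_rolE} gives $\Omc \models \alpha$ and Corollary~\ref{cor:con_equiv} gives $\Mmc \models \alpha$. Given any $\Jmc \models \gm_\Sigma(\Omc)$, I would extend $\Jmc$ arbitrarily to $\Jmc^+$ defined on all of $\sig(\Omc)$, and then define $\Jmc^\dagger$ on $\sig(\Omc) \cup \ND$ by setting $D^{\Jmc^\dagger} := (C_D)^{\Jmc^+}$ and agreeing with $\Jmc^+$ elsewhere. Since $\gm_\Sigma(\Omc) \equiv \sigma(\Mmc)$, the interpretation $\Jmc^+$ also models $\sigma(\Mmc)$, and the central claim transfers this to $\Jmc^\dagger \models \Mmc$. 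Together with $\Mmc \models \alpha$ and the fact that $\Jmc$ and $\Jmc^\dagger$ coincide on $\Sigma$, we conclude $\Jmc \models \alpha$; as $\Jmc$ was arbitrary, $\gm_\Sigma(\Omc) \models \alpha$.

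The hardest part will be verifying the central claim rigorously, which reduces to checking that the normal-form restrictions of Section~\ref{sec:normalize} together with the definitions of $\rolE_\Sigma$ (Definition~\ref{defi:rolE}) and $\conE_\Sigma$ guarantee that every definer in $\Mmc$ appears only in literals $\neg D$ or $\quant r.D$ with $r \in \Sigma$; this is a routine syntactic argument, but it needs to be stated carefully to cover definers introduced as sub-concepts of other definers during normalization. The equivalence-preservation of the rewritings from Theorem~\ref{main_theo} then follows by standard De Morgan-style reasoning, so no additional difficulty arises there.
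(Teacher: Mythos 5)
Your argument is correct in outline, but it follows a genuinely different route from the paper's proof. The paper stays at the syntactic level of the forgetting operators: it introduces the extended signature $\Sigma^{ex}=\Sigma\cup\sig(\cl(\mathcal{O}_D))$, observes that $\cl^{ex}(\Omc)$ is \sigmaForm for $\Sigma^{ex}$ and that the r-Rule and A-Rule never touch $\cl(\mathcal{O}_D)$, so that $\conE_{\Sigma^{ex}}(\rolE_{\Sigma^{ex}}(\cl^{ex}(\Omc)))=\conE_\Sigma(\rolE_\Sigma(\OsplitForm))\cup\cl(\mathcal{O}_D)$, then invokes Theorems~\ref{theo:rolE} and~\ref{prop_A_rule} at $\Sigma^{ex}$ and finally justifies the definer substitution as a renaming of the copies $\overline{B},\overline{r}\notin\Sigma$ followed by equivalence-preserving rewriting. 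You instead argue model-theoretically: a substitution lemma (if $D^{\mathcal{K}}=(C_D)^{\mathcal{K}}$ then $\beta$ and $\sigma(\beta)$ are equisatisfied), a construction turning any model $\Imc$ of $\cl^{ex}(\Omc)$ into a model of $\gm_\Sigma(\Omc)$ that agrees with $\Imc$ on $\Sigma$ by reading the non-$\Sigma$ names off their copies, and, for the converse, Lemma~\ref{claim_ex_rolE} together with Corollary~\ref{cor:con_equiv} to obtain $\conE_\Sigma(\rolE_\Sigma(\OsplitForm))\models\alpha$, after which any model of $\gm_\Sigma(\Omc)$ becomes a model of that set by re-interpreting each definer as $C_D$. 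Both routes rest only on results proved before this lemma (Theorems~\ref{theo:Sigma_normalized}, \ref{theo:rolE}, \ref{prop_A_rule}), so there is no circularity. The paper's version buys brevity and uniformity --- it reuses the operator-level correctness theorems wholesale and makes transparent why the copies $\texttt{copy}_\Sigma(C_D)$ were introduced --- whereas yours avoids the $\Sigma^{ex}$ bookkeeping and the two commutation observations at the cost of explicit model constructions and a slight asymmetry between the two directions.

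Three details to tighten. First, the claim that $\gm_\Sigma(\Omc)$ and $\sigma(\conE_\Sigma(\rolE_\Sigma(\OsplitForm)))$ have the same models needs the remark that subsumed-axiom deletion retains the subsuming axiom, so each deleted axiom stays entailed both before and after substitution (the paper glosses over this too). Second, your step ``$\Imc\models\mathcal{O}_D$'' uses $\cl(\mathcal{O}_D)\models\mathcal{O}_D$, which holds for this normalization (definers are introduced only via clauses $\neg D\sqcup C$ and role restrictions occur positively in the clausal form) but should be stated. Third, what you flag as the hardest part --- verifying that definers occur in $\conE_\Sigma(\rolE_\Sigma(\OsplitForm))$ only as $\neg D$ or $\quant r.D$ with $r\in\Sigma$ --- is not actually needed for the substitution lemma, which follows by plain structural induction from $D^{\mathcal{K}}=(C_D)^{\mathcal{K}}$ regardless of where and with which polarity $D$ occurs; the occurrence pattern is only relevant for the (easy) fact that $\sig(\conE_\Sigma(\rolE_\Sigma(\OsplitForm)))\subseteq\Sigma\cup\ND$.
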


\begin{proof}
Assume 
$\Sigma^{ex} =
\Sigma\cup \sig( cl(\Omc_D))$.
%
Then $\sigR(cl(\Omc_D))=\sigR(\Omc_D)\subseteq \Sigma^{ex} $. Since $\OsplitForm$ is \sigmaForm for $\Sigma$, we have $\cl^{ex}(\mathcal{O})=\OsplitForm\cup \cl(\mathcal{O}_D)$ is \sigmaForm for $\Sigma^{ex}$. 
Moreover, we observe the following:
\begin{enumerate}
    \item $\texttt{rolE}_{\Sigma^{ex}}(\cl^{ex}(\mathcal{O})) =\texttt{rolE}_{\Sigma}(\OsplitForm)\cup cl(\mathcal{O}_D).$ 
    
    This is because $\sig_R(cl(\Omc_D))\subseteq  \Sigma^{ex}$ and  $cl(\Omc_D)$ does not contain axioms of the form $\neg D_1\sqcup \ldots\sqcup \neg D_n$, then the r-Rule is only applicable on  $\OsplitForm$.
   
   \item
$\texttt{conE}_{\Sigma^{ex}}(\texttt{rolE}_{\Sigma^{ex}}(\cl^{ex}(\mathcal{O})
)) = \texttt{conE}_{\Sigma}(\texttt{rolE}_{\Sigma}(\OsplitForm))\cup
cl(\mathcal{O}_D).$
    
    This is because $\sig_C(cl(\Omc_D))\subseteq  \Sigma^{ex}$, then the A-Rule is only applicable on $\texttt{rolE}_{\Sigma}(\OsplitForm)$. 
\end{enumerate}
  By Theorem~\ref{theo:rolE} and \ref{prop_A_rule}, we
    have 
    $$\cl^{ex}(\mathcal{O}) \equiv_{\Sigma^{ex}}\texttt{rolE}_{\Sigma^{ex}}(\cl^{ex}(\mathcal{O})),$$
    $$\texttt{rolE}_{\Sigma^{ex}}(cl^{ex}(\Omc))\equiv_{\Sigma^{ex}}
    \texttt{conE}_{\Sigma^{ex}}(\texttt{rolE}_{\Sigma^{ex}}(cl^{ex}(\Omc))).$$
It follows from these observations that
\begin{align*}
cl^{ex}(\mathcal{O})&\equiv_{\Sigma^{ex}} \texttt{conE}_{\Sigma^{ex}}(\texttt{rolE}_{\Sigma^{ex}}(cl^{ex}(\Omc)))\\
&= \texttt{conE}_{\Sigma}(\texttt{rolE}_{\Sigma}(\OsplitForm))\cup cl(\mathcal{O}_D)\\
&\equiv_{\Sigma} \texttt{conE}_{\Sigma}(\texttt{rolE}_{\Sigma}(\OsplitForm))\cup \mathcal{O}_D.
\end{align*}
Moreover, we have
$$\texttt{conE}_{\Sigma}(\texttt{rolE}_{\Sigma}(\OsplitForm))\cup \mathcal{O}_D\equiv_\Sigma \gm_\Sigma(\mathcal{O})$$
because $\gm_\Sigma(\mathcal{O})$ can be obtained by the following  three operations over $\texttt{conE}_{\Sigma}(\texttt{rolE}_{\Sigma}(\OsplitForm))\cup \mathcal{O}_D$ : 
\begin{enumerate}
    \item Replace all occurrences of the definers $D$ in
$\texttt{conE}_{\Sigma}(\texttt{rolE}_{\Sigma}(\OsplitForm))$ by
$\texttt{copy}_\Sigma(C_D)$ and remove  tautologies.
    
    \item Replace every new introduced concept $\overline{B}$, $\overline{r}$ by $B, r$, respectively.
    Note that
    $\overline{B}, B, \overline{r}, r\not\in \Sigma$ by the definition of $\texttt{copy}_\Sigma$.
     \item Apply exhaustively the translations $C_1\sqsubseteq\neg C_2\sqcup C_3$ $\Rightarrow$
 $C_1\sqcap C_2\sqsubseteq C_3$ and 
 $C_1\sqsubseteq\quant r.\neg C_2\sqcap C_3$ $\Rightarrow$ 
 $C_1\sqcap\overline{\quant}r.C_2\sqsubseteq C_3$, where ${\overline{\exists}}={\forall}$ and ${\overline{\forall}}={\exists}$.
\end{enumerate}
These  operations produce a new ontology that is $\Sigma$-inseparable to
the input ontology.

\noindent
In conclusion, since $\Sigma\subseteq \Sigma^{ex}$, we have $\cl^{ex}(\Omc) \equiv_\Sigma\texttt{conE}_{\Sigma}(\texttt{rolE}_{\Sigma}(\OsplitForm))\cup \mathcal{O}_D\equiv_\Sigma \gm_\Sigma(\mathcal{O})$. This completes the proof.
\end{proof}

\maintheo*
\begin{proof}
By Lemma~\ref{claim_ex_rolE} and~\ref{claim_gm},
we have
$$\mathcal{O}\equiv_\Sigma \cl^{ex}(\Omc) \equiv_\Sigma  \gm_\Sigma(\mathcal{O}),$$
which proves the theorem.
\end{proof}

\ \\

\subsection{Proof of Proposition \ref{prop:size_gm}}

In the following, we use $\lvert\Omc\rvert$ to denote the number of axioms in
$\Omc$. We then have $\lvert\Omc\rvert\leq \lVert \Omc\rVert$.

\propsizegm*
\begin{proof}
We first show the upper bound.
Our construction ensures that for every axiom $\alpha \in \texttt{rolE}_\Sigma
(\OsplitForm)$, we can find a sequence of axioms $\beta_1,\ldots, \beta_n\in
\clSigmaPart$ such that $\alpha$ is obtained from $\bigsqcup_{1\leq i\leq
n}\beta_i$ by removing all literals that contain a role $r\not\in \Sigma$.
Because there are at most exponentially many subsets of $\clSigmaPart$,
this limits the number of possible inferred axioms to exponentially many.
We obtain
\begin{align*}
    \lvert \texttt{rolE}_\Sigma (\OsplitForm)\rvert &\leq 2^{\lvert
\clSigmaPart\rvert }\\
    &\leq 2^{\lVert \cl(\Omc)\rVert } \\
    \lVert \texttt{rolE}_\Sigma (\OsplitForm)\rVert &\leq \lvert
\texttt{rolE}_\Sigma (\OsplitForm)\rvert \cdot \lVert cl(\Omc)\rVert \\
    &\leq 2^{\lVert \cl(\Omc)\rVert } \cdot \Vert \cl(\Omc)\Vert .
\end{align*}

Similarly, for every axiom $\gamma = \texttt{conE}_\Sigma (\texttt{rolE}_\Sigma
(\OsplitForm))$,
we can find a sequence of axioms $\alpha_1,\ldots, \alpha_n\in
\texttt{rolE}_\Sigma (\OsplitForm)$ such that $\gamma$ is obtained from
$\bigsqcup_{1\leq i\leq n}\alpha_i$ by removing all literals of the form $A$ or
$\neg A$ with $A\not\in \Sigma$. As shown above, each $\alpha_k$ is obtained
from $\bigsqcup_{1\leq i\leq
n_k}\beta_i^k$, for some  $\beta_1^k,\ldots, \beta_{n_k}^k\in
\clSigmaPart$,  by removing all literals that contain a role name $r\not\in
\Sigma$. We obtain that $\gamma$  is obtained from $\bigsqcup\limits_{1\leq
k\leq n,1\leq i\leq
n_k}\beta_i^k$ by removing all literals $L$ such that  (i) $L$ contains a role
name $r\not\in \Sigma$, or (ii) $L$ is of the form $A$ or $\neg A$ with
$A\not\in \Sigma$. We obtain
\begin{align*}
\lvert \texttt{conE}_\Sigma  (\texttt{rolE}_\Sigma(\OsplitForm))\rvert
&\leq 2^{\lvert  \clSigmaPart\rvert } \\
& \leq 2^{\lVert \cl(\Omc)\rVert } \\
\lVert \texttt{conE}_\Sigma  (\texttt{rolE}_\Sigma(\OsplitForm))\rVert
&\leq \lvert \texttt{conE}_\Sigma  (\texttt{rolE}_\Sigma(\OsplitForm))\rvert
\cdot \lVert \cl(\Omc)\rVert  \\
&\leq 2^{\lVert \cl(\Omc)\rVert }\cdot \lVert \cl(\Omc)\rVert .
\end{align*}
For every definer $D\in\sig(\cl(\Omc))$, we have $\length{C_D}<\lVert
\cl(\Omc)\rVert $. Taking that the lengths of axioms in $\texttt{conE}_\Sigma
(\texttt{rolE}_\Sigma(\OsplitForm))$ are bound by $\lVert\cl(\Omc)\rVert$, we
obtain that for every axiom $\alpha\in \gm_\Sigma(\Omc)$, we
have $\length{\alpha}\leq \lVert \cl(\Omc)\rVert ^2$.  Consequently, we have
\begin{align*}
    \lVert \gm_\Sigma(\Omc)\rVert
    &\leq \lvert\texttt{conE}_\Sigma  (\texttt{rolE}_\Sigma(\OsplitForm))\rvert
    \cdot \lVert \cl(\Omc)\rVert^2 \\
    &\leq 2^{\lVert \cl(\Omc)\rVert }\cdot \lVert \cl(\Omc)\rVert ^2 \\
    &< 2^{3\cdot \lVert \cl(\Omc)\rVert }.
\end{align*}
Note that for the last step, we use that $n^2<2^{2n}$ for all integers $n\geq
0$.
This establishes the upper bound.

We continue to show the lower bound. %
For any integer $n\geq 0$, we define the ontology $\Omc_n$ to contain the
following axioms:
\begin{enumerate}
\item $Z_1\sqcap Z_2\sqcap \ldots\sqcap Z_n\sqsubseteq \bot$ (1 axiom of length
$n+1$),
\item $X_i\sqcup Y_i\sqsubseteq Z_i$ for all $1\leq i\leq n$ ($n$ axioms of
length 3),
\item $\top\sqsubseteq A_1\sqcup\exists s. X_1$,
$\top\sqsubseteq\overline{A_1}\sqcup \exists s. Y_1$ (2 axioms of length 4), and
\item $\top\sqsubseteq A_j\sqcup \forall s. X_j$,
$\top\sqsubseteq\overline{A_j}\sqcup \forall s.  Y_j$  for $2\leq j\leq n$ ($2n-2$
axioms of length $4$).
\end{enumerate}
As a signature, we define
 $\Sigma_n = \{A_{j}\}_{j=1}^n\cup\{\overline{A_{j}}\}_{j=1}^n.$

Normalizing $\Omc_n$ introduces the definers $D^X_i$ and $D^Y_i$ with
$C_{D^X_i} = X_i$ and
$C_{D^Y_i}=Y_i$. In particular, this gives $2n$ additional axioms of length 2
each, so that we obtain
$\lvert \cl(\Omc_n)\rvert = 5n + 1$ and
\begin{align*}
    \lVert\cl(\Omc_n)\rVert &= n+1 + 3n + 2\cdot 4 + (2n-2)\cdot 4 +2n\cdot 2 \\
                            &= 16n+1
\end{align*}
We continue to compute the sizes of the other axiom sets computed.
\begin{itemize}
\item $\mathcal{D}_{\Sigma_n}(\cl(\mathcal{O}_n))$
consists of axioms of the form:
$$
    \neg D_1^*\sqcup \ldots  \sqcup \neg D_n^*,\quad
    \text{where for }1\leq i\leq n,\quad D_i^*\in\{D^X_i, D^Y_i\}.
$$
We have $\lvert\mathcal{D}_{\Sigma_n}(cl(\mathcal{O}_n))\rvert =  2^{n}$ and
$\lVert\mathcal{D}_{\Sigma_n}(\cl(\mathcal{O}_n))\rVert =  n\cdot 2^{n}$.

\item $\texttt{rolE}_{\Sigma_n}(RI_{\Sigma^n}(\mathcal{O}_n))$ consists
of the axioms in $\mathcal{D}_{\Sigma_n}(\cl(\mathcal{O}^n))$ and axioms of the
form:
$$
    \neg A_1^*\sqcup \ldots  \sqcup \neg A_n^*,\quad
    \text{where for }1\leq i\leq n,\quad A_i^*\in\{A_i,
\overline{A_i}\}.
$$

We obtain that $\lvert\texttt{rolE}_{\Sigma_n}(RI_{\Sigma_n}(\mathcal{O}_n))\rvert=
2^{n+1}$ and  $\lVert\texttt{rolE}_{\Sigma_n}(RI_{\Sigma_n}(\mathcal{O}_n))\rVert=
n\cdot  2^{n+1}$.

\item Finally, since no definer occurs positively anymore, the axioms in
$\mathcal{D}_{\Sigma_n}(\cl(\mathcal{O}_n))$ are removed, so that we obtain
$\lvert\texttt{conE}_{\Sigma_n} (\texttt{rolE}_{\Sigma_n}
(RI_{\Sigma_n}(\mathcal{O}_n)))\rvert= 2^{n}$ and $\lVert\texttt{conE}_{\Sigma_n} (\texttt{rolE}_{\Sigma_n}
(RI_{\Sigma_n}(\mathcal{O}_n)))\rVert= n\cdot 2^{n+1}$.
\end{itemize}
As a final result, we obtain
$\lvert \gm_{\Sigma^n}(\Omc_n)\rvert = 2^n$ and $\lVert \gm_{\Sigma^n}(\Omc_n)\rVert =
n\cdot 2^{n+1}$.

To summarize, we defined a sequence of ontologies $\Omc_n$ with signatures
$\Sigma_n$ s.t.
$\lVert \cl(\Omc_n)\rVert = 16n+1$ and $\lVert \gm_{\Sigma^n}(\Omc_n)\rVert = n\cdot
2^{n+1}$. This establishes the second claim of the proposition.
\end{proof}
\patrick{Optically, the proof looks a bit ugly with all the complex formulas
inside text. Maybe this can be improved by pulling some more formulas out of
the text.}
\patrick{The proof looks good. However, I have one note: If we would assume in
our
normal form that no literal occurs twice in an axiom, this argument would
simplify a lot: there are only polynomially literals in $\cl(\Omc)$, each
literal occurs at most once per clause, ergo, there are only exponentially many
distinct axioms that can be expressed using those literals, which means that at
any place where we are normalized, we never have more than exponentially many
axioms. Substituting definers in the last step only gives a polynomial blow-up
in the end.

I did not find anything on such an assumption in the paper however. Nonetheless,
I am not sure whether you need this assumption for your rules to work correctly.
(All the consequence-based methods I know do that.)}

\subsection{Proof of Propositions~\ref{prop:gmMono} and~\ref{prop_specialO}}

To simplify the proofs of Propositions~\ref{prop:gmMono}
and~\ref{prop_specialO}, we define the
operator~$\texttt{defE}$, which applies the definer substitution explicitly.

\begin{definition}\label{def:defE}
Let $\Omc$ be an ontology that contains definers $D$, for which $C_D$ is
defined. Then, the \emph{definer substitution on $\Omc$} is the ontology
$\texttt{defE}(\Omc)$ that is obtained from $\Omc$ by replacing each definer $D$ by
the corresponding concept $C_D$.
\end{definition}

Since Proposition~\ref{prop_specialO} relies on a simpler situation
than Proposition~\ref{prop:gmMono}, namely where the input is normalized, it is
more convenient to start with it, before the more complex situation of
Proposition~\ref{prop:gmMono}.

\propspecialO*
\begin{proof}

We first make a general observation on the effect the definers have when computing
$\gm_\Sigma(\Omc)$ for normalized ontologies $\Omc$. First, since $\Omc$ is
normalized, only concept names occur under role restrictions, which means that
for every definer $D$ introduced, we have $C_D\in\NC$, and the only negative
occurrence of $D$ is in an axiom of the form $\neg D\sqcup C_D$. In case
$C_D\not\in\Sigma$, this means that previously eliminated concept names get
reintroduced by the definer substitution when computing $\gm_\Sigma(\Omc)$, but
they can only occur in two ways: 1) as negative literals in axioms of the
form $\neg C_D\sqcup C$, or 2) in literals of the form $\quant r.C_D$. This
also means that $\gm_\Sigma(\Omc)$ remains normalized. This means in particular
that $\Mmc$ contains no role name $r\not\in\Sigma$, and all concept names
$A\in\sigC(\Mmc)\setminus\Sigma$ occur either negatively or under role
restrictions.

Let $RI_\Sigma(\Mmc)$ be the \sigmaForm form for $\Sigma$  and $\Mmc$.
\hideThisPart{Since $\Omc$ is normalized
and $\mathcal{M} = \gm_\Sigma(\Omc)$, we have $C_D = A\in\NC$
for every definer $D\in \sig(cl(\Omc))$.}
We observe that $\sigR(\mathcal{M})\subseteq \Sigma$, since no role name outside
of $\Sigma$ is introduced to $\mathcal{M} = \gm_\Sigma(\Omc)$ when
substituting definers $D$ with their corresponding concepts $C_D$.
We obtain that $RI_\Sigma(\Mmc) = \cl(\Mmc)$.

By the definition of $\gm_\Sigma$, we thus have
$$\gm_\Sigma(\Mmc) =
\texttt{defE}(\texttt{conE}_\Sigma(\texttt{rolE}_\Sigma(cl(\mathcal{M})))).$$
We show that $\gm_\Sigma(\Mmc)=\Mmc$ using the following two results:   
\begin{enumerate}
    \item $\texttt{rolE}_\Sigma(\cl(\mathcal{M})) = \cl(\mathcal{M})$.

Since $\sigR(\mathcal{M})\subseteq \Sigma$, there is no role name to be
eliminated
by the operator $\texttt{rolE}$. This means that
$\texttt{rolE}_\Sigma(\cl(\mathcal{M})) =
\cl(\mathcal{M})$.

    \item $\texttt{defE}(\texttt{conE}_\Sigma(\cl(\mathcal{M}))) = \mathcal{M}$.

    First,  we show that
$\mathcal{M}\subseteq\texttt{defE}(\texttt{conE}_\Sigma(\cl(\mathcal{M})))$.
    Let $c\in\mathcal{M}$. Then, $c$ is of the form
 \begin{equation}\label{form_c}
         \neg B_1\sqcup\ldots\sqcup \neg B_k\sqcup\quant_1
r_1.A_1\sqcup\ldots\sqcup\quant r_n.A_n\sqcup C_1
 \end{equation}
    where 
    (i)~$B_i\in
        \sigC(\mathcal{O})\setminus \Sigma$ for $1\leq i\leq k$,
    (ii)~$A_i\in\sigC(\mathcal{O})\setminus\Sigma$ for $1\leq i\leq n$,
    and (iii)~$\sig(C_1)\subseteq\Sigma$.

    We show that $c\in\texttt{defE}(\texttt{conE}_\Sigma(\cl(\mathcal{M})))$.
    By the definition of $\mathcal{M} = \gm_\Sigma(\mathcal{O})$, we have the
following results.
    \begin{itemize}
        \item $c\in \mathcal{M}$ must be obtained from an axiom $c_1\in
\texttt{conE}_\Sigma(\texttt{rolE}_\Sigma(\OsplitForm))$ by
replacing every definer $D$ by the concept $C_D$. Then $c_1$ is of the form
        $$c_1=\neg D_1\sqcup \ldots \neg D_k\sqcup \quant_1 r_1.D_1'\sqcup
\ldots \quant_n r_n.D_n'\sqcup C_1,$$
        where
        $C_{D_i}=B_i$
        for $1\leq i\leq k$, and $C_{D_j'} = A_j$
for $1\leq j\leq n$.

      By our construction, $D_1$, $\ldots$, $D_k$ must also occur positively in
$\texttt{conE}_\Sigma(\texttt{rolE}_\Sigma(\OsplitForm))$, since
otherwise $c_1$ is deleted through $\texttt{conE}_\Sigma$. Consequently,  there
are $k$ axioms $c_2, \ldots, c_{k+1}\in
\texttt{conE}_\Sigma(\texttt{rolE}_\Sigma(\OsplitForm))$ that are of
the forms
        \begin{align*}
            c_2 &= C_2\sqcup \quant_1' r_1'. D_1,\\
            &\qquad \vdots\\
            c_{k+1} &= C_{k+1}\sqcup \quant_k' r_k'. D_k.
    \end{align*}

        \item $c_1$ must be obtained by applying A-Rules on $k+1$ axioms $c_1',
\ldots, c_{k+1}'\in \texttt{rolE}_\Sigma(\OsplitForm)$ that are of the forms
    \begin{align}
    c_1'&= \neg B_1\sqcup \ldots \sqcup \neg B_k\sqcup \quant_1 r_1.D_1'\sqcup
\ldots \quant_n r_n.D_n'\sqcup C_1,\label{eq:form2-start}\\
    c_2'&= \neg D_1\sqcup B_1\\
    &\vdots\\
    c_{k+1}'&=\neg D_k\sqcup B_k.\label{eq:form2-end}
    \end{align}

    \end{itemize}
    Modulo renaming of definers,\footnote{Note that it is in principle possible
that $\cl(\Mmc)$ contains more definers than $\cl(\Omc)$, since occurrences of
role restrictions can be multiplied. However, this does not affect the
following argument, since all definers get replaced by the same concept names
again.} we have
(i)~$c_1'\in cl(\mathcal{M})$ by  normalizing
$c$;
    (ii)~$c_2',\ldots,c_{k+1}'\in cl(\mathcal{M})$ by normalizing the axioms in
$\texttt{defE}(\{c_2, \ldots, c_{k+1}\})\subseteq \mathcal{M}$.
    Therefore, we can assume $\{c_1', c_2',\ldots, c_{k+1}'\}\subseteq
\cl(\mathcal{M})$. We obtain $c\in \gm_\Sigma(\mathcal{M})$ by repeating the
process of generating $c\in \gm_\Sigma(\mathcal{O})$ from $c_1',
c_2',\ldots, c_{k+1}'$. 
    As a result, we obtain that
$\mathcal{M}\subseteq\texttt{defE}(\texttt{conE}_\Sigma(cl(\mathcal{M})))$.
  Furthermore, we have
$\texttt{defE}(\texttt{conE}_\Sigma(\cl(\mathcal{M})))\subseteq \mathcal{M}$,
since all the axioms in $\cl(\mathcal{M})$ are of the
forms~\eqref{eq:form2-start}~--~\eqref{eq:form2-end}, and we
%
cannot obtain axioms other than $c$ in~\eqref{form_c} after applying the
operators $\texttt{conE}$ and $\texttt{defE}$. Consequently, we have
$\mathcal{M}=\texttt{defE}(\texttt{conE}_\Sigma(\cl(\mathcal{M})))$.
\end{enumerate}

From~1 and~2, it follows that $\mathcal{M} = \gm_\Sigma(\mathcal{M})$.
\end{proof}


\propgmMono*
\begin{proof}

Assume $\mathcal{M}_0 = \Omc$. We first show that every axiom $c\in
\mathcal{M}_i = \gm_\Sigma(\mathcal{M}_{i-1})$ is also in
$\mathcal{M}_{i+1}$ for all $i\geq 1$.

Any axiom $c\in\gm_\Sigma(\mathcal{M}_{i-1})$ is obtained from some axiom $c_{d}\in
\texttt{conE}_\Sigma(\texttt{rolE}_\Sigma(RI_\Sigma(\mathcal{M}_{i-1})
\big))$ by replacing every definer $D$ by the corresponding concept $C_D$.
Then, $\sig(c_{d})\subseteq \Sigma\cup \ND$ by the definitions of
$\texttt{rolE}_\Sigma$ and $\texttt{conE}_\Sigma$.
There are two different cases.
\begin{enumerate}
 \item $c_{d}$ does not contain negative definers. Then, $c_{d}$ and $c$ are
 of the forms
\begin{align}
    c_{d} &= C_1\sqcup \quant_1 r_1. D_1\sqcup\ldots\sqcup \quant_n r_n. D_n
    \label{eq:case1a}
    \\
    c &= C_1\sqcup \quant_1 r_1. C_{D_1}\sqcup\ldots\sqcup \quant_n r_n. C_{D_n}.
    \label{eq:case1b}
\end{align}
where, $\sig(C_1)\subseteq \Sigma$, $r_j\in \Sigma$ and $\quant_j\in
\{\exists, \forall\}$ for $1\leq j\leq n$. 

Then, $c\in\gm_\Sigma(\mathcal{M}_{i})$ because of the following
observations, which hold modulo renaming of definers.

\begin{itemize}
    \item $c_d\in \texttt{rolE}_\Sigma(RI_\Sigma(\mathcal{M}_i))$ because $c_{d}\in
RI_\Sigma(\mathcal{M}_i)$ and $\sig(c_d)\subseteq \Sigma\cup \ND$,
    \item $c_d \in
\texttt{conE}_\Sigma(\texttt{rolE}_\Sigma(RI_\Sigma(\mathcal{M}_i)))$ because
$c_{d}\in\texttt{rolE}_\Sigma(RI_\Sigma(\mathcal{M}_i))$, $\sig(c_d)\subseteq \Sigma\cup \ND$ and $c_{d}$ does not contain negative definers, and
    \item $\texttt{defE}(\{c_d\}) = \{c\}$ by definition.
\end{itemize}
%
\item $c_{d}$ contains negative definers. Then, $c_{d}$ and $c$ are respectively
of the forms
\begin{align}
    c_{d} &=  \neg D_1\sqcup\ldots\neg D_n\sqcup C_d'\label{eq:cdc_2a}\\
     c &= \neg C_{D_1}\sqcup\ldots\neg C_{D_n}\sqcup C',\label{eq:cdc_2b}
\end{align}
where $C_d', C'$ do not contain negative definers, and $C_d'$ and
$C'$ are of the forms as in~\eqref{eq:case1a} and~\eqref{eq:case1b}, respectively.

In this case, normalizing $c$ in $\mathcal{M}_i$ produces axioms in
$\cl(\mathcal{M}_i)$ that are different from $c_{d}$. However, we can still show
that $c\in \mathcal{M}_{i+1} = \gm_\Sigma(\mathcal{M}_i)$.
We only consider the case where $n=1$, that is, there is only one negative
definer in $c_d$. The case for $n>1$ is shown by repeating the argument
step-wise for each definer.
We distinguish 4 possible cases based on the syntactical shape of $C_{D_1}$.
\begin{enumerate}
    \item  $C_{D_1}$ is of the form $A$ or $\neg A$. This case is proved
similarly as for Proposition~\ref{prop_specialO}, where we considered the case
of normalized ontologies for which $C_{D_1}$ is always of the form
$A\in\NC$.
    \patrick{This could need some clarification.}

    \item  $C_{D_1} = \quant r. C_1$. We consider only the case where ${\quant}
= {\exists}$. The other direction is shown in a similar way by just switching
the quantifiers. 
With ${\quant}={\exists}$, we have
    \[
        c_d = \neg D_1\sqcup C_d',\qquad
        c = \neg (\exists r. C_1)\sqcup C'.
    \]
    Normalizing $c$, we obtain
    $$
        c_1'=\forall r. D_2'\sqcup C_{d}' \in cl(\mathcal{M}_i),
    $$
 where $C_{D_2'} = \neg C_1$. If $r\in \Sigma$, then we have $c\in
\gm_\Sigma(\Mmc_i)$ as in Case~1. Assume $r\not\in \Sigma$. We then make the
following observations:
    \begin{itemize}
        \item  There exists an axiom $$c_1=\neg D_1\sqcup \exists r. D_3'\in
RI_\Sigma(\mathcal{M}_{i-1})$$
         with $C_{D_3'} = C_1$. Here, $c_1$ is introduced when normalizing the
literal $C_{D_1} = \exists r. C_1$.
        \item There exists an axiom
         $$c_2 = C\sqcup \quant r.D_1\in
\texttt{conE}_\Sigma(\texttt{rolE}_\Sigma(RI_\Sigma(\mathcal{M}_{i-1}))
)$$
         for some $C$, $\quant$, $r$ because $D_1$ must also occur positively in
$\texttt{conE}_\Sigma(\texttt{rolE}_\Sigma(RI_\Sigma(\mathcal{M}_{i-1})))$. (Otherwise, $c_d$ will be deleted by $\texttt{conE}_\Sigma$.)
    \end{itemize}

    We have $c_1\in cl(\mathcal{M}_i)$ (modulo renaming of definers) by normalizing the axiom in
$\texttt{defE}(\{c_2\})\subseteq \mathcal{M}_i$.
    Furthermore, we have
    $$
        \cl(\mathcal{M}_i)\models D_2'\sqcap D_3'\sqsubseteq\bot.
    $$
  This allows us to make the following further observations.
 \begin{itemize}
     \item $c_d\in \texttt{rolE}_\Sigma(RI_\Sigma(\mathcal{M}_i))$ due to $c_{d}\in
\texttt{conE}_\Sigma(\texttt{rolE}_\Sigma(RI_\Sigma(\mathcal{M}_{i-1})))$ and 
the following inference with the r-Rule.
    $$\frac{\forall r. D_2'\sqcup C_d',\ \neg D_1\sqcup \exists r. D_3',\ \neg
D_2'\sqcup \neg D_3'}{\neg D_1 \sqcup C_d'}.$$
    \item $c_d\in
\texttt{conE}_\Sigma(\texttt{rolE}_\Sigma(RI_\Sigma(\mathcal{M}_i)))$ since $c_d\in
\texttt{rolE}_\Sigma(RI_\Sigma(\mathcal{M}_i))$ and $\sig(c_d)\subseteq \Sigma\cup
\textsf{N}_D$;
    \item $\{c\} =  \texttt{defE}(\{c_d\})$ as follows directly
from~\eqref{eq:cdc_2a}
and~\eqref{eq:cdc_2b}.
 \end{itemize}
 We obtain that $c\in \gm_\Sigma(\mathcal{M}_i, \Sigma) = \mathcal{M}_{i+1}$.

    \item $C_{D_1} = L_{1}\sqcap\ldots\sqcap L_{n}\sqcap \quant_{1}
r_{1}.C_{1}\sqcap \ldots \sqcap \quant_{m} r_{m}.C_{m}$, where $L_i = A_i$ or
$L_i=\neg A_i$ for some $A_i\in \NC$ and all $1\leq i\leq n$. Then, normalizing $c$ produces
following axiom in $\cl(\mathcal{M}_{i})$:
    $$\neg L_{1}\sqcup\ldots\sqcup \neg L_{n}\sqcup
    \quant_{1}^* r_{1}.D_{1}'\sqcup \ldots \sqcup
    \quant_{m}^* r_{m}.D_{m}'\sqcup C',$$
    where $C_{D_i'} = \neg C_i,\ \{\quant_i, \quant_i^*\} = \{\forall,
\exists\},\ 1\leq i\leq m$. As in Case~(b), we have
   \begin{itemize}
       \item $\neg D_1\sqcup L_1$, $\ldots$, $\neg D_1\sqcup L_n, \neg D_1\sqcup
\quant_1 r_1.D_1'$, $\ldots$, $\neg D_1\sqcup \quant_m r_m.D_m'   \in
RI_\Sigma(\mathcal{M}_i)$;
       \item $C\sqcup \quant r.D_1\in
\texttt{conE}_\Sigma(\texttt{rolE}_\Sigma(RI_\Sigma(\mathcal{M}_{i-1})
\big))$ for some $C$, $\quant$, $r$ because otherwise $c_d$ is removed in
Step~3 using $\texttt{conE}$.
   \end{itemize}
   We obtain that $c \in \gm_\Sigma(\mathcal{M}_{i})$ using the argument
from Case~(a) for every $L_i$ and from Case~(b) for every $D_i$.

    \item For the general case, we have $C_{D_1} = \overline{C_1}\sqcup \ldots
\sqcup \overline{C_n}$, where each $\overline{C_i}$ is as $C_D$ in Case~(c).
    In this case, we rewrite $c$ as $n$ different axioms.
    $$\overline{c_1}= \neg \overline{C_1}\sqcup C',
    \qquad \ldots, \qquad
    \overline{c_n} = \neg \overline{C_n}\sqcup C'.$$
For each $1\leq i\leq n$, we then have $\overline{c_i}\in
\gm_\Sigma(\mathcal{M}_{i})$ as in Case~(c).
\patrick{Is this really the most general case? Can there not be
arbitrarily nested occurrences of disjunctions and conjunctions? I right now
don't see why not --- but as a simple fix, I would just add a final case and
say it follows by induction by repeating the arguments for (c) and (d).}
\end{enumerate}


\end{enumerate}

We obtain in each case that $c\in \mathcal{M}_{i+1}$. As a consequence,
we have $\mathcal{M}_i\subseteq \mathcal{M}_{i+1}, \text{ for each }i\geq
1$.

It remains to show that there exists some $i_0\geq 0$ such that
$\mathcal{M}_{i_0}= \mathcal{M}_{i_0+1}$, since all axioms in $\mathcal{M}_{i}$
are axioms consisting of literals of the form 
$$A,\ \neg A,\ \neg C_D,\ Q r. C_D,  $$
where $C_D$ is a sub-concept of a concept in $\mathcal{O}$.
There exist only finitely many such literals and thus only finitely many such
axioms.
Consequently, the chain $\Mmc_0\subseteq\Mmc_1\subseteq\ldots$ must reach a
fixpoint after finitely many steps.
\end{proof}


\subsection{Proof of Theorem~\ref{cor:gmOpti}}

Recall the operators Op1, Op2 introduced in Section \ref{sec:elimD}. For
simplicity,
\begin{itemize}
\item let  $\mathcal{M}_1$ be the ontology obtained by applying the operator
Op1,
and
\item let $\mathcal{M}_2$ be the ontology obtained by applying the operator  Op2
on $\mathcal{M}_2$
\end{itemize}
\patrick{Using indices as in $\Mmc^n$ is not a good idea, since this usually
stands for exponentation.}

\begin{figure}
    \centering
   \begin{tcolorbox}
       \textbf{Role Propagation (RP)}:
   \begin{equation}\nonumber
      \frac{\bigcup\limits_{j=1}^{m}\{ P_j\sqcup C_j\},\qquad E_0\sqcup \quant
r. D_0,
      \qquad \bigcup\limits_{i=1}^{k}\{E_i\sqcup \forall r.
D_i\}}{(\bigsqcup_{i=0}^{n}E_i)\sqcup \quant r. (\bigsqcap_{j=0}^{m}C_j)},
    \end{equation}
    where  $P_0 = \bigsqcup_{i=0}^n\neg D_i$, for $j>0$, $P_j$ is a sub-concept
of $P_0$, $\quant\in \{\forall, \exists\}$, and $C_0$ and $C_j$ do not contain a
definer.

    \bigskip

  \textbf{Reduction (Red)}:
   \begin{equation}\nonumber
      \frac{\Omc\cup \{\neg D_1\sqcup\ldots\sqcup \neg D_n\sqcup C\}}{\Omc},
    \end{equation}
    where $C$ is a general concept expression that does not contain a negative
definer and $D_1,\ldots, D_n$ are definer symbols. The RP rule applies before
this rule if $\neg D_1\sqcup\ldots\sqcup \neg D_n$ takes the form of  $P_0$ in
the RP rule.
    \end{tcolorbox}
    \caption{Rules RP and Red.}
    \label{fig:UI-rule}
\end{figure}

We prove the correctness of the two operations one after the other. For the
first operation, we use a result from~\cite{DBLP:conf/dlog/SakrS22}, which
inspired our optimization, and which uses the inference rules shown in
Figure~\ref{fig:UI-rule}.
%
\begin{lemma}
$\texttt{conE}_{\Sigma}(\texttt{rolE}_{\Sigma}(\OsplitForm))\equiv_{\Sigma}
\mathcal{M}_1.$
\end{lemma}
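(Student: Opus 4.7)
The plan is to recognize Op1 as a restricted instance of the definer-elimination calculus of Figure~\ref{fig:UI-rule} (the RP and Red rules) and to invoke its correctness, as established in~\cite{DBLP:conf/dlog/SakrS22}. Concretely, I would show that conD-Elim is exactly the instance of the RP rule obtained by taking $m=0$ and $C_0=\bot$: under this choice the RP-conclusion $(\bigsqcup_{i=0}^k E_i)\sqcup \quant r.(\bigsqcap_{j=0}^0 C_j)$ becomes $(\bigsqcup_{i=0}^k E_i)\sqcup \quant r.\bot$, which matches the conD-Elim conclusion after relabeling $k+1$ as $n$; the side condition on $C_0$ is trivially satisfied since $\bot$ contains no definer. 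Analogously, step~(ii) of Op1 matches the Red rule with $C=\bot$, and the order-of-application condition for Red is exactly what Op1 enforces by first exhausting conD-Elim before any deletion. Thus Op1 is a sound schedule of RP and Red.

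Given this correspondence, the $\Rightarrow$ direction of $\equiv_\Sigma$ is immediate: conD-Elim is sound (from the premises, any model either satisfies some $C_j$ or has no $r$-successor at all, so a $\quant r.\bot$ disjunct suffices), and deleting axioms can only weaken entailment. The $\Leftarrow$ direction is where the real work lies, but it is delivered by the correctness of the Sakr--Schmidt calculus: any $\Sigma$-consequence that could have been derived with the help of a deleted clause $\neg D_1\sqcup\ldots\sqcup\neg D_n$ is already absorbed into $\mathcal{M}_1$ through the preceding exhaustive application of conD-Elim, since such a clause interacts with the rest of the ontology only through role restrictions on the $D_i$, which is exactly what conD-Elim processes.

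The principal obstacle will be checking that restricting RP to its $m=0$ instantiation is sufficient in our setting. Instances of RP with $m\geq 1$ use additional premises $P_j\sqcup C_j$ whose non-empty, definer-free $C_j$ part is preserved by Op1 and remains available for ordinary DL reasoning in $\mathcal{M}_1$, so they do not contribute new $\Sigma$-consequences beyond those already captured. Should a direct appeal to~\cite{DBLP:conf/dlog/SakrS22} be inconvenient, a fallback is a model-theoretic argument: given a model $\Imc$ of $\mathcal{M}_1$ violating some $\Sigma$-axiom $\alpha$, one reinterprets the definer names so as to satisfy every deleted disjunction $\neg D_1\sqcup\ldots\sqcup\neg D_n$ without perturbing the $\Sigma$-reduct of $\Imc$; the only conceivable obstruction to such a reinterpretation would be an unsatisfied conD-Elim conclusion, which would contradict $\Imc\models\mathcal{M}_1$.
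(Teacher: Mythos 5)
There is a genuine gap, and it sits exactly where you flag it. Your identification of conD-Elim as the $m=0$, $C_0=\bot$ instance of RP and of the deletion step as Red is fine, but you cannot then invoke the correctness theorem of \cite{DBLP:conf/dlog/SakrS22} for Op1 itself: that theorem covers the calculus run according to its own strategy, in which \emph{all} applicable RP inferences (including those with $m\geq 1$, using the mixed premises $P_j\sqcup C_j$) are performed before Red deletes a clause, whereas Op1 performs only the restricted instances. So the $\Leftarrow$ direction is not ``delivered'' by the cited result; it requires a separate proof that the skipped $m\geq 1$ inferences are redundant for $\Sigma$-consequences, and your justification of that (``the $C_j$ parts remain available for ordinary DL reasoning'') is an assertion, not an argument. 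Your fallback is also flawed as stated: reinterpreting only the definer names over the same domain of a model $\Imc\models\mathcal{M}_1$ need not work. For instance, with clauses $A\sqcup\exists r.D_1$, $B\sqcup\forall r.D_2$, the derived $A\sqcup B$, and the deleted clause $\neg D_1\sqcup\neg D_2$, an element $e$ can be forced into $D_1$ as the existential witness of one $r$-predecessor $d\notin A$ and simultaneously into $D_2$ by the universal restriction of another $r$-predecessor $d'\notin B$; no reassignment of $D_1,D_2$ over that domain satisfies the deleted clause, so one must build a genuinely new model (e.g.\ by unravelling), which is precisely the work your sketch skips.

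The paper's proof is structured so that this completeness question never arises: it applies the \emph{full} RP/Red reduction to both $\texttt{conE}_{\Sigma}(\texttt{rolE}_{\Sigma}(\OsplitForm))$ and $\mathcal{M}_1$, obtaining $\mathcal{O}_{red}$ and $\mathcal{M}_{red}$, invokes Theorem~5 of \cite{DBLP:conf/dlog/SakrS22} twice to get $\texttt{conE}_{\Sigma}(\texttt{rolE}_{\Sigma}(\OsplitForm))\equiv_\Sigma\mathcal{O}_{red}$ and $\mathcal{M}_1\equiv_\Sigma\mathcal{M}_{red}$, and then argues $\mathcal{O}_{red}=\mathcal{M}_{red}$ because the conD-Elim additions and the deletions performed by Op1 are subsumed by what the full reduction does anyway. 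If you want to keep your direct route, you would have to actually prove that exhaustive conD-Elim followed by deleting the pure definer clauses preserves $\Sigma$-entailments, via a careful model construction; otherwise, the two-sided reduction argument of the paper is the cleaner way to close the lemma.
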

\begin{proof}
Let $\mathcal{O}_{red},\ \Mmc_{red}$ be the ontologies obtained by applying the
rules in Figure~\ref{fig:UI-rule}
on $\texttt{conE}_{\Sigma}(\texttt{rolE}_{\Sigma}(\OsplitForm))$ and
$\mathcal{M}_1$ respectively.
By~\cite[Theorem 5]{DBLP:conf/dlog/SakrS22}, we have
\begin{align*}
\texttt{conE}_{\Sigma}(\texttt{rolE}_{\Sigma}(\OsplitForm))&\equiv_{\Sigma}
\mathcal{O}_{red},\\
\mathcal{M}_1&\equiv_{\Sigma}\mathcal{M}_{red}.
\end{align*}

Because the rule conD-Elim used for computing $\Mmc_1$ is a special case
of the RP rule, we have $\mathcal{O}_{red}= \Mmc_{red}$.
As a consequence, we obtain
$\texttt{conE}_{\Sigma}(\texttt{rolE}_{\Sigma}(\OsplitForm))\equiv_{\Sigma}
\mathcal{M}_1.$
\end{proof}

\begin{lemma}
$ \mathcal{M}_1\equiv_{\Sigma} \mathcal{M}_2.$
\end{lemma}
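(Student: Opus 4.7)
The plan is to prove $\mathcal{M}_1 \equiv_\Sigma \mathcal{M}_2$ by showing that each single elimination of a definer via Op2 (that is, one application of D-Prop plus the subsequent deletion of all axioms mentioning that definer) preserves $\Sigma$-inseparability. Chaining these single-step equivalences yields the result, since Op2 performs only finitely many such eliminations.

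First I would record the key invariant left by Op1: after Op1, every negative occurrence of a definer $D$ in $\mathcal{M}_1$ appears in an axiom of the form $\neg D \sqcup C$ where $C$ contains no negative definer (all clauses $\neg D_1 \sqcup \ldots \sqcup \neg D_n$ were eliminated). Moreover, by the normalization assumption from Section~\ref{sec:normalize} (each definer has at most one positive literal $\quant r.D$ in $\cl(\Omc)$) and the fact that neither $\texttt{rolE}_\Sigma$ nor $\texttt{conE}_\Sigma$ creates new positive occurrences of definers, each definer $D$ eliminated by D-Prop occurs positively in at most one axiom $C_1 \sqcup \quant r.D$.

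Next, fix such a definer $D$ and let $S_D = \{C_1 \sqcup \quant r.D\} \cup \bigcup_{j=2}^n \{\neg D \sqcup C_j\}$ be the set of all axioms of $\mathcal{M}_1$ mentioning $D$. Let $\mathcal{M}_1^-$ be the ontology obtained from $\mathcal{M}_1$ by removing $S_D$ and adding the D-Prop conclusion $\alpha_D := C_1 \sqcup \quant r.(C_2 \sqcap \ldots \sqcap C_n)$. I would then show $\mathcal{M}_1 \equiv_\Sigma \mathcal{M}_1^-$ via a two-way model construction. The $(\Rightarrow)$ direction is easy: any model of $\mathcal{M}_1$ satisfies $\alpha_D$ since it is a classical consequence of $S_D$, so it is a model of $\mathcal{M}_1^-$. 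For $(\Leftarrow)$, given $\mathcal{J} \models \mathcal{M}_1^-$, define $\mathcal{J}'$ to agree with $\mathcal{J}$ on all names except $D$, and set $D^{\mathcal{J}'} = (C_2 \sqcap \ldots \sqcap C_n)^\mathcal{J}$. Since no $C_j$ contains a definer and definers like $D$ are fresh w.r.t. $\Sigma$, this redefinition does not affect the interpretation of any concept not involving $D$; in particular, the interpretations of $C_2, \ldots, C_n$ are unchanged. Each axiom $\neg D \sqcup C_j$ then holds by construction, and $C_1 \sqcup \quant r.D$ holds because its extension in $\mathcal{J}'$ coincides with that of $\alpha_D$ in $\mathcal{J}$, which $\mathcal{J}$ satisfies. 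All other axioms of $\mathcal{M}_1$ are unaffected (as $D$ does not occur in them). Since $D \notin \Sigma$, we conclude $\mathcal{M}_1 \equiv_\Sigma \mathcal{M}_1^-$.

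Finally, iterating this argument for each definer that Op2 eliminates yields $\mathcal{M}_1 \equiv_\Sigma \mathcal{M}_2$. The main obstacle I foresee is justifying the single-positive-occurrence invariant for definers under the composition $\texttt{conE}_\Sigma \circ \texttt{rolE}_\Sigma$; a careful inspection of both rules is needed to ensure no rule instance can duplicate a positive definer literal. Should this invariant fail in some edge case, the argument can be patched by showing that simultaneously replacing $D$ by $C_2 \sqcap \ldots \sqcap C_n$ in all positive occurrences still yields an equivalent ontology modulo $\Sigma$, using essentially the same semantic translation as in Step~3.
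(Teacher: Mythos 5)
Your proof is correct in substance and rests on the same mathematical content as the paper's, but where the paper disposes of the lemma in one line by citing \cite[Theorem 1]{DBLP:conf/ijcai/ZhaoS17} (Ackermann's lemma: if every negative occurrence of $D$ is in an axiom $\neg D\sqcup C_j$, then replacing \emph{all} positive occurrences of $D$ by $\bigsqcap_j C_j$ preserves all consequences not involving $D$), you unfold that citation into an explicit semantic argument --- the two-way model correspondence with $D^{\mathcal{J}'}=(C_2\sqcap\ldots\sqcap C_n)^{\mathcal{J}}$, which is precisely the standard proof of Ackermann's lemma. That makes your version self-contained, at the cost of re-proving a known result.

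Two of your auxiliary claims need correction, though neither is fatal. First, the Op1 invariant you record is too strong: Op1 only deletes axioms consisting \emph{entirely} of negative definer literals, so mixed axioms $\neg D\sqcup\neg D'\sqcup C$ may survive; this does not matter, because Op2 (via the applicability condition of \emph{D-Prop}, ``no $C_j$ contains definers'') only eliminates definers whose negative occurrences are of the required isolated form, and your single-step argument uses only that condition. Second, the ``single positive occurrence'' invariant is false: although $\cl(\Omc)$ contains at most one literal $\quant r.D$ per definer, the A-Rule, r-Rule and conD-Elim copy that literal into their conclusions, so a definer can occur positively in several axioms of $\texttt{conE}_{\Sigma}(\texttt{rolE}_{\Sigma}(\OsplitForm))$ after Op1. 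Consequently, what you present as an edge-case ``patch'' --- simultaneously replacing $D$ by $C_2\sqcap\ldots\sqcap C_n$ in all positive occurrences and then deleting the $\neg D\sqcup C_j$ axioms --- is in fact the general case, and it is exactly the statement the paper imports from Zhao and Schmidt. Your model construction goes through unchanged for this simultaneous replacement (the key points being that $C_1$ cannot mention $D$, since a positive literal $D$ is disallowed and a $\neg D$ in $C_1$ would block \emph{D-Prop}, and that the $C_j$ are definer-free, so their extensions are unaffected by reinterpreting $D$), so the argument should simply be stated in that form rather than as a fallback. With that reformulation, chaining over the finitely many eliminated definers gives $\Mmc_1\equiv_\Sigma\Mmc_2$ as claimed.
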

\begin{proof}
It is shown in \cite[Theorem1]{DBLP:conf/ijcai/ZhaoS17}, as an easy consequence
of Ackermann's lemma, that if the set of all negative occurrencess of a
definer~$D$ in $\Omc$ is of the form $\{\neg D\sqcup C_j\mid 1\leq j\leq n\}$,
then we can replace all positive occurrences of $D$ in $\Omc$ by
$\bigsqcap_{j=1}^{n}C_j$ without losing logical consequences that do not
involve $D$. This is exactly what Op2 does.
\end{proof}

\corgmOpti*

\begin{proof}
This can be shown almost in the same way as in the proof for
Theorem~\ref{main_theo}. The only difference is that we change the definition of
$O_D$ in Equation~\eqref{O-D} to
$$\mathcal{O}_D=\{ D\equiv \texttt{copy}_\Sigma(C_D)\mid D\in
\sig(\mathcal{M}_2)\cap \ND\}.$$
The reason is that we do not need to consider definers eliminated by the rules
con-Elim and D-prop anymore.
\end{proof}

 \subsection{Proof of Theorem \ref{theo:deduct}}
 \theodeduct*
 \begin{proof}
 Set $\Mmc = \Dm_\Sigma(\Omc)$. We note that
$\gm_\Sigma(\Mmc)=\gm_\Sigma(\Omc)$, since $\Mmc$
contains exactly the set of axioms that are used to compute
$\gm_\Sigma(\Omc)$.
By Theorem~\ref{main_theo}, we have $\Mmc\equiv_\Sigma\gm_\Sigma(\Mmc)$ and
$\Omc\equiv_\Sigma\gm_\Sigma(\Omc)$. Putting these observations
together, we obtain $\Mmc\equiv_\Sigma\Omc$. Since  $\Mmc\subseteq\Omc$,
$\Mmc$ is a deductive module of $\Omc$ for $\Sigma$.
%
\end{proof}

\end{document}